%% file: Manuscript.tex
\documentclass[lettersize,journal]{IEEEtran}

\usepackage{amsmath,amsfonts}
\usepackage{hyperref}
\usepackage{tikz}
\usetikzlibrary{patterns}
\usepackage{array}
\usepackage{textcomp}
\usepackage{lscape}
\usepackage{stfloats}
\usepackage{url} 
\usepackage{pgfplots}
\usepackage{verbatim}
\usepackage{tikz}
\usepackage{subcaption}
\usepackage{graphicx}
\usepackage{amssymb}
\usepackage{multirow}
\usepackage{cuted}
\usepackage{caption}
\pgfplotsset{compat=1.18} 
\usepackage[linesnumbered,ruled,vlined]{algorithm2e}
\usepackage{rotating}
\usepackage{bbding}

\usepackage{placeins}
\usepackage{amsthm}
\newtheorem{theorem}{Theorem}    

\newtheorem{proposition}[theorem]{Proposition}

 \usepackage[dvipsnames]{xcolor}

\begin{document}
\let\WriteBookmarks\relax
\def\floatpagepagefraction{1}
\def\textpagefraction{.001}

\title{Indian Wedding System Optimization (IWSO): A Novel Socially Inspired Metaheuristic  with Operational Design and Analysis} 

\author{Deepika Saxena, Kishu Gupta, Jitendra Kumar, Jatinder Kumar, Sakshi Patni, Vinaytosh~Mishra,\\ Niharika Singh, and Ashutosh Kumar Singh
\thanks{Deepika Saxena is with Depatment of Computer Science and Engineering, University of Aizu, Aizuwakamatsu, Fukushima, Japan and Department of Computer Science, the VIZJA University, Warsaw, 01-043, Poland (email: 13deepikasaxena@gmail.com)}
\thanks{Kishu Gupta is with the Department of Computer Science and Engineering, National Sun Yat-sen
University, Kaohsiung, Taiwan and Department of Computer Science, the VIZJA University, Warsaw, 01-043, Poland
(email: kishuguptares@gmail.com)}
\thanks{Jitendra Kumar is with Department of Mathematics, Bioinformatics and Computer Applications, Maulana Azad National Institute of Technology Bhopal, India (email: jitendrakumar@ieee.org)}
\thanks{Jatinder Kumar is with Department of Computer Applications,
	National Institute of Technology, Kurukshetra, India, and Department of Electrical, Electronic and Computer Engineering, University of Pretoria, Hatfield 0028, Pretoria, South Africa (email: jatinder\_61900097@nitkkr.ac.in, jatinderkumar2851@gmail.com)}
\thanks{Sakshi Patni is with Department of Computer Applications, Panipat Institute of Engineering and Technology, Haryana, India (email: sakshichhabra555@gmail.com)}
\thanks{Vinaytosh Mishra is with Thumbay College of  Management and AI in Healthcare, Gulf Medical University, Ajman, United Arab Emirates (email: vinaytosh@gmail.com)}
\thanks{Niharika Singh is with Department of Computer Science, University of Helsinki, Finland (email: niharika.singh@helsinki.fi)}
\thanks{Ashutosh Kumar Singh is with Department of Computer Science and Engineering, Indian Institute of Information Technology Bhopal, India and Department of Computer Science, the VIZJA University, Warsaw, 01-043, Poland (email: ashutosh@iiitbhopal.ac.in)}

}

\maketitle
\begin{abstract}
This paper presents a novel population-based metaheuristic, \textit{Indian Wedding System Optimization} (IWSO), inspired by the socio-cultural dynamics of traditional Indian weddings. IWSO models the matchmaking process driven by collaboration among families, candidates, and matchmakers as a guided, selective search framework for solving complex optimization problems. The algorithm introduces two key innovations: (i) a \textit{matchmaker-guided influence strategy}, where elite solutions direct the evolution of weaker candidates, enhancing convergence without external parameters; and (ii) an \textit{adaptive elimination and reinitialization} mechanism that maintains diversity and prevents premature convergence by replacing underperforming individuals. IWSO employs a weighted multi-objective fitness function and analytically derived time and space complexity, benchmarked against existing optimization approaches such as Genetic Algorithm (GA), Partical Swarm Optimization (PSO), Differential Evolution (DE), Cuckoo Search (CS), etc. Extensive experiments on benchmark high-dimensional and multimodal test functions demonstrate  superior performance of IWSO in terms of convergence speed, solution quality, and robustness.
\end{abstract}

\begin{IEEEkeywords}
	Adaptive elimination, Convergence, Indian wedding, Metaheuristic optimization, Matchmaker influence
\end{IEEEkeywords}

\section{Introduction}\label{sec1}
Metaheuristic optimization algorithms have become indispensable for addressing complex, nonlinear, and high-dimensional problems across engineering, economics, and computational intelligence domains \cite{yang2013swarm,SEC1-IKEGUCHI2025102060}. They are particularly suited for scenarios where the objective function is non-differentiable, multimodal, noisy, or dynamically constrained \cite{xs2010nature,tan2021evolutionary}. Their strength lies in balancing global exploration with local exploitation, often guided by natural, physical, or social metaphors. Swarm intelligence-based algorithms, such as ant colony optimization (ACO), PSO, and artificial bee colony (ABC), exemplify this balance through decentralized cooperation, self-organization, and collective learning, enabling efficient navigation of complex search spaces \cite{mavrovouniotis2020ant,zhang2011evolutionary}.

Despite their success, most classical metaheuristics encounter persistent challenges, including premature convergence, loss of population diversity, and limited adaptability to dynamic environments. These limitations arise primarily from fixed search parameters and rigid population dynamics, which constrain their performance in large-scale or time-varying optimization problems. Recent research has therefore emphasized adaptive, hybrid, and socially inspired frameworks capable of maintaining diversity and learning dynamically from evolving populations.

In this context, we propose a novel population-based metaheuristic algorithm, termed the \textit{Indian Wedding System Optimization} (IWSO). The algorithm draws inspiration from the hierarchical, collaborative, and adaptive nature of traditional Indian weddings, where numerous stakeholders, families, partners, and vendors coordinate simultaneously under multiple, often conflicting objectives \cite{uberoi2000family,SEC2-ABDELHAFEZ2025102110}. The dynamic interplay of negotiation, distributed decision-making, and feedback-driven adaptation in such social systems provides a compelling metaphor for efficient global optimization. The IWSO algorithm models this socio-cooperative process to construct a robust, adaptive search framework. Its design incorporates three key mechanisms that collectively enable dynamic balance between exploration and exploitation:

\begin{itemize}
 \item \textbf{Decision propagation via matchmakers:} Elite solutions influence the population through structured information sharing, mirroring matchmakers that mediate preferences between stakeholders. This mechanism encourages cooperative learning and knowledge transfer, improving convergence stability and reducing the risk of premature convergence. 
 
    \item \textbf{Adaptive elimination:} Candidate solutions are evaluated using a joint fitness–diversity criterion. Underperforming individuals are selectively removed to preserve quality and prevent stagnation, analogous to prioritizing feasible and high-impact decisions in complex event coordination.
    
    \item \textbf{Guided reinitialization:} Eliminated candidates are replaced by new ones generated from elite solutions combined with controlled stochastic perturbations. This strategy enhances exploration, maintains diversity, and accelerates convergence toward the global optimum.

\end{itemize}

Together, these mechanisms establish a self-organizing and adaptive optimization process capable of dynamically responding to changing problem landscapes. By embedding social hierarchy, cooperative learning, and adaptive feedback into its core design, IWSO achieves sustained population diversity and robust convergence even in multimodal, constrained, and high-dimensional search spaces. Unlike conventional nature-inspired algorithms that rely solely on physical or biological analogies, IWSO introduces a socially grounded optimization paradigm, bridging human-inspired coordination and computational intelligence.
\subsection{Related Work}

Swarm intelligence and evolutionary algorithms have demonstrated strong capability in solving complex, nonlinear, and high-dimensional optimization problems through adaptive, population-based search mechanisms. Despite their effectiveness, many suffer from premature convergence, loss of diversity, and heavy parameter dependence, limiting scalability and adaptability in dynamic environments.

Genetic Algorithms (GA) were among the earliest evolutionary techniques, relying on biologically inspired operators such as selection, crossover, and mutation to iteratively refine solutions \cite{back1997handbook}. Although GA demonstrates strong global search ability, it commonly suffers from premature convergence in multimodal landscapes due to loss of population diversity \cite{man1996genetic}. Adaptive and hybrid variants alleviate these limitations through dynamic parameter tuning and local refinement strategies. Particle Swarm Optimization (PSO), inspired by collective swarm dynamics, offers rapid convergence and a simple update rule \cite{kennedy1995particle}. However, PSO tends to stagnate in high-dimensional or rugged search spaces as diversity collapses. Research efforts—including adaptive inertia control, quantum-behaved models, and fully informed update schemes—aim to mitigate this limitation \cite{shi1998modified}. The Lévy Flight Triangle Walk Dung Beetle Optimization (LTDBO) algorithm enhances the original DBO framework by integrating Logistic–cubic mapping, triangle-based foraging, and adaptive Lévy-flight mechanisms to improve exploration–exploitation balance and prevent entrapment in local optima \cite{fan2025novel}. Similarly, the Geometric Octal Zones Distance Estimation (GOZDE) algorithm introduces an eight-zone population structure and a median-based information-sharing mechanism to strengthen global exploration and adaptive guidance across the search space \cite{kuyu2022gozde}.

Differential Evolution (DE) effectively addresses continuous optimization using vector differentials to drive solution evolution \cite{storn1997differential}. While DE performs well across diverse domains, static mutation and crossover rates often hinder adaptability \cite{das2010differential}. In bio-inspired families, Ant Colony Optimization (ACO) and Artificial Bee Colony (ABC) exploit cooperative agent behaviors for solution construction \cite{dorigo1996ant, karaboga2007powerful}. ACO excels in discrete routing and scheduling tasks but incurs high computational cost due to pheromone updates \cite{dorigo2007ant}, whereas ABC maintains balanced exploration but may converge slowly in constrained or complex landscapes. More recent nature-inspired optimizers such as Grey Wolf Optimizer (GWO) \cite{mirjalili2014grey} and Whale Optimization Algorithm (WOA) \cite{mirjalili2016whale} model social hierarchies and hunting patterns to dynamically balance exploration and exploitation, yet they still lack strong adaptability in dynamic or time-varying environments \cite{alshammari2025nature}. Other classical techniques—including Cuckoo Search (CS) \cite{yang2009cuckoo}, Harmony Search (HS) \cite{geem2001new}, Simulated Annealing (SA) \cite{kirkpatrick1983optimization}, and Black Hole Optimization (BHO) \cite{hatamlou2013black}—improve solution diversity or local refinement but often suffer from slow convergence or heavy reliance on fixed control parameters.

Despite significant diversity in design principles, most classical metaheuristics share intrinsic limitations that hinder their effectiveness in large-scale, real-time, and dynamically evolving optimization environments. Their dependence on fixed or weakly adaptive control mechanisms—such as PSO’s inertia weight \cite{kennedy1995particle}, DE’s mutation and crossover rates \cite{storn1997differential}, GWO’s hierarchical coefficients \cite{mirjalili2014grey}, ACO’s pheromone parameters \cite{dorigo2007ant}, and CS’s static Lévy step sizes \cite{yang2009cuckoo}—limits search responsiveness under shifting conditions. Consequently, these algorithms frequently experience premature convergence, loss of diversity, limited scalability, and poor adaptability in non-stationary or uncertain environments \cite{leung1997degree, asim2020review}.

To address these shortcomings, hybrid, adaptive, and socially driven metaheuristics have emerged, emphasizing dynamic control, population restructuring, and context-aware behavioral updates. The proposed \textit{IWSO} algorithm advances this paradigm by modeling socio-cultural interaction to enable structured cooperation and adaptive search refinement. Through mechanisms such as hierarchical role assignment, adaptive elimination, guided reinitialization, and decision propagation, IWSO preserves population diversity, mitigates stagnation, and enables fluid transitions between exploration and exploitation. This adaptive socio-inspired framework allows \textit{IWSO} to remain robust in high-dimensional landscapes, responsive to changing objectives, and scalable for real-time complex optimization tasks.

\begin{table*}
\caption{Comparative Taxonomy of IWSO versus state-of-the-art metaheuristic optimization algorithms}
\tiny
\label{tab:metaheuristic_comparison}
\begin{tabular}{|p{1cm}|p{1.55cm}|p{1.75cm}|p{1.5cm}|p{1.5cm}|p{1.5cm}|p{1.25cm}|p{1.25cm}|p{1.25cm}|p{1.25cm}|}
\hline
\textbf{Algorithm} & \textbf{Exploration vs Exploitation} & \textbf{Key Limitations} & \textbf{Adaptability} & \textbf{Applicability} & \textbf{Pros} & \textbf{Technical Stability} & \textbf{Hyperpar- ameter Sensitivity} & \textbf{Scalability to High Dimensions} & \textbf{Dynamic/Real-Time Adaptation} \\
\hline
Genetic Algorithm (GA) & Strong exploration, moderate exploitation & Premature convergence, loss of diversity in later stages & Moderate; adaptive operators improve performance & Combinatorial and continuous optimization & Robust global search, flexible representation, easy hybridization & Moderate; dependent on selection/cros- sover/mutation & High; population size, mutation, crossover rates & Moderate; performance drops in very high dimensions & Low; adaptation requires additional mechanisms \\
\hline
Particle Swarm Optimization (PSO) & Moderate exploration, strong exploitation & Premature convergence in high-dimensional or rugged landscapes & Low; standard PSO struggles with dynamic changes & Continuous optimization, engineering design problems & Simple, fast convergence, low memory requirement & Moderate; stable in low-dimensional spaces & Moderate; inertia weight and acceleration coefficients critical & Moderate; variants improve performance in high dimensions & Low; standard PSO poorly adapts to dynamic changes \\
\hline
Differential Evolution (DE) & Strong exploration, moderate exploitation & Loss of diversity, limited local refinement & Moderate; adaptive variants improve performance & Continuous and real-parameter optimization & Simple implementation, effective global search, scalable & High; robust in continuous spaces & Moderate to high; mutation factor and crossover rate sensitive & High; good scalability & Low; standard DE lacks dynamic adaptation \\
\hline
Ant Colony Optimization (ACO) & Strong exploitation, moderate exploration & Computationally expensive, slow convergence in large problems & Low; pheromone adaptation limited & Combinatorial problems: TSP, routing, scheduling & Effective for discrete problems, positive feedback enhances solution quality & Moderate; sensitive to pheromone evaporation & High; pheromone update, evaporation, initial settings critical & Low; scalability issues in large-scale problems & Low; static pheromone updates limit adaptability \\
\hline
Artificial Bee Colony (ABC) & Balanced exploration and exploitation & Slow convergence, accuracy degradation in complex constrained spaces & Moderate; adaptive scout strategies & Continuous and combinatorial optimization & Simple, robust, population-based, memory-efficient & High; stable under moderate problem complexity & Moderate; colony size and limit parameter tuning required & Moderate; can degrade in very high dimensions & Moderate; can adapt via scout behavior \\
\hline
Grey Wolf Optimizer (GWO) & Balanced exploration and exploitation & Limited adaptability to dynamic/non-stationary environments & Low; static hierarchy limits flexibility & Continuous and engineering problems & Efficient, fast convergence, simple structure & Moderate; can stagnate in multimodal problems & Low to moderate; population size and alpha/beta/gamma roles & Moderate; performance declines in very high dimensions & Low; hierarchy is static \\
\hline
Whale Optimization Algorithm (WOA) & Strong exploitation, moderate exploration & Poor performance in dynamic problems, limited adaptability & Low & Continuous optimization, engineering design & Simple, few parameters, effective in static environments & Moderate; sensitive to initialization & Low; few hyperparameters & Moderate; high-dimensional performance limited & Low; lacks dynamic adaptation \\
\hline
Cuckoo Search (CS) & Strong exploration, weak exploitation & Slow convergence in constrained spaces, computational overhead in large-scale problems & Moderate; adaptive Lévy flights enhance performance & Continuous optimization, multimodal problems & Escapes local optima effectively, global search robustness & Moderate; stochastic nature affects repeatability & High; step size and probability of discovery critical & Moderate; global Lévy flights help high dimensions & Moderate; adaptive Lévy flights provide limited dynamic adaptation \\
\hline
Harmony Search (HS) & Moderate exploration, weak exploitation & Requires problem-specific tuning, lacks intensification & Low to moderate & Engineering design, combinatorial problems & Memory-based learning, flexible, easy hybridization & High; stable under moderate problem complexity & High; harmony memory size, pitch adjustment, HMCR sensitive & Low; struggles in very high dimensions & Low; lacks dynamic adaptation \\
\hline
Simulated Annealing (SA) & Strong exploration, weak exploitation & Slow convergence, single-solution approach, low parallelism & Low; temperature schedule static unless adaptive & Combinatorial and continuous optimization & Escapes local optima, simple implementation & High; stable if annealing schedule carefully designed & Moderate; initial temperature, cooling rate critical & Low; single-solution limits scalability & Low; adaptation requires complex temperature schemes \\
\hline
Black Hole Optimization (BHO) & Strong exploration, moderate exploitation & Slow convergence near optimum, lack of adaptive refinement & Low; no dynamic adaptation & Continuous and multimodal optimization & Simple, intuitive, population-based & Moderate; sensitive to black hole radius parameter & Moderate; population size, gravitational parameter sensitive & Moderate; performance may degrade in high dimensions & Low; lacks real-time adaptation \\
\hline

\textbf{Indian Wedding System Optimization (IWSO)} & Balanced and adaptive; exploration via candidate diversity, exploitation via matchmaker influence & Computational cost higher than simple heuristics; requires careful role and parameter design & High; adaptive elimination and guided reinitialization improve flexibility & High-dimensional, multimodal, constrained and dynamic optimization & Adaptive, socially structured, prevents stagnation, interpretable socio-cultural analogy & High; diversity maintained through elimination and reinitialization & Low to moderate; less sensitive due to adaptive mechanisms & High; scales well by preserving diversity and adaptive convergence & High; dynamic elimination and role adaptation support real-time scenarios \\
\hline
\end{tabular}%
\end{table*}
\normalsize

Table~\ref{tab:metaheuristic_comparison} summarizes a comparative taxonomy of IWSO against major state-of-the-art metaheuristic algorithms. The comparison covers key algorithmic dimensions such as exploration–exploitation balance, adaptability, scalability, and real-time responsiveness. Classical algorithms like GA, PSO, and DE show strong but isolated performance in specific dimensions, often constrained by static design and high parameter sensitivity. In contrast, IWSO exhibits balanced adaptability through its socially structured population dynamics, enabling sustained diversity, dynamic learning, and high technical stability across problem scales. The taxonomy clearly highlights IWSO’s superior capacity to operate efficiently under dynamic, multimodal, and high-dimensional optimization landscapes, emphasizing its potential as a robust alternative to conventional metaheuristics.

\subsection{Motivation for IWSO}
The motivation for the IWSO algorithm stems from the observation that Indian weddings naturally encapsulate a robust, adaptive, and multi-agent decision-making process. These ceremonies feature dynamic role assignment (e.g. matchmakers, parents, planners), priority-driven selection, reallocation of tasks, and negotiation-based resolution of conflicting objectives all of which mirror the demands of high-dimensional, constrained, and multi-objective optimization problems. Specifically, IWSO introduces two key innovations over traditional metaheuristics: \textit{Matchmaker-Guided Influence}  and \textit{Adaptive Elimination and Reinitialization}.  The concept of Matchmaker-guided influence is inspired by the real-world role of matchmakers in influencing and coordinating between families, IWSO incorporates a guided selection mechanism where elite agents help direct sub-optimal agents toward promising regions in the search space, enhancing convergence. Adaptive elimination and reinitialization facilitate dynamic elimination of underperforming agents (e.g., unsuccessful candidates) and reinitialize them using heuristic cues derived from successful agents and the problem landscape. This maintains diversity and prevents stagnation. By blending socio-cultural dynamics with algorithmic efficiency, IWSO fills a conceptual and performance-oriented gap in existing metaheuristics. It is particularly well-suited for scenarios requiring fast convergence, robustness to stagnation, and adaptable single-objective and multi-objective handling traits that are increasingly critical in real-world large-scale optimization tasks.

\subsection{Key Contributions}

The key contributions of this research paper are:

\begin{itemize}
	\item A novel metaphor-driven optimization framework "\textit{IWSO}" is proposed that maps the core components of Indian weddings (e.g., matchmakers, suitors, elimination rituals) to algorithmic operations for search, selection, and adaptation.
	
	\item Integration of a \textit{matchmaker-guided influence strategy}, where elite solutions guide weaker solutions, improving convergence and search space exploration without external control parameters.
	
	\item Introduction of an \textit{adaptive elimination and reinitialization} mechanism that prevents premature convergence and maintains population diversity by dynamically replacing poorly performing individuals.
	
	\item Comprehensive theoretical analysis and empirical evaluation demonstrating IWSO’s computational efficiency, scalability, and superiority on several benchmark functions; and compared with traditional state-of-the-art (SoTA) swarm-intelligence algorithms.
\end{itemize}

\subsection{Paper Outline}
The rest of the paper is organized as follows. Section \ref{iwsodes} provides an overview of the IWSO approach, along with i) computational intelligence analogy, ii) operational design, iii) algorithm and complexity analysis, and iv) convergence analysis of the proposed IWSO approach. Section \ref{performance evaluation} describes the step-by-step experimental details and results covering the i) efficiency analysis, ii) sensitivity analysis, and iii) comparative analysis, followed by a compelling conclusion in Section \ref{conclusion}.

\section{Indian Wedding System Optimization}\label{iwsodes}
\subsection{Computational Intelligence Analogy}

The IWSO algorithm is conceptually inspired by the structured social dynamics of Indian wedding matchmaking. In this analogy, each candidate solution represents a potential bride or groom in a multidimensional search space, where attributes correspond to compatibility criteria and fitness objectives. The optimization process mirrors the matchmaking lifecycle, balancing exploration (evaluating diverse prospects) and exploitation (refining the best options) through iterative evaluations and adaptive decision-making.

Key metaphorical components of the algorithm include: (i) \textit{Candidates}, modeled as solution vectors that evolve within defined bounds; (ii) \textit{Matchmaker Influence}, represented as a time-dependent heuristic that initially promotes broad exploration and progressively shifts toward intensified search around high-quality solutions; and (iii) \textit{Selection and Elimination}, where poorly performing or redundant individuals are probabilistically removed and replaced to maintain diversity and prevent early stagnation. Random perturbations emulate inherent human uncertainties in matchmaking preferences, while population mean–based reinitialization reflects cultural tendencies toward collective reasoning without losing individuality.

Through this abstraction, IWSO embeds social logic into its search dynamics, achieving a structured balance between global search diversity and local refinement. This cognitively intuitive mechanism supports efficient convergence toward optimal solutions while remaining resilient to premature convergence and suboptimal search bias.

\subsection{Operational Design}

As illustrated in Fig. \ref{fig:1}, the operational design of IWSO follows seven consecutive steps: initialization, fitness evaluation, and best candidate identification,   iterative refinement guided by  matchmaker influence, adaptive elimination and reinitialization, selection of best solution, and termination within a coherent algorithmic cycle aimed at converging to the most admissible solution.
\begin{figure*}[!htbp]
	\centering
	\includegraphics[width=0.9\linewidth]{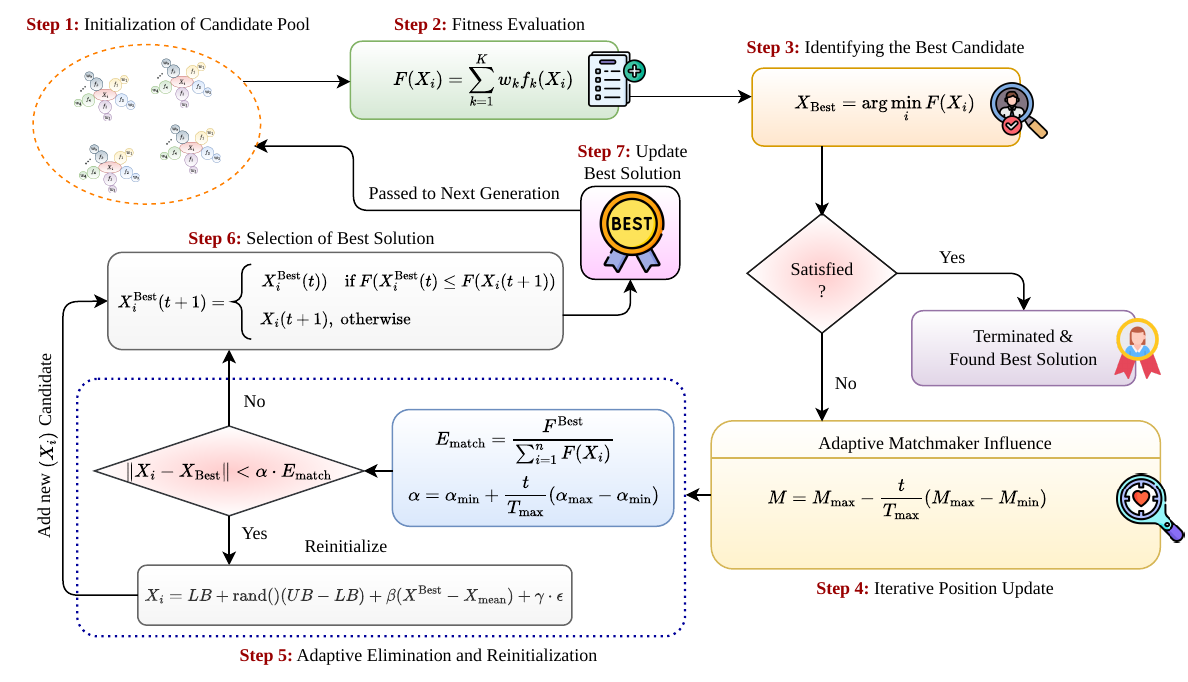}
	\caption{Schematic Overview of  Indian Wedding System Optimization Approach}
	\label{fig:1}
\end{figure*}
\subsubsection{Initialization}
A population of $n$ candidates is generated randomly within a defined search space using Eq. \eqref{e1}:
\begin{equation} \label{e1}
	X_i = LB + \text{rand()} \times (UB - LB), \quad i = 1, 2, \dots, n
\end{equation}

where, each candidate solution $X_i$ symbolizes an individual participant in the matchmaking process, such as a potential groom or bride. The search space is defined by the lower and upper bounds, denoted $LB$ and $UB$, respectively, which constrain the feasible region for candidate attributes. The term $\text{rand()}$ refers to a uniformly distributed random number in the interval $[0,1]$, ensuring stochastic diversity during the initialization and exploration phases of the algorithm. This step ensures diversity in the initial population, similar to how families consider multiple potential matches before shortlisting.

\subsubsection{Fitness Evaluation}
Each candidate's suitability is evaluated using a composite weighted sum approach using Eq. \eqref{e2}:
\begin{equation} \label{e2}
	F(X_i) = \sum_{k=1}^{K} w_k f_k(X_i), \quad \sum_{k=1}^{K} w_k = 1
\end{equation}
where, each candidate $X_i$ is evaluated across multiple criteria using a set of objective functions $f_k(X_i)$, where $f_k(X_i)$ quantifies the performance or suitability of the $i^{th}$ candidate with respect to the $k^{th}$ evaluation metric, such as education, family status, financial stability, or personality traits. The importance of each criterion is captured by a corresponding weight $w_k$, which reflects its relative significance in the overall decision-making process. These weights are normalized such that $\sum_{k=1}^{K} w_k = 1$, ensuring a balanced aggregation of all considered factors in the composite fitness function. This step models how matchmaking decisions consider multiple factors, such as compatibility, background, and financial stability.

\subsubsection{Identifying the best candidate}
The best candidate (optimal match) is selected using Eq. \eqref{e3}:
\begin{equation} \label{e3}
	X^{\text{Best}} = \arg\min_{i} F(X_i), \quad F^{\text{Best}} = \min_{i} F(X_i)
\end{equation}
where, $X^{\text{Best}}$ denotes the current best candidate in the population i.e., the one exhibiting the most favorable combination of attributes as determined by the fitness function. Consequently, $F^{\text{Best}} = \min_{i} F(X_i)$ represents the best fitness value achieved in all candidates, serving as a benchmark to evaluate the quality of other potential solutions within the search space.
This mimics the process where families shortlist the most promising match based on overall compatibility.

\subsubsection{Computation of matchmaker influence  (Iterative candidate refinement)}
To refine the choices, the position of each candidate is updated using an adaptive matchmaker factor $M$ using Eq. \eqref{e4}:
\begin{equation} \label{e4}
	M = M_{\max} - \frac{t}{T_{\max}} (M_{\max} - M_{\min})
\end{equation}

where, $M_{\max}$ and $M_{\min}$ represent the upper and lower bounds of the matchmaker's influence, respectively, effectively controlling the degree of stochastic exploration throughout the optimization process. The variable $t$ denotes the current iteration index, while $T_{\max}$ specifies the total number of allowed iterations. As the algorithm progresses, the influence of random perturbations is gradually reduced, enabling a smooth transition from exploration to exploitation.

As iterations progress, $M$ decreases, mimicking how families become more selective over time.
Specifically, the Adaptive Matchmaker Factor ($M$)  balances exploration and exploitation by adjusting randomness over iterations. The initial randomness ($M_{\max}$) ranges from 1.2 to 2.0, promoting broad search, while the final refinement ($M_{\min}$) ranges from 0.05 to 0.3, ensuring convergence. The iteration limit ($T_{\max}$) varies between 50 and 500, depending on problem complexity. For high-dimensional searches, higher $M_{\max}$ (1.8-2.0) and lower $M_{\min}$ (0.05-0.1) allow extensive exploration. Simpler problems use lower values (e.g., $M_{\max} = 1.2-1.5$). Proper tuning enhances optimization efficiency, balancing diverse candidate selection and precision matching, similar to refining marriage proposals in an Indian wedding system to find the best match. Each candidate updates its position using Eq. \eqref{e5}:
\begin{equation} \label{e5}
	X_i(t+1) = X_i(t) + r_1 (X^{\text{Best}} - X_i) + M  \epsilon
\end{equation}
where, \( r_1 \in [0,1] \) is a control parameter that governs the degree of attraction toward the best candidate, effectively balancing exploration and exploitation. The term \( \epsilon \in [-1,1] \) introduces random perturbations into the candidate's position update, enhancing stochastic behavior and helping the algorithm avoid local optima.
This represents how candidates refine their choices based on feedback from the matchmaking process.

\subsubsection{Adaptive Elimination and Reinitialization}
Candidates who are too similar to the best match but not optimal are eliminated using an expected match ($E_{\text{match}}$) threshold using Eq. \eqref{e6}:
\begin{equation} \label{e6}
	E_{\text{match}} = \frac{F^{\text{Best}}}{\sum\limits_{i=1}^{n} F(X_i)}
\end{equation}
and the elimination factor ($\alpha$) using Eq. \eqref{e7}:
\begin{equation} \label{e7}
	\alpha = \alpha_{\min} + \frac{t}{T_{\max}} (\alpha_{\max} - \alpha_{\min})
\end{equation}
If a candidate is too close to $X^{\text{Best}}$ without being optimal, it is reinitialized using Eq. \eqref{e8}:
\begin{equation} \label{e8}
	X_i = LB + \text{rand()}(UB - LB) + \beta (X^{\text{Best}} - X_{\text{mean}}) + \gamma  \epsilon
\end{equation}
where,  \( X_{\text{mean}} = \frac{1}{n} \sum\limits_{i=1}^{n} X_i \) denotes the population mean, which helps preserve diversity within the candidate pool during reinitialization. The parameter \( \beta \in [0.1, 0.5] \) introduces a controlled bias towards the global best solution, thereby enhancing the exploitation capability of the algorithm. Meanwhile, \( \gamma \in [0.2, 0.8] \) scales the stochastic component \( \epsilon \sim \mathcal{N}(0,1) \), introducing randomness that prevents premature convergence and encourages exploration of the solution space.
This process mimics how, in an Indian wedding system, families reject unsuitable matches and explore for better alternatives.
\subsubsection{Fitness Evaluation and Selection of the best solution}

The fitness values of the updated candidate solutions in the new population are computed using Eq.~\eqref{e3}. The selection of the best candidate at each step is performed according to Eq.~\eqref{selection}:

\begin{equation}
	X^{\text{Best}}_{i}(t+1) =
	\begin{cases} 
		X_i^{\text{Best}}(t), & \text{if } F(X_i^{\text{Best}}(t)) \leq F(X_i(t+1)) \\
		X_i(t+1), & \text{otherwise}
	\end{cases}
	\label{selection}
\end{equation}

At the end of each iteration, the global best solution across the entire population is updated using Eq. \eqref{e9}:
\begin{equation} \label{e9}
	X^{\text{Best}} = \arg\min_{i} F(X_i), 
	\quad F^{\text{Best}} = \min_{i} F(X_i)
\end{equation}

This process is repeated iteratively until the predefined maximum number of iterations is reached, ultimately yielding the final best-matching solution.

\subsubsection{Termination}

The process repeats until the maximum iteration count $T_{\max}$ is reached or a convergence criterion is met, with $X^{\mathrm{Best}}$ returned as the algorithm’s optimal solution.

\subsection{Algorithm and Complexity Analysis}
\begin{algorithm}[!htbp]
	\caption{Indian Wedding System Optimization} 
	\DontPrintSemicolon
	\KwIn{Population size $n$, bounds $[LB, UB]$, max iterations $T_{\max}$}
	\KwOut{Best solution $X^{\text{Best}}$ and fitness $F^{\text{Best}}$}
	
	\tcc{\bf Initialization:}
	Initialize random population $X_i = LB + \text{rand()} \times (UB - LB), \quad i=1, \dots, n$\
	\tcc{\textbf{Fitness evaluation:}}
	Compute fitness for each $X_i$ using weighted sum:  
	$F(X_i) = \sum_{k=1}^{K} w_k f_k(X_i), \quad \sum_{k=1}^{K} w_k = 1$\
	\\
	Identify the best solution and best fitness value:  
	$X^{\text{Best}} = \arg\min_{i} F(X_i), \quad F^{\text{Best}} = \min_{i} F(X_i)$ \
	
	\For{$t = 1$ to $T_{\max}$}{
		\tcc{\bf Matchmaker influenced-based candidate refinement:}
		$M = M_{\max} - \frac{t}{T_{\max}} (M_{\max} - M_{\min})$\
		\For{each candidate $X_i$}{
			$X_i(t+1) = X_i(t) + r_1 (X^{\text{Best}} - X_i) + M  \epsilon$
		}
		
		\tcc{\bf Adaptive elimination:}
		$E_{\text{match}} = \frac{F^{\text{Best}}}{\sum_{i=1}^{n} F(X_i)}$\
		$\alpha = \alpha_{\min} + \frac{t}{T_{\max}} (\alpha_{\max} - \alpha_{\min})$\
		\For{each $X_i$}{
			\If{$\| X_i - X^{\text{Best}} \| < \alpha  E_{\text{match}}$}{
				\tcc{\textbf{Reinitialization:}}
				$X_i = LB + \text{rand()}(UB - LB) + \beta (X^{\text{Best}} - X_{\text{mean}}) + \gamma  \epsilon$
			}
		}
		\tcc{\textbf{Fitness evaluation and selection of  best solution:}}
		
		Evaluation of fitness of new population using Eq. \eqref{e3} and selection of best candidate solution using Eq. \eqref{selection}
	}
	\KwRet{$X^{\text{Best}}$, $F^{\text{Best}}$}
\end{algorithm}

In essence, the operational design of the IWSO embeds sociocultural principles of the Indian wedding within a mathematically grounded evolutionary framework, enabling robust and interpretable optimization performance.

\subsubsection*{Complexity Analysis}

Let $n$ be the population size, $T_{\max}$ the maximum number of iterations, $D$ the problem dimensionality, and $K$ the number of objectives in the weighted fitness function. During initialization, generating $n$ individuals of dimension $D$ requires $\mathcal{O}(nD)$ time, computing their fitness values requires $\mathcal{O}(nK)$, and identifying the best solution takes $\mathcal{O}(n)$, resulting in an overall initialization complexity of $\mathcal{O}(nD + nK)$. 
Within the main loop, repeated for $T_{\max}$ iterations, each iteration involves updating positions ($\mathcal{O}(nD)$), evaluating fitness ($\mathcal{O}(nK)$), computing elimination metrics ($\mathcal{O}(n)$), checking reinitialization conditions and computing distances ($\mathcal{O}(nD)$), and possibly reinitializing candidates ($\mathcal{O}(nD)$), along with updating the best solution ($\mathcal{O}(n)$). The total complexity per iteration is thus $\mathcal{O}(nD + nK + n)$, leading to an overall loop complexity of $\mathcal{O}(T_{\max}  n(D + K))$.
Returning the best solution at the end requires constant time, i.e. $\mathcal{O}(1)$. Therefore, the overall time complexity of the IWSO algorithm is $\mathcal{O}(T_{\max}  n(D + K))$. In terms of space, storing the population takes $\mathcal{O}(nD)$, fitness values require $\mathcal{O}(n)$, and maintaining temporary variables and the best solution adds $\mathcal{O}(D)$, resulting in a total space complexity of $\mathcal{O}(nD)$. Hence, IWSO exhibits linear scalability with respect to the population size $n$, dimensionality $D$, and objective count $K$, making it computationally efficient for high-dimensional, multi-objective optimization problems.

\subsection{Convergence Analysis of IWSO}

The operational design of IWSO consists of seven consecutive steps: initialization (Eq.~\eqref{e1}), fitness evaluation (Eq.~\eqref{e2}), identification of the best candidate (Eq.~\eqref{e3}), iterative refinement guided by the matchmaker influence (Eqs.~\eqref{e4}-\eqref{e5}), the adaptive elimination and the reinitialization (Eqs.~\eqref{e6}-\eqref{e8}), selection of the best solution (Eq.~\eqref{selection}), and termination (Eq.~\eqref{e9}). We now formalize the convergence properties based on these operational steps.

\begin{theorem}[Convergence of IWSO]
\label{thm:iwsoconvergence}
Let $\mathbf{X}(t)$ = \{$X_1(t)$, $\dots$, $X_n(t)$\} denote the IWSO population at iteration $t$ in a compact search space $\Omega \subset \mathbb{R}^D$, initialized via Eq.~\eqref{e1}. Assume the fitness function $F:\Omega \to \mathbb{R}$ is measurable and bounded below by $F^\star$. Suppose the stochastic perturbations $\epsilon$ in the update (Eq.~\eqref{e5}) and reinitialization (Eq.~\eqref{e8}) admit a probability density $\rho(x)$ w.r.t.\ the Lebesgue measure $\lambda$ on $\Omega$ with $\rho(x) \ge \rho_{\min} > 0$ for all $x \in \Omega$. Further, assume the elitist selection rule (Eq.~\eqref{selection}) is applied at each iteration.

Then the global best fitness sequence $F^{\mathrm{Best}}(t)$ generated by Eq.~\eqref{e9} converges almost surely to the global minimum $F^\star$, i.e.,
$F^{\mathrm{Best}}(t) \xrightarrow{a.s.} F^\star \quad \text{as } t \to \infty,$
and for any $\varepsilon > 0$, at least one candidate enters the $\varepsilon$-neighborhood of the global optimum infinitely often with probability one.
\end{theorem}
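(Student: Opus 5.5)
The plan is the standard global random search argument in the style of Solis--Wets, built on two ingredients: elitist monotonicity and a uniform positive probability of hitting any target set. Interpreting $F^\star$ as the essential infimum of $F$ on $\Omega$, fix $\varepsilon>0$ and define the target set
\[
A_\varepsilon=\{x\in\Omega : F(x)<F^\star+\varepsilon\}.
\]
Because $F$ is measurable and $F^\star$ is its (essential) infimum, $A_\varepsilon$ is measurable with $\lambda(A_\varepsilon)>0$. If $F^\star$ were merely a pointwise infimum attained on a null set, no sampling algorithm could find it, so I will state this interpretation, or assume continuity of $F$ near a minimizer, explicitly.

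\textbf{Step 1 (monotonicity).} The elitist rule \eqref{selection}, combined with \eqref{e9}, gives $F^{\mathrm{Best}}(t+1)\le F^{\mathrm{Best}}(t)$ for every $t$. Together with the lower bound $F^{\mathrm{Best}}(t)\ge F^\star$, this shows $F^{\mathrm{Best}}(t)$ converges almost surely to some limit $L\ge F^\star$.

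\textbf{Step 2 (uniform hitting probability).} Conditional on the full history $\mathcal F_t$, each new candidate $X_i(t+1)$ produced by \eqref{e5} is an $\mathcal F_t$-measurable point plus the perturbation $M\epsilon$. By hypothesis this has a density bounded below by $\rho_{\min}$ on $\Omega$; after the change of variables by $M\ge M_{\min}>0$, the bound becomes $\rho_{\min}/M_{\max}^{D}$ up to the scaling. The same holds for the reinitialization \eqref{e8}. Hence
\[
\Pr\bigl(X_i(t+1)\in A_\varepsilon \mid \mathcal F_t\bigr)\ \ge\ \delta:=c\,\rho_{\min}\lambda(A_\varepsilon)>0,
\]
with $\delta$ independent of $t$ and of the history.

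This is the step I expect to be the main obstacle. The hypothesis is phrased loosely: it says "the perturbations admit a density on $\Omega$", whereas what the argument needs is that the \emph{post-update position}, after translation, scaling and any projection back onto $\Omega$, has conditional density at least $\rho_{\min}$. I would adopt exactly that reading of the assumption and note that it requires bound handling, for example resampling or reflection, that preserves a positive density.

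\textbf{Step 3 (conclude).} By Step 2, for each $t$,
\[
\Pr(\text{no hit in } A_\varepsilon \text{ during } t+1,\dots,t+k \mid \mathcal F_t)\le(1-\delta)^k .
\]
This tends to $0$ as $k\to\infty$. The conditional Borel--Cantelli lemma (Lévy's extension), applied to the events $\{\exists i: X_i(s)\in A_\varepsilon\}$ whose conditional probabilities sum to $\infty$, then gives infinitely many hits almost surely. This is the "infinitely often" claim; the $\varepsilon$-neighborhood is understood in fitness value, or in space under continuity.

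Once any candidate lies in $A_\varepsilon$, elitism forces $F^{\mathrm{Best}}(s)<F^\star+\varepsilon$ for all later $s$. Therefore $L\le F^\star+\varepsilon$ almost surely. Intersecting over $\varepsilon=1/m$, $m\in\mathbb N$, yields $L=F^\star$ almost surely, i.e. $F^{\mathrm{Best}}(t)\xrightarrow{a.s.}F^\star$.
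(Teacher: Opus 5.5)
Your proposal is correct and follows the same skeleton as the paper's proof. That skeleton is: elitist monotonicity; a uniform positive per-step probability, coming from the density floor $\rho_{\min}$, of landing in $\{x\in\Omega : F(x)\le F^\star+\varepsilon\}$; hence infinitely many hits almost surely; hence $F_\infty=F^\star$. Your version is the more careful one in three ways. First, the conditional Borel--Cantelli/geometric bound avoids the paper's appeal to a time-homogeneous chain, and homogeneity actually fails because $M$ and $\alpha$ depend on $t$. Second, it avoids the step ``irreducibility implies infinitely many visits,'' which needs recurrence and not just irreducibility. Third, you state explicitly the requirement $\lambda(B_\varepsilon)>0$ (the essential-infimum reading, without which the theorem is false) and the bound-handling reading of the density hypothesis, both of which the paper uses without saying so.
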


\begin{proof}
Consider the measurable space $(\Omega^n, \mathcal{B}(\Omega^n), \mathbb{P})$, where $\Omega^n$ is the $n$-fold Cartesian product of the search space, $\mathcal{B}(\Omega^n)$ is the Borel $\sigma$-algebra, and $\mathbb{P}$ is the probability measure induced by the stochastic updates in Eq.~\eqref{e5} and Eq.~\eqref{e8}.

\paragraph{Step 1: Monotonicity via Elitist Selection} 
By Eq.~\eqref{selection}, the global best solution is updated only if a candidate improves upon the previous best. Hence, $F^{\mathrm{Best}}(t)$ is a non-increasing sequence bounded below by $F^\star$ and therefore convergent almost surely to some limit $F_\infty \ge F^\star$.

\paragraph{Step 2: Markov Chain Formulation} 
The IWSO updates (Eq.~\eqref{e5} and Eq.~\eqref{e8}) define a time-homogeneous Markov chain $\{\mathbf{X}(t)\}_{t\ge 0}$ with transition kernel
\[
P(\mathbf{X}(t+1) \in A \mid \mathbf{X}(t)) \quad \text{for all measurable } A \subseteq \Omega^n.
\]
Since the perturbation terms in Eq.~\eqref{e5} and reinitialization in Eq.~\eqref{e8} have strictly positive density $\rho_{\min}$ over the compact space $\Omega$, the chain is $\lambda$-irreducible: for any measurable set $B \subseteq \Omega^n$ with $\lambda(B)>0$, 
\[
P(\mathbf{X}(t+1) \in B \mid \mathbf{X}(t)) \ge (\rho_{\min})^n \lambda(B) > 0.
\]

\paragraph{Step 3: Hitting Global Minima}
Let $B_\varepsilon = \{x \in \Omega : F(x) \le F^\star + \varepsilon\}$. Irreducibility implies that at least one candidate enters $B_\varepsilon$ infinitely often. Once inside $B_\varepsilon$, Eq.~\eqref{selection} ensures $F^{\mathrm{Best}}(t+1) \le F^{\mathrm{Best}}(t)$, preserving or improving the global best.

\paragraph{Step 4: Limit Argument} 
Since $F^{\mathrm{Best}}(t)$ is non-increasing and $B_\varepsilon$ is visited infinitely often, it follows that 
\[
F_\infty = \lim_{t \to \infty} F^{\mathrm{Best}}(t) = F^\star \quad \text{almost surely}.
\]
\end{proof}

\begin{proposition}[Finite-Time $\varepsilon$-Optimal Probability Bound in IWSO]
\label{prop:iwsoprob}
Assume additionally that $F$ is $L$-Lipschitz continuous and stochastic reinitialization in Eq.~\eqref{e8} satisfies $\rho(x) \ge \rho_{\min} > 0$. Define 
\[
B_\varepsilon = \{x \in \Omega : F(x) \le F^\star + \varepsilon\}, \quad p_\varepsilon := \rho_{\min} \lambda(B_\varepsilon) > 0.
\]
Then, after $T$ iterations with $n$ candidates and sufficient stochastic perturbations via Eq.~\eqref{e5} or reinitializations via Eq.~\eqref{e8}, the probability that at least one candidate enters $B_\varepsilon$ is bounded below by
\[
\mathbb{P}\Big(\exists\, t\le T,\, i\in\{1,\dots,n\}: X_i(t) \in B_\varepsilon\Big) \ge 1 - (1-p_\varepsilon)^{nT}.
\]
\end{proposition}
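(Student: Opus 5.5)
The plan is to reduce the statement to a product of conditional failure probabilities over the $nT$ candidate-generation events. Index these events by pairs $(t,i)$ with $1\le t\le T$ and $1\le i\le n$. Each event is either the matchmaker update of Eq.~\eqref{e5} or a reinitialization via Eq.~\eqref{e8}.

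\textbf{Per-event lower bound.} Let $\mathcal{F}_{t,i}$ be the $\sigma$-algebra generated by everything sampled before candidate $X_i(t)$ is produced. The density assumption carried over from Theorem~\ref{thm:iwsoconvergence} says that, conditionally on $\mathcal{F}_{t,i}$, the law of $X_i(t)$ has a density at least $\rho_{\min}$ on $\Omega$, whatever the current positions, $X^{\mathrm{Best}}$ and $X_{\mathrm{mean}}$ are. For Eq.~\eqref{e8} this is the stated hypothesis. For Eq.~\eqref{e5} it is the assumption inherited from the theorem; this is where ``sufficient stochastic perturbations'' is used. It gives
\[
\mathbb{P}\big(X_i(t)\in B_\varepsilon \mid \mathcal{F}_{t,i}\big) \ge \int_{B_\varepsilon}\rho_{\min}\,d\lambda = p_\varepsilon .
\]
If a sample lands outside $\Omega$, we project or clip it back into $\Omega$; this only adds mass to $\Omega$ and does not reduce the density lower bound on $B_\varepsilon$.

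\textbf{Positivity of $p_\varepsilon$.} This is where Lipschitz continuity enters. Take a minimizer $x^\star\in\Omega$, which exists because $\Omega$ is compact and $F$ is continuous. The ball of radius $\varepsilon/L$ around $x^\star$, intersected with $\Omega$, lies in $B_\varepsilon$. Assuming $\Omega$ is regular enough (for example a box $[LB,UB]$) that this intersection has positive Lebesgue measure, we get $\lambda(B_\varepsilon)>0$ and hence $p_\varepsilon>0$.

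\textbf{Chaining.} Let $A_{t,i}=\{X_i(t)\notin B_\varepsilon\}$. Ordering the $nT$ events lexicographically and applying the tower property repeatedly gives
\[
\mathbb{P}\Big(\bigcap_{t\le T}\bigcap_{i\le n} A_{t,i}\Big)=\mathbb{E}\Big[\prod_{(t,i)} \mathbb{P}\big(A_{t,i}\mid \mathcal{F}_{t,i}\big)\cdot(\text{indicators of earlier failures})\Big]\le (1-p_\varepsilon)^{nT}.
\]
More carefully, one inducts on the number of events: condition on $\mathcal{F}_{t,i}$, bound the last factor by $1-p_\varepsilon$, and take expectations. Taking complements gives the claimed bound. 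Elitism via Eq.~\eqref{selection} is not needed for the hitting statement. It only shows, in addition, that $F^{\mathrm{Best}}(T)\le F^\star+\varepsilon$ on the hitting event.

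\textbf{Main obstacle.} The delicate step is justifying the uniform conditional density bound for Eq.~\eqref{e5}. That update is $X_i+r_1(X^{\mathrm{Best}}-X_i)+M\epsilon$, where $M$ shrinks with $t$ and $\epsilon\in[-1,1]$ has bounded support. A literal reading therefore cannot reach all of $\Omega$ in one step. I would first try to rest the bound on the hypothesis exactly as stated, namely that the perturbation density is at least $\rho_{\min}$ on all of $\Omega$, as in Theorem~\ref{thm:iwsoconvergence}. If that is deemed too strong, the fallback is to count only reinitialization events, which are uniform on $[LB,UB]$ plus Gaussian noise and so genuinely have full support. The exponent $nT$ would then be replaced by the number of reinitializations, or one would assume each candidate is reinitialized with probability bounded below at every iteration.
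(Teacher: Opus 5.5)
Your proof is correct and follows the same route as the paper's: each candidate at each iteration hits $B_\varepsilon$ with probability at least $p_\varepsilon$, the failure probabilities multiply over the $nT$ trials to at most $(1-p_\varepsilon)^{nT}$, and taking the complement gives the bound. Your version is more careful than the paper's three-line proof: the conditional chaining through $\mathcal{F}_{t,i}$ replaces the independence the paper tacitly assumes, the Lipschitz-ball argument justifies the positivity of $p_\varepsilon$ that the paper merely asserts, and the paper also takes for granted the uniform density bound for the update in Eq.~\eqref{e5}, which you rightly flag as the weak point.
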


\begin{proof}
At each iteration, each candidate’s update (Eq.~\eqref{e5}) or reinitialization (Eq.~\eqref{e8}) has probability at least $p_\varepsilon$ to fall within $B_\varepsilon$. Across $n$ candidates and $T$ iterations, the probability that no candidate enters $B_\varepsilon$ is at most $(1-p_\varepsilon)^{nT}$. Taking the complement yields the stated lower bound.
\end{proof}

\begin{itemize}
    \item Theorem~\ref{thm:iwsoconvergence} establishes almost-sure convergence of IWSO explicitly via its operational steps (Eqs.~\eqref{e1}, \eqref{e5}, \eqref{e8}, \eqref{selection}, \eqref{e9}).  
    \item Proposition~\ref{prop:iwsoprob} provides a finite-time probabilistic guarantee for reaching an $\varepsilon$-optimal solution, directly tied to the stochastic components of IWSO (Eq.~\eqref{e5} and Eq.~\eqref{e8}).  
    \item The combination of matchmaker influence, adaptive elimination, and reinitialization ensures irreducibility and positive probability of reaching global optima.
\end{itemize}

\section{Performance Evaluation} \label{performance evaluation}
This section describes the benchmark test problems employed to assess the performance of the proposed IWSO algorithm in solving optimization tasks and determining its effectiveness in identifying optimal solutions. Details of the benchmark functions, control parameters, and the stopping criteria utilized during the evaluation process are also provided. Furthermore, statistical analyses and sensitivity tests are conducted to validate the robustness of the algorithm. Further, its performance is compared against a set of well-established, diverse population-based and socio-inspired optimization algorithms, including GA \cite{back1997handbook}, ABC \cite{karaboga2007powerful}, PSO \cite{kennedy1995particle}, ACO \cite{dorigo1996ant}, DE \cite{storn1997differential}, WOA \cite{mirjalili2016whale}, HS \cite{geem2001new}, SA \cite{kirkpatrick1983optimization}, BHO \cite{hatamlou2013black}, and CS \cite{yang2009cuckoo} to demonstrate the efficacy of IWSO. 

\subsection{Experimental Details}
The experimental evaluations were conducted on a computing system equipped with an Intel\textsuperscript{\textregistered} Core\textsuperscript{TM} i7-8650U CPU operating at a base clock frequency of 2.20\,GHz. This computational system utilizes a 64-bit version of Ubuntu 25.04 LTS OS. The system is equipped with 16 GB of RAM. The software environment to execute the experimental work of the proposed algorithm was implemented using Python version 3.12.7. The simulation environment is configured with the following specifications:

\textit{Benchmark Test Functions}: To assess the relative performance of the proposed algorithm against established classical optimization techniques, an empirical study was conducted using a comprehensive set of 23 benchmark functions ($f_1$-$f_{23}$). The details of these benchmark functions are summarized in Table \ref{tab:1}. 
\setcounter{table}{1}
\begin{table}[!htbp]
	\centering
	\caption{Benchmark Test Functions}
	\label{tab:1}
	\resizebox{0.99\linewidth}{!}{
		\tiny
		\begin{tabular}{l|l|c|c|l|l|c}
			\hline
            \multirow{2}{*}{\textbf{F No.}} &\multirow{2}{*}{\textbf{Function Name}} &  \multicolumn{2}{c|}{\textbf{Type}} & \multicolumn{2}{c|}{\textbf{Search Space}} &  \multirow{2}{*}{\textbf{Dim}} \\
			 \cline{3-6}& &\textbf{Model} &\textbf{Category}  &\textbf{LB}  &\textbf{UB} & \\ \hline 
			$f_1$ & ackley & M&N &  32.768 &32.768 &30 \\ \hline
			$f_2$ & ackley\_2 & M&N & 32.768&32.768 &2 \\ \hline
			$f_3$ &booth &U&N &-10 &10&2 \\ \hline
			$f_4$ &cosine matrix &M&N &-10& 10&30 \\ \hline
			$f_5$ &dixon-price &U&N &-10 &10&30\\ \hline
			$f_6$ &foxholes &M&N &-65.536 &65.536&2 \\ \hline
			$f_7$ &griewank &M&N & -600 &600 &30\\ \hline
			$f_8$ &levy function &M&N &-10 &10&30\\ \hline
			$f_9$ & michalewicz &M&N &0 &$\pi$&10\\ \hline
			$f_{10}$ &multimodal sphere &M&S &-10 &10&30 \\ \hline
			$f_{11}$ &noisy quadratic &U&S & -10&10&30 \\ \hline
			$f_{12}$ &noisy sphere &U&S &-10 &10 &30\\ \hline
			$f_{13}$ &powell sum &U&S & -1&1&30\\ \hline
			$f_{14}$ &rastrigin &M&S &-5.12 &5.12 &30 \\ \hline
			$f_{15}$ &rastrigin\_2 &M&S &-5.12 &5.12&30 \\ \hline
			$f_{16}$ &rosenbrock &U&N & -5 &10 &30\\ \hline
			$f_{17}$ &salomon &M&S &-100&100 &30\\ \hline
			$f_{18}$ &schwefel &M&S & -500 &500 &30\\ \hline
			$f_{19}$ &sine wave &M&S & -$\pi$ &$\pi$ &30\\ \hline
			$f_{20}$ &sphere function &U&S &-5.12 &5.12 &30 \\ \hline
			$f_{21}$ &three hump camel &U&N &-5 &5 &2 \\ \hline
			$f_{22}$ &xin-she yang 4 &M&N &-10 &10 &30 \\ \hline
			$f_{23}$ &zakharov &U&N & -5 &10 &30 \\ \hline
	\end{tabular}
    }
	
	\footnotesize{U: Uni-modal, M: Multi-modal, S: Separable, N: Non-separable, Dim: dimension, LB: Lower Bound, UB: Upper-Bound}
\end{table}
\par
This test suite encompasses a diverse range of optimization problems with varying levels of complexity, categorized as Unimodal (U), Multimodal (M), Separable (S), and Non-Separable (N) functions. Unimodal functions are primarily employed to evaluate the exploitation capability of the algorithm, as they contain a single global optimum. In contrast, Multimodal functions are designed to assess the exploration ability, testing the algorithm’s effectiveness in avoiding premature convergence and escaping local optima. Additionally, the separability characteristic represents the dependency structure among variables; Separable functions consist of independent variables and are generally easier to optimize, whereas Non-Separable functions involve interdependent variables, posing a greater optimization challenge. A balanced selection of these benchmark functions ensures a rigorous and comprehensive evaluation of the algorithm’s performance in terms of convergence speed, exploration and exploitation balance, and the ability to locate global optimal solutions. 

\begin{table}[!htbp]
	\centering
	\caption{Control Parameters of IWSO and Existing Models}
	\label{tab:2}
		\resizebox{0.49\textwidth}{!}{
			\small
	\begin{tabular}{p{1.5cm}|p{7cm}}\hline
		\textbf{Algorithm} &\textbf{Parameter and Value} \\ \hline
		GA \cite{back1997handbook}	&	pop\_size = 30, crossover\_rate = 0.8, mutation\_rate = 0.1, generations = 50	\\	\hline
		ABC \cite{karaboga2007powerful}	&	colony\_size=30,  generations = 50		\\	\hline
		PSO	\cite{kennedy1995particle}&	swarm\_size = 30,  velocity\_max = 2, generations = 50, w = 0.5       \# inertia weight, c1 = 1.5      \# cognitive component, c2 = 1.5      \# social component	\\	\hline
		ACO	\cite{dorigo1996ant}&	num\_ants = 30, generations = 50,  evaporation\_rate= 0.5  \# evaporation rate of pheromone, Q = 100            \# learning rate  alpha = 1.0, beta=2.0	\\	\hline
		WOA \cite{mirjalili2016whale}	&	encircling prey (p $<$ 0.5), pop\_size = 30,  generations = 50	\\	\hline
		CS \cite{yang2009cuckoo}	&	n\_nests = 30, pa = 0.25  \# discovery rate\\	\hline
		SA \cite{kirkpatrick1983optimization}	&	initial\_temp = 1000, final\_temp = 1e-3, alpha = 0.95 \# cooling rate\\	\hline
		BHO \cite{hatamlou2013black}	&	num\_stars = 30, generations = 50	\\	\hline
		HS \cite{geem2001new}	&	Iterations = 50, agents=30, step\_size = 0.1\\	\hline
		DE \cite{storn1997differential}	&	pop\_size = 30, f = 0.5  \# mutation factor, cr = 0.9,      \# crossover probability, generations = 50	\\	\hline
		\textbf{IWSO} (Proposed) & pop\_size = 30, matchmaker factor ($M_{Max}$) = [1.2-2.0], matchmaker factor ($M_{Min}$) = [0.05-0.3], elimination factor ($\alpha$) = [0.5, 1.5], $T_{\max}$ = 50 \\ \hline
	\end{tabular}}   
\end{table}
\textit{Control Parameters}: 
The control parameter settings utilized for the proposed IWSO algorithm, as well as for the comparative algorithms, are detailed in Table \ref{tab:2}. All algorithms, including IWSO, were executed on the set of benchmark functions $f_1$-$f_{23}$ with search space for all variables bounded within the range as defined in Table \ref{tab:1}, across 30 independent runs to ensure statistical significance. Each run was conducted with a iterations ($T_{\max}$=50) and a population size (n)=30.

\textit{Stopping Criteria}: The algorithm is considered to have converged or terminated when any of the following criteria are satisfied:
\begin{itemize}
	\item The algorithm terminates when either the maximum number of iterations or the maximum allowed number of function evaluations is reached.
	\item The algorithm halts if no substantial improvement in solution quality is detected over a predefined number of successive iterations.
	\item The algorithm stops immediately upon discovering an optimal or sufficiently satisfactory solution that meets the predefined criteria.
\end{itemize}
\subsection{Results}
The optimization results are thoroughly analyzed to perform a comprehensive \textit{efficiency analysis} (Section \ref{pfa}) and \textit{sensitivity analysis} (Section~\ref{sen}) of the proposed IWSO algorithm across a diverse set of benchmark test functions. 
\subsubsection{Efficiency Analysis}\label{pfa}
 
Fig.~\ref{fig:cp} depicts the effect of the \textit{expected match} ($E_{Match}$) threshold, a key control parameter in the proposed IWSO algorithm that governs adaptive elimination and reinitialization to balance between exploration and exploitation. The $E_{Match}$ guides candidate solutions toward elite individuals over 100 iterations across 23 benchmark functions. Maintaining a consistent $E_{Match}$ in early iterations promotes broad stochastic exploration, preventing premature convergence. As iterations progress, the gradual decay of $E_{Match}$ enhances local refinement by amplifying the influence of the current best solution. This dynamic modulation accelerates convergence while preserving solution diversity during the initial stages. Moreover, by removing low-performing or redundant individuals based on relative fitness and similarity to the best solution, the mechanism sustains diversity and mitigates the risk of entrapment in local minima. In effect, it emulates a selection pressure that favors exploration in early stages and exploitation as convergence proceeds, enabling the IWSO algorithm to adapt its search behavior dynamically and achieve improved optimization performance across multi-modal landscapes.

\input{Ematch}

 Fig.~\ref{fig:mp} presents a comprehensive analysis of the IWSO algorithm's convergence and divergence behavior, emphasizing its ability to strike an effective balance between \textit{diversification} (global exploration) and \textit{intensification} (local exploitation), a crucial aspect for avoiding premature convergence. The convergence and divergence profiles over benchmark functions $f_1$ to $f_{23}$ are illustrated in Figs.~\ref{fig:mp}(a) to Fig.~\ref{fig:mp}(w), respectively. These functions represent diverse optimization landscapes, including uni-modal/multi-modal and separable/non-separable scenarios, offering rigorous testing conditions.

\input{con-div11}
 \setcounter{figure}{3}
 \input{con-div22}

The best fitness values (y-axis) over 50 epochs in an increasing manner, reveal that IWSO initiates with wide exploration, enabling the algorithm to escape shallow local optima. As the iterations progress, the adaptive matchmaker-guided influence gradually intensifies the search around elite solutions, leading to refined exploitation. This is evident from the steady and the smooth convergence curves across all test cases. The  robustness of the algorithm stems from its dual adaptation mechanisms: matchmaker-based guidance and elimination-driven diversity preservation, which together facilitate reliable navigation of rugged fitness landscapes. These trajectories confirm the ability of IWSO to maintain a dynamic search strategy that adapts to the landscape structure, enabling fast convergence to high-quality global or near-global solutions.

Further, the analysis illustrated in Fig. \ref{fig:violin} offers an in-depth exploration of the IWSO algorithm's performance across a diverse range of optimization landscapes. This examination encompasses various models, including both uni-modal and multi-modal scenarios, as well as distinct categories like separable and non-separable functions. Our rigorous testing was conducted over 30 runs, ensuring the reliability of the results. In Fig. \ref{fig:violina}, we present the IWSO algorithm's impressive performance in uni-modal scenarios, featuring separable functions ($f_{11}$, $f_{20}$) and non-separable functions ($f_3$, $f_{21}$). Conversely, Fig. \ref{fig:violinb} demonstrates the algorithm's superior effectiveness in multi-modal contexts, showcasing separable functions ($f_{14}$, $f_{16}$) alongside non-separable functions ($f_8$, $f_{22}$). The outcomes clearly highlight the remarkable capability of the proposed approach to efficiently pinpoint the elite candidate with minimal variation, even in the case of challenging complex non-separable functions across both uni-modal and multi-modal scenarios. This evidence emphasizes the strength of the IWSO algorithm as a leading solution in the field of optimization. 
\begin{figure*}[!htbp]
	\centering
	\begin{subfigure}{0.49\textwidth}
		\centering
		\includegraphics[width=0.99\linewidth]{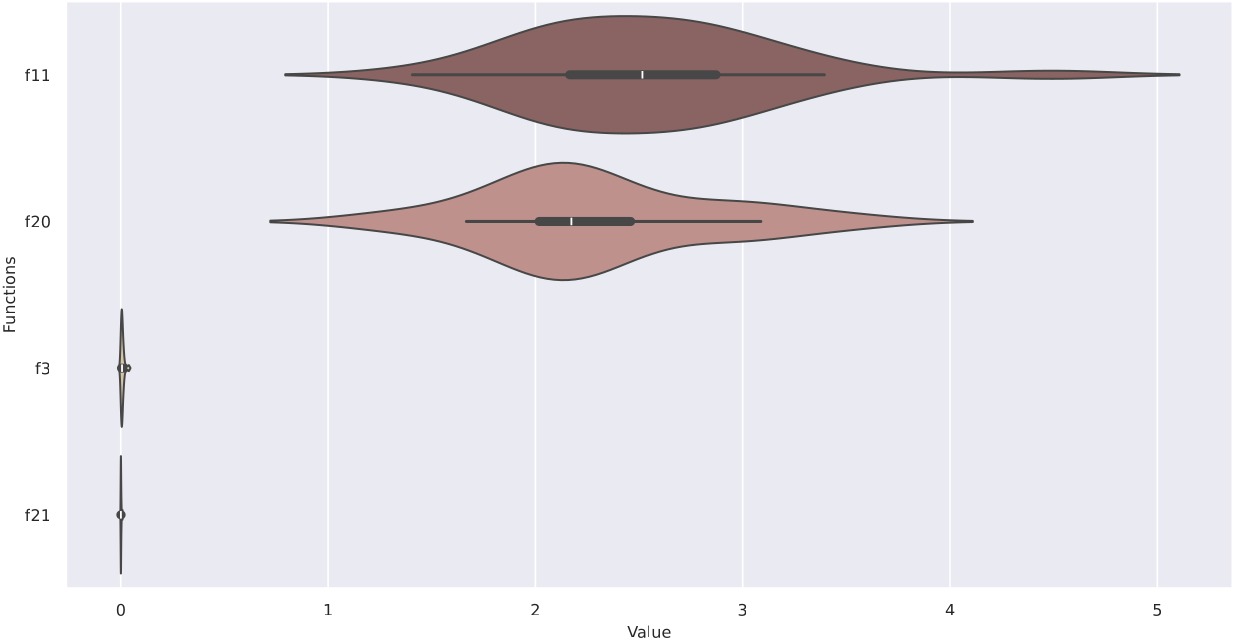}
		\caption{Uni-modal functions}
		\label{fig:violina}
	\end{subfigure}
	\hfill
	\begin{subfigure}{0.49\textwidth}
		\centering
		\includegraphics[width=0.99\linewidth]{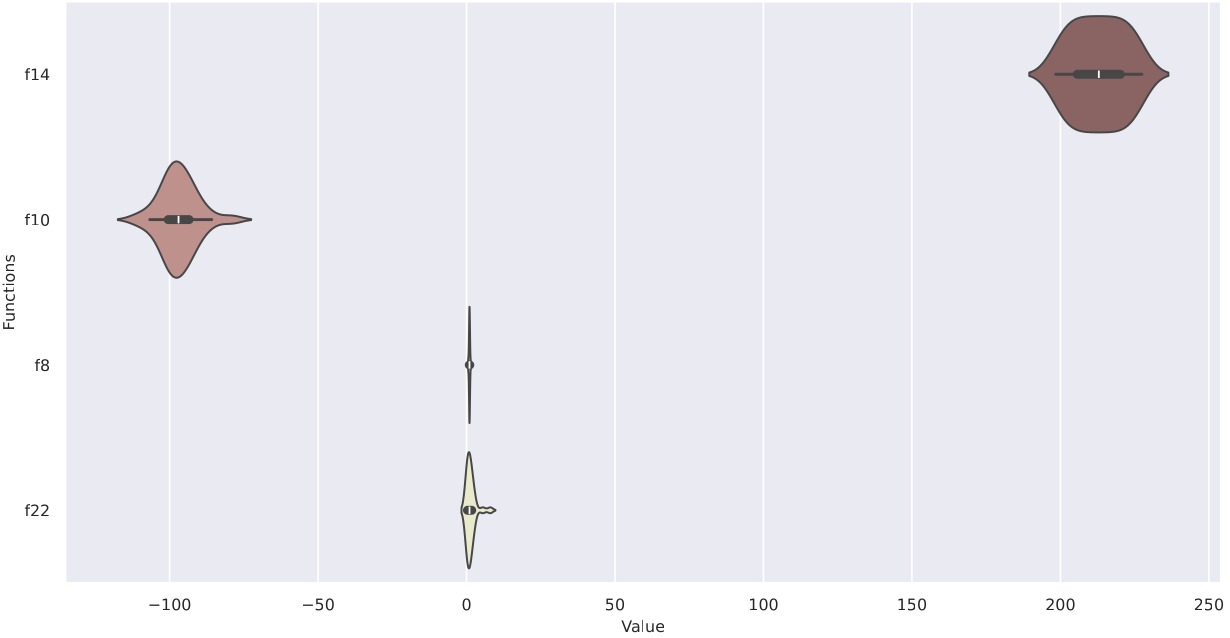}
		\caption{Multi-modal functions}
		\label{fig:violinb}
	\end{subfigure}
	\caption{Distribution of best fitness values achieved by IWSO across uni-modal and multi-modal separable and non-separable benchmark functions highlighting its optimization consistency and robustness} \label{fig:violin}
\end{figure*}

\subsubsection{Sensitivity Analysis}\label{sen}
Sensitivity analysis is conducted to investigate the impact of critical control parameters on the performance of metaheuristic optimization algorithms. In this process, one parameter is varied within a predefined range while maintaining the remaining parameters at their baseline values, thereby, isolating its effect on the algorithm’s behavior to test the robustness and adaptability of the algorithm to different parameter settings. The key parameters are analyzed for population size ($n$) with specific control parameters of the IWSO algorithm. These parameters include the maximum number of iterations ($T_{Max}$) reflecting the problem complexity and the  Matchmaker ($M$) factor to analyze the balancing of exploration-exploitation.  

Table \ref{tab:sena1} analyzes the sensitivity of the proposed IWSO algorithm concerning the iteration parameter ($T_{Max}$) for different values: 125, 250, 375, and 500 to showcase the performance of proposed algorithm in increasing order of problem complexity. These values are used to optimize benchmark test functions $f_1$-$f_{23}$. It is evident that increasing the number of iterations enhances the IWSO's ability to explore the search space with increasing problem complexity, leading to improved algorithm performance and reduced objective function values. 
\begin{table}[!htbp]
	\centering
	\caption{IWSO sensitivity analysis with respect to the control parameter $T_{Max}$ highlights its critical impact on convergence stability and overall optimization performance.}
	\label{tab:sena1}
	\resizebox{0.99\linewidth}{!}{
		\tiny
		\begin{tabular}{l|l|l|l|l}
			\hline
			\multirow{2}{*}{\textbf{Function}} & \multicolumn{4}{c}{\textbf{\# Problem Complexity ($T_{Max}$)}} \\
			\cline{2-5} &\textbf{125} &\textbf{250} &\textbf{375} &\textbf{500} \\ \hline 
			$f_1$ 	&	1.6792	&	1.0772	&1.65685 &	0.9349	\\ \hline
			$f_2$  	&	0.0014	&	0.0095	&0.05793 &	0.0084	\\ \hline
			$f_3$  	&	0.0000	&	0.0001	&0.00074 &	0.0000	\\ \hline
			$f_4$  	&	-853.4504	&	-876.9334	&-861.48668 &	-879.1763	\\ \hline
			$f_5$ 	&	9.3279	&	6.9405	&7.74318 &	4.2323	\\ \hline
			$f_6$  	&	0.998	&	0.998	&1.03113 &	0.998	\\ \hline
			$f_7$ 	&	0.1423	&	0.05	&0.17468 &	0.0593	\\ \hline
			$f_8$  	&	0.2207	&	0.1377	&0.24133 &	0.1233	\\ \hline
			$f_9$  	&	-5.9346	&	-6.2669	&-5.407202 &	-6.3729	\\ \hline
			$f_{10}$  	&	-115.1041	&	-117.0804	&-111.91533 &	-119.4707	\\ \hline
			$f_{11}$  	&	0.2827	&	0.2802	&0.60681 &	0.2754	\\ \hline
			$f_{12}$  	&	0.4569	&	0.2785	&0.52602 &	0.1928	\\ \hline
			$f_{13}$ 	&	0.0013	&	0.0022	&0.00458 &	0.0006	\\ \hline
			$f_{14}$  	&	95.0978	&	65.5245	&120.68067 &	47.0571	\\ \hline
			$f_{15}$  	&	98.1542	&	89.0611	&116.81714 &	68.5534	\\ \hline
			$f_{16}$ 	&	67.677	&	55.408	&66.14068 &	44.8396	\\ \hline
			$f_{17}$ 	&	0.2004	&	0.1999	&0.24075 &	0.1148	\\ \hline
			$f_{18}$ 	&	0.3618	&	0.2227	&0.36662 &	0.0941	\\ \hline
			$f_{19}$  	&	-29.8156	&	-29.8872	&-29.84187 &	-29.917	\\ \hline
			$f_{20}$  	&	0.4068	&	0.1773	&0.35729 &	0.1979	\\ \hline
			$f_{21}$ 	&	0.0000	&	0.0000	&9.82059 &	0.0000	\\ \hline
			$f_{22}$ 	&	0.0116	&	0.0632	&1.86202 &	0.1717	\\ \hline
			$f_{23}$ 	&	1.7906	&	1.9139	&3.15767 &	1.2499	\\ \hline
		\end{tabular}
	}
\end{table}


To assess the sensitivity of the proposed IWSO algorithm, four experimental cases as summarized in Table~\ref{tab:sena-cs}, are designed. These case studies systematically evaluate the algorithm’s performance by varying the degree of divergence (exploration) and convergence (exploitation) across different problem complexities, ranging from low-dimensional to high-dimensional search spaces. 
\begin{table}[!htbp]
	\centering
	\caption{Case studies for sensitivity analysis}
	\label{tab:sena-cs}
	\resizebox{0.99\linewidth}{!}{
		\tiny
		\begin{tabular}{l|l|l|l|l|l}
			\hline
			Case & $M_{max}$ &$M_{min}$ &Divergence  &Convergence &Search Space \\ \hline 
			C1 	&	1.2	& 0.3		& Low & Low	& \multirow{2}{*}{Low} 	\\ 
			\cline{1-5}
			C2 	&	1.4	& 0.2		& Moderate	 & Moderate	&   	\\ \hline
			C3 	&	1.8	& 0.05		& Moderate	 & Moderate	& \multirow{2}{*}{Complex} 	\\ 
			\cline{1-5} C4 	&	2.0	& 1.0		& High	 & High	&  	\\ \hline
	\end{tabular}}
\end{table}

Table \ref{tab:sena2} explains the sensitivity of the IWSO algorithm concerning the impact of the Matchmaker Factor to optimize benchmark test functions $f_1$-$f_{23}$. For simpler problems, the maximum Matchmaker Factor ($M_{\max}$) ranges from 1.2 to 2.0 to promote a broad search. Conversely, the minimum Matchmaker Factor ($M_{\min}$) ranges from 0.05 to 0.3 to ensure convergence. In the case of complex and high-dimensional searches, $M_{\max}$ is set between 1.8 and 2.0, while $M_{\min}$ is restricted to the range of 0.05 to 0.1 to facilitate extensive exploration. The experimental findings demonstrate that the IWSO consistently exhibits a strong capacity to converge towards high-quality solutions under diverse conditions considering different scenarios. This behavior highlights the robustness of its exploitation mechanism and validates its efficiency and adaptability across varying optimization scenarios. 
\begin{table}[!htbp]
	\centering
	\caption{IWSO sensitivity analysis with respect to parameter $M$ shows its key influence on convergence and solution quality.}
	\label{tab:sena2}
	\resizebox{0.99\linewidth}{!}{
		\tiny
		\begin{tabular}{l|l|l|l|l}
			\hline
			\multirow{2}{*}{\textbf{Function}} & \multicolumn{4}{c}{\textbf{Matchmaker Influence}}\\ \cline{2-5} 
			& \textbf{C1} &\textbf{C2} &\textbf{C3} &\textbf{C4} \\ \hline 
			$f_1$ 	&	2.6651	&3.04206 &	1.9823	&	3.8327 	\\ \hline
			$f_2$ 	&	0.0168	&0.25497 &	0.0104	&	0.0879	\\ \hline
			$f_3$ 	&	0.0002	&0.00634 &	0.0004	&	0.0003	\\ \hline
			$f_4$ 	&	-696.067	&-545.51653 &	-798.1557	&	-296.6815	\\ \hline
			$f_5$ 	&	30.6302	&53.87077 &	17.0752	&	271.3871	\\ \hline
			$f_6$ 	&	0.998	&3.54859 &	0.998	&	0.998	\\ \hline
			$f_7$ 	&	0.3171	&0.78622 &	0.2904	&	0.6696	\\ \hline
			$f_8$ 	&	0.8757	&0.99982 &	0.4603	&	2.1532	\\ \hline
			$f_9$ 	&	-5.8532	&-4.73421 &	-5.7026	&	-5.6194	\\ \hline
			$f_{10}$ 	&	-115.6974	&-96.76242 &	-110.6257	&	-97.866	\\ \hline
			$f_{11}$ 	&	1.779	&2.26606 &	1.1848	&	8.0909	\\ \hline
			$f_{12}$ 	&	1.8122	&2.63687 &	1.0041	&	10.1168	\\ \hline
			$f_{13}$ 	&	0.0236	&0.057228 &	0.0102	&	0.4507	\\ \hline
			$f_{14}$ 	&	156.1949	&201.14975 &	120.1052	&	173.8159	\\ \hline
			$f_{15}$ 	&	152.2805	&192.8358 &	134.5871	&	179.0121	\\ \hline
			$f_{16}$ 	&	220.5026	&304.70993 &	149.8461	&	1291.9722	\\ \hline
			$f_{17}$ 	&	0.2015	&0.38902 &	0.2816	&	0.4277	\\ \hline
			$f_{18}$ 	&	0.635	&2.99179 &	0.2598	&	2.2016	\\ \hline
			$f_{19}$ 	&	-29.3761	&-29.08378 &	-29.5122	&	-26.3885	\\ \hline
			$f_{20}$ 	&	1.8283	&2.02968 &	0.8344	&	7.8241	\\ \hline
			$f_{21}$ 	&	0	&0.00155 &	0	&	0.0002	\\ \hline
			$f_{22}$ 	&	0.1294	&1.64003 &	0.102	&	0.475	\\ \hline
			$f_{23}$ 	&	6.4863	&18.04481&	3.1425	&	12.6595	\\ \hline
		\end{tabular}
	}
\end{table}
\subsection{Comparative Analysis} \label{pfe}
This section presents a comprehensive comparative evaluation of the proposed IWSO algorithm against several state-of-the-art metaheuristic optimization techniques.

The lower standard deviations obtained by the IWSO algorithm, as shown in Fig. \ref{fig:functions_1_23}, demonstrate superior stability compared to widely used benchmark peer metaheuristic algorithms, including the GA~\cite{back1997handbook},  ABC~\cite{karaboga2007powerful},  PSO~\cite{kennedy1995particle}, ACO~\cite{dorigo1996ant}, DE~\cite{storn1997differential}, BHO~\cite{hatamlou2013black}, WOA~\cite{mirjalili2016whale}, HS~\cite{geem2001new}, SA~\cite{kirkpatrick1983optimization}, and CS~\cite{yang2009cuckoo}. IWSO achieves a significant standard deviation on 18 out of 23 functions, highlighting its capability to reliably identify optimal values. This performance results from its socio-inspired mechanisms, such as the matchmaker influence and elimination factor, which balance exploration and exploitation more effectively than other methods. Notably, IWSO’s near-zero deviations on functions $f_2$, $f_3$, $f_4$, $f_5$, $f_7$, $f_8$, $f_{11}$, $f_{12}$, $f_{13}$, $f_{16}$, $f_{17}$, $f_{18}$, $f_{19}$, $f_{20}$, $f_{21}$, $f_{22}$, and $f_{23}$ suggest a high level of precision in avoiding local optima, a common challenge for optimization algorithms. 

Fig. \ref{fig:com} depict a comparative performance analysis of prominent metaheuristic algorithms GA~\cite{back1997handbook}, DE~\cite{storn1997differential}, and the proposed IWSO on functions including $f_1$, $f_3$, $f_4$, $f_7$, $f_8$, $f_{17}$, $f_{21}$, and $f_{23}$, respectively. The experiments are conducted by varying the population size across the set \{10, 20, 30, 40, 50\} and evaluating the algorithms under four different iteration counts: 25, 50, 75, and 100. These three-dimensional surface plots provide a dynamic visualization of how each algorithm's fitness value evolves with respect to increasing population size and iteration count. The fitness values represent the best solutions obtained in each configuration, effectively reflecting the convergence characteristics and optimization capabilities of the algorithms. Notably, the plots demonstrate that the proposed IWSO consistently guides the search process towards lower prediction errors by efficiently navigating the search space and enhancing both exploration and exploitation. The observed performance confirms IWSO’s superior adaptability and precision in minimizing the objective function values under varying parameter settings, thereby establishing its effectiveness in building a highly accurate model.

\input{SD}

\input{res2}

\subsection{Computational Complexity Analysis}\label{com}
The \textit{computational complexity} of the proposed IWSO algorithm and State-of-the-Art metaheuristics is evaluated under worst-case, average-case, and best-case scenarios for single-objective optimization problems. Let $n$ be the population size, $D$ the problem dimensionality, and $T_{\max}$ the maximum number of iterations. Table~\ref{summary_prediction} presents a comparative analysis of asymptotic time and space complexities, offering a comprehensive assessment of algorithmic efficiency and scalability.
\begin{table}[!htbp]
	\centering
	\caption{Comparison of time and space complexity between IWSO and state-of-the-art metaheuristic algorithms.}
	\label{summary_prediction}
	\resizebox{0.48\textwidth}{!}{
		
		\begin{tabular}{l|c|c|c|c}
			\hline
			\textbf{Algorithm} & $\mathcal{O}$(Worst) & $\Theta$(Average) & $\Omega$(Best) & \textbf{Space Complexity} \\
			\hline
			GA  & $\mathcal{O}(T_{\max}  nD)$ & $\Theta(T_{\max}  nD)$ & $\Omega(nD)$ & $\mathcal{O}(nD)$ \\
			ABC  & $\mathcal{O}(T_{\max}n  D  )$ & $\Theta(n  D  T_{\max})$ & $\Omega(nD)$ & $\mathcal{O}(n  D)$ \\
			PSO  & $\mathcal{O}(T_{\max}  nD)$ & $\Theta(T_{\max}  nD)$ & $\Omega(nD)$ & $\mathcal{O}(nD)$ \\
			ACO  & $\mathcal{O}(T_{\max}  n^2D)$ & $\Theta(T_{\max}  n\log n  D)$ & $\Omega(nD)$ & $\mathcal{O}(n^2 + nD)$ \\
			DE  & $\mathcal{O}(T_{\max}  nD)$ & $\Theta(T_{\max}  nD)$ & $\Omega(nD)$ & $\mathcal{O}(nD)$ \\
			CS  & $\mathcal{O}(T_{\max}  nD)$ & $\Theta(T_{\max}  nD)$ & $\Omega(nD)$ & $\mathcal{O}(nD)$ \\
			BHO  & $\mathcal{O}(T_{\max}  nD)$ & $\Theta(T_{\max}  nD)$ & $\Omega(nD)$ & $\mathcal{O}(nD)$ \\
			SA  & $\mathcal{O}(T_{\max}  D)$ & $\Theta(T_{\max}  D)$ & $\Omega(D)$ & $\mathcal{O}(D)$ \\
			HS  & $\mathcal{O}(T_{\max}  nD)$ & $\Theta(T_{\max}  nD)$ & $\Omega(nD)$ & $\mathcal{O}(nD)$ \\
			WOA  & $\mathcal{O}(T_{\max}  nD)$ & $\Theta(T_{\max}  nD)$ & $\Omega(nD)$ & $\mathcal{O}(nD)$ \\
			\textbf{IWSO} & $\mathcal{O}(T_{\max}nD)$ & $\Theta(T_{\max}  nD)$ & $\Omega(nD)$ & $\mathcal{O}(nD)$ \\
			\hline
	\end{tabular}}
	
	\footnotesize{$n$: Size of population,  D: Dimension, $T_{max}$: Maximum iterations}
\end{table}
The proposed IWSO algorithm exhibits a worst-case time complexity of $\mathcal{O}(T_{\max}nD)$, with dynamic elimination and reinitialization contributing to adaptive population control. Its average-case complexity remains $\Theta(T_{\max}nD)$, comparable to standard evolutionary approaches, while the best-case scenario reduces to $\Omega(nD)$ when rapid convergence occurs. The space complexity is linear, i.e., $\mathcal{O}(nD)$, owing to its population-based structure and memory-efficient design.

Compared to existing metaheuristics, GA, PSO, DE, CS, HS, BHO, and WOA share similar linear time and space scaling, but often suffer from premature convergence when diversity degrades. SA offers reduced space complexity at $\mathcal{O}(D)$, but requires more iterations due to its single-solution search. In contrast, ACO incurs significantly higher computational overhead, reaching $\mathcal{O}(T_{\max}n^2D)$ with $\mathcal{O}(n^2+nD)$ memory requirements. The balanced exploration–exploitation mechanism of IWSO, supported by structured hierarchy and adaptive reinitialization, enables robust convergence, scalability, and improved resilience in dynamic and multi-objective optimization environments.

\subsection{Statistical Analysis}\label{sta}
The optimization performance is evaluated using four statistical measures: mean fitness, standard deviation, best fitness, and runtime. To account for stochastic variation inherent in metaheuristics, each benchmark function ($f_1$–$f_{23}$) is executed 30 independent times. The proposed IWSO algorithm is compared against ten established optimization techniques: GA~\cite{back1997handbook}, ABC~\cite{karaboga2007powerful}, PSO~\cite{kennedy1995particle}, ACO~\cite{dorigo1996ant}, DE~\cite{storn1997differential}, BHO~\cite{hatamlou2013black}, WOA~\cite{mirjalili2016whale}, HS~\cite{geem2001new}, SA~\cite{kirkpatrick1983optimization}, and CS~\cite{yang2009cuckoo}. The results in Table~\ref{tab:3} demonstrate that IWSO consistently achieves superior mean accuracy, competitive best values, and reduced computational cost across a diverse set of unimodal, multimodal, separable, and non-separable functions. 
\begin{table*}[!htbp]
	\centering
	\caption{Statistical comparison of IWSO and state-of-the-art algorithms across 23 benchmark functions using fitness and runtime metrics.}\label{tab:3}
	\resizebox{0.995\textwidth}{!}{
		\begin{tabular}{ccrrrrrrrrrrr}
		 \hline
			\multirow{2}{*}{\textbf{Function}} & \multirow{2}{*}{\textbf{Metrics}} & \multicolumn{11}{c}{\textbf{Algorithms}} 
			\\ \cline{3-13}
			&  & \textbf{GA} \cite{back1997handbook}& \textbf{ABC} \cite{karaboga2007powerful} & \textbf{PSO} \cite{kennedy1995particle} & \textbf{ACO} \cite{dorigo1996ant}& \textbf{DE} \cite{storn1997differential}& \textbf{BHO} \cite{hatamlou2013black}& \textbf{WOA} \cite{mirjalili2016whale}& \textbf{HS} \cite{geem2001new} & \textbf{SA} \cite{kirkpatrick1983optimization}& \textbf{CS} \cite{yang2009cuckoo} & \textbf{IWSO (Proposed)} \\ \hline
			\multirow{4}{*}{$f_1$} 	&	 mean  	&	20.7083	&	20.2263	&	7.7383	&	20.2263	&	20.0537	&	10.9485	&	0	&	20.1639	&	20.7083	&	20.3587	&	2.557	\\
			&	 std 	&	0.1821	&	0.17	&	2.4884	&	0.17	&	0.4196	&	1.4029	&	0.0001	&	0.1647	&	0.1821	&	0.1509	&	0.3586	\\
			&	 best  	&	20.1796	&	19.6979	&	3.9523	&	19.6979	&	18.6632	&	7.7527	&	0	&	19.6122	&	20.1796	&	19.8712	&	1.4883	\\
			&	 RT 	&	1.1312	&	1.3249	&	4.1707	&	1.3249	&	4.8826	&	2.7267	&	2.9264	&	2.0482	&	1.1312	&	3.4873	&	0.0697	\\ \hline
			\multirow{4}{*}{$f_2$} 	&	 mean  	&	13.7039	&	3.747	&	0	&	3.747	&	0.0027	&	0	&	0	&	7.4874	&	13.7039	&	5.5617	&	0.2054	\\
			&	 std 	&	8.3664	&	1.1543	&	0	&	1.1543	&	0.0022	&	0	&	0	&	5.3594	&	8.3664	&	1.8237	&	0.1548	\\
			&	 best  	&	0.0306	&	0.9476	&	0	&	0.9476	&	0.0002	&	0	&	0	&	0.0884	&	0.0306	&	1.8713	&	0.0236	\\
			&	 RT 	&	0.6394	&	0.8039	&	2.3966	&	0.8039	&	4.0501	&	1.6613	&	2.07	&	1.3546	&	0.6394	&	1.5328	&	0.0593	\\ \hline
			\multirow{4}{*}{$f_3$} 	&	 mean  	&	0.0045	&	0.2489	&	0	&	0.2489	&	0	&	0.0005	&	0.5097	&	0.0079	&	0.0045	&	0.9257	&	0.0058	\\
			&	 std 	&	0.0044	&	0.246	&	0	&	0.246	&	0	&	0.002	&	0.752	&	0.0106	&	0.0044	&	0.9207	&	0.0053	\\
			&	 best  	&	0.0001	&	0.0046	&	0	&	0.0046	&	0	&	0	&	0.0006	&	0.0001	&	0.0001	&	0.0161	&	0.0001	\\
			&	 RT 	&	0.4883	&	0.6408	&	1.9041	&	0.6408	&	3.6916	&	1.3342	&	1.7513	&	1.1695	&	0.4883	&	1.1302	&	0.0523	\\ \hline
			\multirow{4}{*}{$f_4$} 	&	 mean  	&	-123.4026	&	-124.2178	&	-290.7573	&	-124.2178	&	-146.5788	&	-281.35	&	-567.7844	&	-179.3566	&	-123.4026	&	-123.3274	&	-624.2057	\\
			&	 std 	&	17.4571	&	14.3598	&	80.1674	&	14.3598	&	20.6056	&	62.181	&	226.6232	&	19.6636	&	17.4571	&	14.0162	&	209.7289	\\
			&	 best  	&	-174.3431	&	-168.3274	&	-496.6506	&	-168.3274	&	-205.815	&	-457.558	&	-898.9782	&	-235.6576	&	-174.3431	&	-164.6177	&	-823.2724	\\
			&	 RT 	&	2.156	&	2.7107	&	6.1976	&	2.7107	&	6.1596	&	4.4637	&	4.3734	&	3.7121	&	2.156	&	5.9606	&	0.0861	\\ \hline
			\multirow{4}{*}{$f_5$} 	&	 mean  	&	1030.8418	&	977907.1418	&	1213.2005	&	977907.1418	&	891553.1489	&	2403.196	&	0.9515	&	1317758.5418	&	1030.8418	&	1182050.2541	&	33.5096	\\
			&	 std 	&	393.0941	&	164908.9197	&	6046.2603	&	164908.9197	&	317933.8707	&	1530.2204	&	0.0921	&	267623.2469	&	393.0941	&	216275.5152	&	16.3439	\\
			&	 best  	&	391.5788	&	564259.7862	&	30.559	&	564259.7862	&	294522.7219	&	414.3911	&	0.5063	&	669494.4332	&	391.5788	&	640378.6687	&	14.8179	\\
			&	 RT 	&	1.1927	&	1.4174	&	4.3032	&	1.4174	&	4.7113	&	2.5101	&	2.9486	&	2.2676	&	1.1927	&	2.6029	&	0.07	\\ \hline
			\multirow{4}{*}{$f_6$} 	&	 mean  	&	12.2574	&	3.1216	&	3.0369	&	3.1216	&	2.0132	&	4.4421	&	4.6147	&	3.1861	&	12.2574	&	6.455	&	2.0854	\\
			&	 std 	&	8.0781	&	1.9018	&	3.0539	&	1.9018	&	1.2574	&	3.2985	&	4.1357	&	2.4048	&	8.0781	&	4.1927	&	1.9913	\\
			&	 best  	&	0.998	&	0.9987	&	0.998	&	0.9987	&	0.998	&	0.998	&	0.998	&	0.998	&	0.998	&	1.0071	&	0.998	\\
			&	 RT 	&	1.9773	&	2.407	&	6.1687	&	2.407	&	5.8802	&	4.1793	&	4.4497	&	3.9168	&	1.9773	&	5.0682	&	0.0908	\\ \hline
			\multirow{4}{*}{$f_7$} 	&	 mean  	&	721.7954	&	446.3362	&	4.2381	&	446.3362	&	376.1086	&	15.8674	&	0.0284	&	612.3413	&	721.7954	&	492.9028	&	0.6477	\\
			&	 std 	&	129.4955	&	39.7943	&	6.1136	&	39.7943	&	68.9089	&	5.3126	&	0.1444	&	60.7358	&	129.4955	&	43.9694	&	0.2689	\\
			&	 best  	&	439.1735	&	338.8543	&	1.3705	&	338.8543	&	216.881	&	6.7424	&	0	&	456.3183	&	439.1735	&	378.4734	&	0.1522	\\
			&	 RT 	&	1.147	&	1.4865	&	4.835	&	1.4865	&	4.7282	&	2.5135	&	2.9132	&	2.8622	&	1.147	&	3.1344	&	0.0701	\\ \hline
			\multirow{4}{*}{$f_8$} 	&	 mean  	&	110.507	&	143.4114	&	14.1252	&	143.4114	&	181.8608	&	14.2427	&	0.631	&	129.3638	&	110.507	&	162.3421	&	0.6885	\\
			&	 std 	&	25.6171	&	15.8367	&	6.5718	&	15.8367	&	37.2538	&	6.4585	&	0.4206	&	19.696	&	25.6171	&	19.1431	&	0.1553	\\
			&	 best  	&	54.9753	&	102.4569	&	3.3887	&	102.4569	&	96.6541	&	4.3321	&	0.056	&	82.3956	&	54.9753	&	109.2777	&	0.3673	\\
			&	 RT 	&	1.7697	&	2.1923	&	5.9664	&	2.1923	&	5.6738	&	3.6769	&	4.0984	&	3.4745	&	1.7697	&	5.2506	&	0.0848	\\ \hline
			\multirow{4}{*}{$f_9$} 	&	 mean  	&	-4.0172	&	-4.1838	&	-6.7477	&	-4.1838	&	-4.2286	&	-7.0509	&	-4.4631	&	-4.4528	&	-4.0172	&	-4.1041	&	-4.9512	\\
			&	 std 	&	0.4279	&	0.3769	&	0.8646	&	0.3769	&	0.3913	&	0.7001	&	0.5957	&	0.4066	&	0.4279	&	0.3679	&	0.4776	\\
			&	 best  	&	-5.2191	&	-5.2336	&	-8.5855	&	-5.2336	&	-5.3375	&	-8.5138	&	-5.9559	&	-5.6858	&	-5.2191	&	-5.1893	&	-6.0178	\\
			&	 RT 	&	1.2456	&	1.5233	&	3.7144	&	1.5233	&	4.9346	&	2.8437	&	3.101	&	2.445	&	1.2456	&	3.4596	&	0.0861	\\ \hline
			\multirow{4}{*}{$f_{10}$} 	&	 mean  	&	-82.683	&	-57.3646	&	-118.0626	&	-57.3646	&	-110.002	&	-47.3856	&	-110.9544	&	-56.6839	&	-82.683	&	-55.1089	&	-99.7537	\\
			&	 std 	&	4.808	&	2.7577	&	1.8353	&	2.7577	&	3.6606	&	4.2672	&	8.5696	&	3.4426	&	4.808	&	2.8816	&	8.2421	\\
			&	 best  	&	-93.7481	&	-65.1618	&	-119.6102	&	-65.1618	&	-117.6036	&	-58.1113	&	-119.5724	&	-66.3016	&	-93.7481	&	-63.2315	&	-112.6916	\\
			&	 RT 	&	0.8422	&	1.0976	&	2.8618	&	1.0976	&	4.1679	&	1.8411	&	2.2226	&	1.9295	&	0.8422	&	1.9795	&	0.0621	\\ \hline
			\multirow{4}{*}{$f_{11}$} 	&	 mean  	&	10.7866	&	123.2576	&	1.42	&	123.2576	&	104.0645	&	4.1265	&	0.0056	&	128.4301	&	10.7866	&	136.8053	&	1.6559	\\
			&	 std 	&	1.761	&	11.0898	&	2.0468	&	11.0898	&	18.7823	&	1.4189	&	0.1008	&	15.6352	&	1.761	&	12.5737	&	0.3538	\\
			&	 best  	&	6.9111	&	93.4049	&	0.3472	&	93.4049	&	60.3688	&	1.6236	&	-0.2215	&	88.2859	&	6.9111	&	100.3732	&	1.0804	\\
			&	 RT 	&	0.8523	&	1.0279	&	3.0648	&	1.0279	&	4.2197	&	1.8428	&	2.2306	&	2.1818	&	0.8523	&	1.8832	&	0.0607	\\ \hline
			\multirow{4}{*}{$f_{12}$} 	&	 mean  	&	10.7085	&	123.2012	&	1.3772	&	123.2012	&	104.3035	&	4.0892	&	0.0041	&	129.1945	&	10.7085	&	137.1271	&	1.4863	\\
			&	 std 	&	1.8228	&	11.0708	&	1.7321	&	11.0708	&	18.557	&	1.397	&	0.0983	&	15.6883	&	1.8228	&	12.6742	&	0.4934	\\
			&	 best  	&	6.8615	&	92.7539	&	0.3427	&	92.7539	&	59.506	&	1.7012	&	-0.2065	&	88.4148	&	6.8615	&	102.6607	&	0.5922	\\
			&	 RT 	&	0.8333	&	1.0686	&	3.0684	&	1.0686	&	4.2701	&	1.8342	&	2.229	&	2.1427	&	0.8333	&	1.8814	&	0.0604	\\ \hline
			\multirow{4}{*}{$f_{13}$} 	&	 mean  	&	2.2369	&	0.4436	&	0	&	0.4436	&	1.0456	&	0.0005	&	0	&	0.1764	&	2.2369	&	0.6112	&	0.0207	\\
			&	 std 	&	0.6265	&	0.1273	&	0	&	0.1273	&	0.3537	&	0.0005	&	0	&	0.0842	&	0.6265	&	0.1714	&	0.0119	\\
			&	 best  	&	0.7344	&	0.157	&	0	&	0.157	&	0.2859	&	0	&	0	&	0.0393	&	0.7344	&	0.2263	&	0.0025	\\
			&	 RT 	&	0.9242	&	1.2975	&	3.2921	&	1.2975	&	4.1567	&	2.0167	&	2.4299	&	2.3353	&	0.9242	&	2.1414	&	0.063	\\ \hline
			\multirow{4}{*}{$f_{14}$} 	&	 mean  	&	371.7657	&	373.8469	&	124.9974	&	373.8469	&	372.6694	&	119.1805	&	0.0016	&	323.4922	&	371.7657	&	390.8479	&	181.0416	\\
			&	 std 	&	36.1681	&	17.2552	&	27.5995	&	17.2552	&	23.8369	&	22.337	&	0.0109	&	21.197	&	36.1681	&	18.8892	&	18.6921	\\
			&	 best  	&	290.8499	&	323.6791	&	71.7578	&	323.6791	&	312.241	&	73.8718	&	0	&	265.7455	&	290.8499	&	339.5	&	147.6367	\\
			&	 RT 	&	0.9956	&	1.2763	&	3.3543	&	1.2763	&	4.5514	&	2.1786	&	2.5059	&	2.0911	&	0.9956	&	2.1848	&	0.0665	\\ \hline
			\multirow{4}{*}{$f_{15}$} 	&	 mean  	&	371.7113	&	372.5696	&	126.1918	&	372.5696	&	372.9791	&	120.091	&	0.0315	&	322.5109	&	371.7113	&	390.8588	&	177.4875	\\
			&	 std 	&	39.8777	&	17.5954	&	27.8992	&	17.5954	&	24.2697	&	22.9113	&	0.2207	&	21.7293	&	39.8777	&	19.0521	&	20.3809	\\
			&	 best  	&	283.2971	&	322.7659	&	69.3772	&	322.7659	&	304.8076	&	73.7321	&	0	&	265.969	&	283.2971	&	337.7359	&	124.1226	\\
			&	 RT 	&	0.9755	&	1.2247	&	3.3053	&	1.2247	&	4.4314	&	2.1551	&	2.5046	&	2.0784	&	0.9755	&	2.2465	&	0.0661	\\ \hline
			\multirow{4}{*}{$f_{16}$} 	&	 mean  	&	2995.5657	&	723019.5528	&	11265.1478	&	723019.5528	&	392654.4416	&	15900.0433	&	27.1143	&	1100753.3991	&	2995.5657	&	941511.4551	&	163.1935	\\
			&	 std 	&	1041.6904	&	158534.6916	&	22942.0856	&	158534.6916	&	121223.3502	&	8369.3118	&	5.6278	&	291815.1596	&	1041.6904	&	204872.0837	&	42.7259	\\
			&	 best  	&	1407.6476	&	367462.8944	&	195.6143	&	367462.8944	&	158694.2723	&	3493.2787	&	2.8573	&	470482.0796	&	1407.6476	&	477450.6054	&	87.0321	\\
			&	 RT 	&	1.0994	&	1.6986	&	4.006	&	1.6986	&	4.6629	&	2.3693	&	2.8031	&	2.334	&	1.0994	&	2.6478	&	0.0674	\\ \hline
			\multirow{4}{*}{$f_{17}$} 	&	 mean  	&	29.5951	&	22.9271	&	5.3647	&	22.9271	&	21.3432	&	6.5068	&	0.1096	&	26.0982	&	29.5951	&	24.0047	&	0.3917	\\
			&	 std 	&	2.5449	&	1.046	&	1.4956	&	1.046	&	1.9478	&	1.0063	&	0.0745	&	1.3606	&	2.5449	&	1.0367	&	0.0895	\\
			&	 best  	&	23.1355	&	19.8502	&	2.9144	&	19.8502	&	16.5326	&	4.4482	&	0	&	22.6499	&	23.1355	&	20.7593	&	0.2133	\\
			&	 RT 	&	0.7016	&	1.3427	&	12.563	&	1.3427	&	4.2829	&	1.6509	&	2.0497	&	1.6297	&	0.7016	&	1.5608	&	0.0587	\\ \hline
			\multirow{4}{*}{$f_{18}$} 	&	 mean  	&	7472.5441	&	9025.825	&	5128.4046	&	9025.825	&	8469.7005	&	8247.5784	&	525.9109	&	10317.8363	&	7472.5441	&	9392.9262	&	1.9797	\\
			&	 std 	&	861.4514	&	347.3494	&	705.9686	&	347.3494	&	411.0457	&	592.8784	&	936.6911	&	507.4466	&	861.4514	&	372.5666	&	1.4654	\\
			&	 best  	&	5635.2214	&	8005.0836	&	3502.9394	&	8005.0836	&	7412.9351	&	6834.8532	&	0.0008	&	8913.437	&	5635.2214	&	8310.4392	&	0.5515	\\
			&	 RT 	&	0.825	&	1.6873	&	3.1534	&	1.6873	&	4.2161	&	1.9178	&	2.3316	&	2.0637	&	0.825	&	2.1386	&	0.0619	\\ \hline
			\multirow{4}{*}{$f_{19}$} 	&	 mean  	&	-18.9561	&	-12.7934	&	-25.1704	&	-12.7934	&	-15.8938	&	-21.4609	&	-29.9184	&	-13.1462	&	-18.9561	&	-11.4042	&	-29.4721	\\
			&	 std 	&	2.4725	&	1.2007	&	2.04	&	1.2007	&	1.5855	&	1.7725	&	0.1176	&	1.6908	&	2.4725	&	1.3024	&	0.1365	\\
			&	 best  	&	-24.1167	&	-16.0625	&	-29.14	&	-16.0625	&	-19.7347	&	-25.0593	&	-29.9987	&	-17.611	&	-24.1167	&	-14.8399	&	-29.7224	\\
			&	 RT 	&	0.6882	&	1.2724	&	2.7437	&	1.2724	&	4.0293	&	1.6078	&	1.9743	&	1.6669	&	0.6882	&	1.594	&	0.0611	\\ \hline
			\multirow{4}{*}{$f_{20}$} 	&	 mean  	&	10.9816	&	129.7315	&	0.9232	&	129.7315	&	109.3304	&	4.1138	&	0	&	137.0986	&	10.9816	&	143.3375	&	1.2672	\\
			&	 std 	&	1.7671	&	11.404	&	1.7931	&	11.404	&	19.7661	&	1.4213	&	0	&	16.0695	&	1.7671	&	12.4903	&	0.3739	\\
			&	 best  	&	7.2049	&	98.1594	&	0.1213	&	98.1594	&	66.2345	&	1.6066	&	0	&	97.1343	&	7.2049	&	109.0634	&	0.6065	\\
			&	 RT 	&	0.6812	&	1.2289	&	2.692	&	1.2289	&	4.1733	&	1.5693	&	1.9421	&	1.6177	&	0.6812	&	1.5437	&	0.0583	\\ \hline
			\multirow{4}{*}{$f_{21}$} 	&	 mean  	&	0.03	&	0.027	&	0.0008	&	0.027	&	0	&	0.0001	&	0.0601	&	0.0011	&	0.03	&	0.102	&	0.0008	\\
			&	 std 	&	0.0681	&	0.026	&	0.0056	&	0.026	&	0	&	0.0009	&	0.119	&	0.0015	&	0.0681	&	0.0904	&	0.0007	\\
			&	 best  	&	0	&	0.0007	&	0	&	0.0007	&	0	&	0	&	0	&	0	&	0	&	0.0024	&	0.0001	\\
			&	 RT 	&	0.5263	&	1.0257	&	1.9721	&	1.0257	&	3.8308	&	1.3808	&	1.7889	&	1.3056	&	0.5263	&	1.0278	&	0.0535	\\ \hline
			\multirow{4}{*}{$f_{22}$} 	&	 mean  	&	0.0602	&	0.0393	&	37.5178	&	0.0393	&	43.3762	&	0.0338	&	6.3562	&	0.0399	&	0.0602	&	0.0388	&	1.3246	\\
			&	 std 	&	0.0585	&	0.0383	&	29.8536	&	0.0383	&	37.0923	&	0.0321	&	11.8082	&	0.0381	&	0.0585	&	0.0394	&	1.0455	\\
			&	 best  	&	0.0008	&	0.0009	&	1.1519	&	0.0009	&	1.2858	&	0.0007	&	0.0002	&	0.0008	&	0.0008	&	0.0008	&	0.0255	\\
			&	 RT 	&	1.0176	&	1.8978	&	3.1271	&	1.8978	&	4.6127	&	2.2161	&	2.6571	&	2.1629	&	1.0176	&	2.1947	&	0.0665	\\ \hline
			\multirow{4}{*}{$f_{23}$} 	&	 mean  	&	236.8709	&	95490.488	&	407.442	&	95490.488	&	572.6001	&	165472.6289	&	487.6583	&	441106651.4365	&	236.8709	&	8253718.3912	&	22.9432	\\
			&	 std 	&	64.6311	&	408664.3156	&	135.1243	&	408664.3156	&	88.369	&	965990.1256	&	140.4331	&	915307155.3103	&	64.6311	&	24844467.4202	&	22.337	\\
			&	 best  	&	117.4776	&	367.611	&	164.3393	&	367.611	&	368.3973	&	92.7639	&	94.7097	&	362.9299	&	117.4776	&	421.8473	&	5.0756	\\
			&	 RT 	&	1.4231	&	2.1523	&	3.6997	&	2.1523	&	4.6493	&	2.4269	&	2.8384	&	2.4691	&	1.4231	&	2.392	&	0.0672	\\
			\hline
	\end{tabular}}
\end{table*}
The comparative analysis shows that IWSO offers strong robustness, as reflected by low standard deviation across multiple runs and rapid convergence without premature stagnation. Its hybrid elimination and matchmaker-driven adaptation facilitates an effective balance between exploration and exploitation, enabling reliable convergence even on complex, noisy, or high-dimensional landscapes. Overall, the results confirm that IWSO provides an efficient, scalable, and highly competitive optimization strategy, outperforming classical and modern metaheuristics in both stability and solution quality.

\section{Conclusion} \label{conclusion}

This work introduced the IWSO algorithm, a socially inspired metaheuristic grounded in the collaborative and hierarchical decision logic of traditional Indian wedding matchmaking. Unlike nature- or physics-based optimizers, IWSO leverages mechanisms such as adaptive elimination, guided reinitialization, and matchmaker-driven influence to sustain population diversity, regulate exploration–exploitation balance, and avoid premature convergence with minimal parameter dependency. Experimental results across a broad set of benchmark functions show that IWSO consistently surpasses classical optimizers including GA, PSO, DE, and ACO in convergence rate, solution quality, and stability, especially in high-dimensional and multimodal optimization scenarios. Analysis confirms high computational efficiency and strong scalability, supporting its use in large-scale and dynamic optimization problems.

Future research will focus on developing hybrid extensions by incorporating reinforcement learning and adaptive local refinement to further strengthen global convergence characteristics. Additionally, extending IWSO to multi-objective, distributed, and time-varying optimization settings will broaden its applicability to practical domains such as cloud resource orchestration, industrial scheduling, and intelligent cyber–physical infrastructures.

\bibliographystyle{IEEEtran}
\bibliography{bibfile1}

\end{document}

%% file: Ematch.tex
\begin{figure*}[!htbp]
		\begin{tikzpicture}
			\begin{axis}[
				width=0.99\textwidth,
				height=5cm,
				xlabel=\scriptsize Iterations,
				ylabel=\scriptsize $E_{match}$,
				xmin=-2, xmax=102,
				ymin=-8, ymax=27,
				yticklabel style = {font=\scriptsize},
				xticklabel style = {font=\scriptsize},
				xtick={1,10,20,30,40,50,60,70,80,90,100},   
				ytick={0,5,10,15,20,25},grid=both,
				legend style={at={(0.00,0.95)},anchor=north west,legend cell align=left,legend columns=8},
				]
				\addplot[Apricot] table[x=Epochs,y=f1_E] {ematch.dat}; \addlegendentry{\scriptsize $f_1$} 
                \addplot[Aquamarine] table[x=Epochs,y=f2_E] {ematch.dat}; \addlegendentry{\scriptsize $f_2$} 
				\addplot[Bittersweet] table[x=Epochs,y=f3_E] {ematch.dat}; \addlegendentry{\scriptsize $f_3$} 
                \addplot[Black] table[x=Epochs,y=f4_E] {ematch.dat}; \addlegendentry{\scriptsize $f_4$} 
				\addplot[Blue] table[x=Epochs,y=f5_E] {ematch.dat}; \addlegendentry{\scriptsize $f_5$} 
                \addplot[BlueGreen] table[x=Epochs,y=f6_E] {ematch.dat}; \addlegendentry{\scriptsize $f_6$} 
                
				\addplot[BlueViolet] table[x=Epochs,y=f7_E] {ematch.dat}; \addlegendentry{\scriptsize $f_7$} 
                \addplot[BrickRed] table[x=Epochs,y=f8_E] {ematch.dat}; \addlegendentry{\scriptsize $f_8$} 
				\addplot[Brown] table[x=Epochs,y=f9_E] {ematch.dat}; \addlegendentry{\scriptsize $f_9$} 
                \addplot[BurntOrange] table[x=Epochs,y=f10_E] {ematch.dat}; \addlegendentry{\scriptsize $f_{10}$} 
				\addplot[CadetBlue] table[x=Epochs,y=f11_E] {ematch.dat}; \addlegendentry{\scriptsize $f_{11}$} 
                \addplot[CarnationPink] table[x=Epochs,y=f12_E] {ematch.dat}; \addlegendentry{\scriptsize $f_{12}$} 
				\addplot[Cerulean] table[x=Epochs,y=f13_E] {ematch.dat}; \addlegendentry{\scriptsize $f_{13}$} 
                \addplot[CornflowerBlue] table[x=Epochs,y=f14_E] {ematch.dat}; \addlegendentry{\scriptsize $f_{14}$} 
				\addplot[Cyan] table[x=Epochs,y=f15_E] {ematch.dat}; \addlegendentry{\scriptsize $f_{15}$} 
                \addplot[Dandelion] table[x=Epochs,y=f16_E] {ematch.dat}; \addlegendentry{\scriptsize $f_{16}$} 
				\addplot[DarkOrchid] table[x=Epochs,y=f17_E] {ematch.dat}; \addlegendentry{\scriptsize $f_{17}$} 
                \addplot[Emerald] table[x=Epochs,y=f18_E] {ematch.dat}; \addlegendentry{\scriptsize $f_{18}$} 
                
				\addplot[ForestGreen] table[x=Epochs,y=f19_E] {ematch.dat}; \addlegendentry{\scriptsize $f_{19}$} 
                \addplot[Fuchsia] table[x=Epochs,y=f20_E] {ematch.dat}; \addlegendentry{\scriptsize $f_{20}$} 
				\addplot[Goldenrod] table[x=Epochs,y=f21_E] {ematch.dat}; \addlegendentry{\scriptsize $f_{21}$} 
                \addplot[Gray] table[x=Epochs,y=f22_E] {ematch.dat}; \addlegendentry{\scriptsize $f_{22}$} 
				\addplot[Green] table[x=Epochs,y=f23_E] {ematch.dat}; \addlegendentry{\scriptsize $f_{23}$} 
			\end{axis}
		\end{tikzpicture}
        \caption{Evolution of the Expected Match ($E_{Match}$) Threshold in IWSO: Parameter dynamics across 100 iterations for 23 benchmark functions highlighting exploration behavior and adaptation trends}
\label{fig:cp}
    \end{figure*}

%% file: con-div11.tex
\begin{figure*}[!htbp]
	\centering
	\begin{subfigure}{0.32\textwidth}
		\centering
		\begin{tikzpicture}
			\begin{axis}[
				width=1.1\textwidth,
				height=4.75cm,
				xlabel=\scriptsize Iterations,
				ylabel=\scriptsize Fitness value,
				xmin=-2, xmax=52,
				ymin=20, ymax=120,
				yticklabel style = {font=\scriptsize},
				xticklabel style = {font=\scriptsize},
				xtick={0,10,20,30,40,50},   
				ytick={0,20,40,60,80,100},grid=both,
				legend style={at={(1.32,1.2)},anchor=north west,cells={anchor=west},legend cell align=left,legend columns=2},
				]
				\addplot[mark=halfcircle*,blue,
				mark options={scale=1.5,fill=blue}] table[x=label,y=Diver-f1] {foo1.dat}; \addlegendentry{\scriptsize Divergence} 
				\addplot[mark=halfcircle*,magenta,
				mark options={scale=1.5,fill=red}] table[x=label,y=Conver-f1] {foo1.dat}; \addlegendentry{\scriptsize Convergence} 
			\end{axis}
		\end{tikzpicture}
		\caption{with $f_1$}\label{fig:mp1}
	\end{subfigure} \hfill
    \begin{subfigure}{0.32\textwidth}
		\centering
		\begin{tikzpicture}
			\begin{axis}[
				width=1.1\textwidth,
				height=4.75cm,
				xlabel=\scriptsize Iterations,
				ylabel=\scriptsize Fitness value,
				xmin=-2, xmax=52,
				ymin=6, ymax=25,
				yticklabel style = {font=\scriptsize},
				xticklabel style = {font=\scriptsize},
				xtick={0,10,20,30,40,50},   
				ytick={6,12,18,24},grid=both,
				]
				\addplot[mark=halfcircle*,blue,
				mark options={scale=1.5,fill=blue}] table[x=label,y=Diver-f2] {foo1.dat}; 
				\addplot[mark=halfcircle*,magenta,
				mark options={scale=1.5,fill=red}] table[x=label,y=Conver-f2] {foo1.dat}; 
			\end{axis}
		\end{tikzpicture}
		\caption{with $f_2$}\label{fig:mp2}
	\end{subfigure} \hfill
	\begin{subfigure}{0.32\textwidth}
		\centering
		\begin{tikzpicture}
			\begin{axis}[
				width=1.1\textwidth,
				height=4.75cm,
				xlabel=\scriptsize Iterations,
				ylabel=\scriptsize Fitness value,
				xmin=-2, xmax=52,
				ymin=1.5, ymax=9,
				yticklabel style = {font=\scriptsize},
				xticklabel style = {font=\scriptsize},
				xtick={0,10,20,30,40,50},   
				ytick={2,4,6,8},grid=both,
				]
				\addplot[mark=halfcircle*,blue,
				mark options={scale=1.5,fill=blue}] table[x=label,y=Diver-f3] {foo1.dat}; 
				\addplot[mark=halfcircle*,magenta,
				mark options={scale=1.5,fill=red}] table[x=label,y=Conver-f3] {foo1.dat}; 
			\end{axis}
		\end{tikzpicture}
		\caption{with $f_3$}\label{fig:mp3}
	\end{subfigure} \\
		\begin{subfigure}{0.32\textwidth}
		\centering
		\begin{tikzpicture}
			\begin{axis}[
				width=1.1\textwidth,
				height=4.75cm,
				xlabel=\scriptsize Iterations,
				ylabel=\scriptsize Fitness value,
				xmin=-2, xmax=52,
				ymin=0, ymax=35,
				yticklabel style = {font=\scriptsize},
				xticklabel style = {font=\scriptsize},
				xtick={0,10,20,30,40,50},   
				ytick={8,16,24,32},grid=both,
				legend style={at={(1.0,1.2)},anchor=north west,cells={anchor=west},legend cell align=left,legend columns=2},
				]
				\addplot[mark=halfcircle*,blue,
				mark options={scale=1.5,fill=blue}] table[x=label,y=Diver-f4] {foo1.dat};
				\addplot[mark=halfcircle*,magenta,
				mark options={scale=1.5,fill=red}] table[x=label,y=Conver-f4] {foo1.dat};
			\end{axis}
		\end{tikzpicture}
		\caption{with $f_4$}\label{fig:mp4}
	\end{subfigure} \hfill\begin{subfigure}{0.32\textwidth}
		\centering
		\begin{tikzpicture}
			\begin{axis}[
				width=1.1\textwidth,
				height=4.75cm,
				xlabel=\scriptsize Iterations,
				ylabel=\scriptsize Fitness value,
				xmin=-2, xmax=52,
				ymin=6, ymax=33,
				yticklabel style = {font=\scriptsize},
				xticklabel style = {font=\scriptsize},
				xtick={0,10,20,30,40,50},   
				ytick={8,16,24,32},grid=both,
				]
				\addplot[mark=halfcircle*,blue,
				mark options={scale=1.5,fill=blue}] table[x=label,y=Diver-f5] {foo1.dat}; 
				\addplot[mark=halfcircle*,magenta,
				mark options={scale=1.5,fill=red}] table[x=label,y=Conver-f5] {foo1.dat}; 
			\end{axis}
		\end{tikzpicture}
		\caption{with $f_5$}\label{fig:mp5}
	\end{subfigure} \hfill
	\begin{subfigure}{0.32\textwidth}
		\centering
		\begin{tikzpicture}
			\begin{axis}[
				width=1.1\textwidth,
				height=4.75cm,
				xlabel=\scriptsize Iterations,
				ylabel=\scriptsize Fitness value,
				xmin=-2, xmax=52,
				ymin=17, ymax=52,
				yticklabel style = {font=\scriptsize},
				xticklabel style = {font=\scriptsize},
				xtick={0,10,20,30,40,50},   
				ytick={20,30,40,50},grid=both,
				]
				\addplot[mark=halfcircle*,blue,
				mark options={scale=1.5,fill=blue}] table[x=label,y=Diver-f6] {foo1.dat}; 
				\addplot[mark=halfcircle*,magenta,
				mark options={scale=1.5,fill=red}] table[x=label,y=Conver-f6] {foo1.dat}; 
			\end{axis}
		\end{tikzpicture}
		\caption{with $f_6$}\label{fig:mp6}
	\end{subfigure} \\
		\begin{subfigure}{0.32\textwidth}
		\centering
		\begin{tikzpicture}
			\begin{axis}[
				width=1.1\textwidth,
				height=4.75cm,
				xlabel=\scriptsize Iterations,
				ylabel=\scriptsize Fitness value,
				xmin=-2, xmax=52,
				ymin=500, ymax=1700,
				yticklabel style = {font=\scriptsize},
				xticklabel style = {font=\scriptsize},
				xtick={0,10,20,30,40,50},   
				ytick={0,600,900,1200,1500},grid=both,
				legend style={at={(1.0,1.2)},anchor=north west,cells={anchor=west},legend cell align=left,legend columns=2},
				]
				\addplot[mark=halfcircle*,blue,
				mark options={scale=1.5,fill=blue}] table[x=label,y=Diver-f7] {foo1.dat};
				\addplot[mark=halfcircle*,magenta,
				mark options={scale=1.5,fill=red}] table[x=label,y=Conver-f7] {foo1.dat};
			\end{axis}
		\end{tikzpicture}
		\caption{with $f_{7}$}\label{fig:mp7}
	\end{subfigure} \hfill\begin{subfigure}{0.32\textwidth}
		\centering
		\begin{tikzpicture}
			\begin{axis}[
				width=1.1\textwidth,
				height=4.75cm,
				xlabel=\scriptsize Iterations,
				ylabel=\scriptsize Fitness value,
				xmin=-2, xmax=52,
				ymin=5, ymax=33,
				yticklabel style = {font=\scriptsize},
				xticklabel style = {font=\scriptsize},
				xtick={0,10,20,30,40,50},   
				ytick={8,16,24,32},grid=both,
				]
				\addplot[mark=halfcircle*,blue,
				mark options={scale=1.5,fill=blue}] table[x=label,y=Diver-f8] {foo1.dat}; 
				\addplot[mark=halfcircle*,magenta,
				mark options={scale=1.5,fill=red}] table[x=label,y=Conver-f8] {foo1.dat}; 
			\end{axis}
		\end{tikzpicture}
		\caption{with $f_{8}$}\label{fig:mp8}
	\end{subfigure} \hfill
	\begin{subfigure}{0.32\textwidth}
		\centering
		\begin{tikzpicture}
			\begin{axis}[
				width=1.1\textwidth,
				height=4.75cm,
				xlabel=\scriptsize Iterations,
				ylabel=\scriptsize Fitness value,
				xmin=-2, xmax=52,
				ymin=2, ymax=5.5,
				yticklabel style = {font=\scriptsize},
				xticklabel style = {font=\scriptsize},
				xtick={0,10,20,30,40,50},   
				ytick={2,3,4,5},grid=both,
				]
				\addplot[mark=halfcircle*,blue,
				mark options={scale=1.5,fill=blue}] table[x=label,y=Diver-f9] {foo1.dat}; 
				\addplot[mark=halfcircle*,magenta,
				mark options={scale=1.5,fill=red}] table[x=label,y=Conver-f9] {foo1.dat}; 
			\end{axis}
		\end{tikzpicture}
		\caption{with $f_{9}$}\label{fig:mp9}
	\end{subfigure} \\
		\begin{subfigure}{0.32\textwidth}
		\centering
		\begin{tikzpicture}
			\begin{axis}[
				width=1.1\textwidth,
				height=4.75cm,
				xlabel=\scriptsize Iterations,
				ylabel=\scriptsize Fitness value,
				xmin=-2, xmax=52,
				ymin=3, ymax=10.5,
				yticklabel style = {font=\scriptsize},
				xticklabel style = {font=\scriptsize},
				xtick={0,10,20,30,40,50},   
				ytick={4,6,8,10},grid=both,
				legend style={at={(1.0,1.2)},anchor=north west,cells={anchor=west},legend cell align=left,legend columns=2},
				]
				\addplot[mark=halfcircle*,blue,
				mark options={scale=1.5,fill=blue}] table[x=label,y=Diver-f10] {foo1.dat};
				\addplot[mark=halfcircle*,magenta,
				mark options={scale=1.5,fill=red}] table[x=label,y=Conver-f10] {foo1.dat};
			\end{axis}
		\end{tikzpicture}
		\caption{with $f_{10}$}\label{fig:mp10}
	\end{subfigure} \hfill\begin{subfigure}{0.32\textwidth}
		\centering
		\begin{tikzpicture}
			\begin{axis}[
				width=1.1\textwidth,
				height=4.75cm,
				xlabel=\scriptsize Iterations,
				ylabel=\scriptsize Fitness value,
				xmin=-2, xmax=52,
				ymin=5, ymax=18,
				yticklabel style = {font=\scriptsize},
				xticklabel style = {font=\scriptsize},
				xtick={0,10,20,30,40,50},   
				ytick={6,9,12,15},grid=both,
				]
				\addplot[mark=halfcircle*,blue,
				mark options={scale=1.5,fill=blue}] table[x=label,y=Diver-f11] {foo1.dat}; 
				\addplot[mark=halfcircle*,magenta,
				mark options={scale=1.5,fill=red}] table[x=label,y=Conver-f11] {foo1.dat}; 
			\end{axis}
		\end{tikzpicture}
		\caption{with $f_{11}$}\label{fig:mp11}
	\end{subfigure} \hfill
	\begin{subfigure}{0.32\textwidth}
		\centering
		\begin{tikzpicture}
			\begin{axis}[
				width=1.1\textwidth,
				height=4.75cm,
				xlabel=\scriptsize Iterations,
				ylabel=\scriptsize Fitness value,
				xmin=-2, xmax=52,
				ymin=6.5, ymax=20,
				yticklabel style = {font=\scriptsize},
				xticklabel style = {font=\scriptsize},
				xtick={0,10,20,30,40,50},   
				ytick={9,12,15,18},grid=both,
				]
				\addplot[mark=halfcircle*,blue,
				mark options={scale=1.5,fill=blue}] table[x=label,y=Diver-f12] {foo1.dat}; 
				\addplot[mark=halfcircle*,magenta,
				mark options={scale=1.5,fill=red}] table[x=label,y=Conver-f12] {foo1.dat}; 
			\end{axis}
		\end{tikzpicture}
		\caption{with $f_{12}$}\label{fig:mp12}
	\end{subfigure} \\
    \begin{subfigure}{0.32\textwidth}
		\centering
		\begin{tikzpicture}
			\begin{axis}[
				width=1.1\textwidth,
				height=4.75cm,
				xlabel=\scriptsize Iterations,
				ylabel=\scriptsize Fitness value,
				xmin=-2, xmax=52,
				ymin=3, ymax=7.5,
				yticklabel style = {font=\scriptsize},
				xticklabel style = {font=\scriptsize},
				xtick={0,10,20,30,40,50},   
				ytick={4,5,6,7},grid=both,
				legend style={at={(1.15,1.22)},anchor=north west,cells={anchor=west},legend cell align=left,legend columns=2},
				]
				\addplot[mark=halfcircle*,blue,
				mark options={scale=1.5,fill=blue}] table[x=label,y=Diver-f13] {foo1.dat}; 
				\addplot[mark=halfcircle*,magenta,
				mark options={scale=1.5,fill=red}] table[x=label,y=Conver-f13] {foo1.dat}; 
			\end{axis}
		\end{tikzpicture}
		\caption{with $f_{13}$}\label{fig:mp13}
	\end{subfigure} \hfill\begin{subfigure}{0.32\textwidth}
		\centering
		\begin{tikzpicture}
			\begin{axis}[
				width=1.1\textwidth,
				height=4.75cm,
				xlabel=\scriptsize Iterations,
				ylabel=\scriptsize Fitness value,
				xmin=-2, xmax=52,
				ymin=6, ymax=19,
				yticklabel style = {font=\scriptsize},
				xticklabel style = {font=\scriptsize},
				xtick={0,10,20,30,40,50},   
				ytick={9,12,15,18},grid=both,
				]
				\addplot[mark=halfcircle*,blue,
				mark options={scale=1.5,fill=blue}] table[x=label,y=Diver-f14] {foo1.dat}; 
				\addplot[mark=halfcircle*,magenta,
				mark options={scale=1.5,fill=red}] table[x=label,y=Conver-f14] {foo1.dat}; 
			\end{axis}
		\end{tikzpicture}
		\caption{with $f_{14}$}\label{fig:mp14}
	\end{subfigure} \hfill
	\begin{subfigure}{0.32\textwidth}
		\centering
		\begin{tikzpicture}
			\begin{axis}[
				width=1.1\textwidth,
				height=4.75cm,
				xlabel=\scriptsize Iterations,
				ylabel=\scriptsize Fitness value,
				xmin=-2, xmax=52,
				ymin=6, ymax=19,
				yticklabel style = {font=\scriptsize},
				xticklabel style = {font=\scriptsize},
				xtick={0,10,20,30,40,50},   
				ytick={9,1,2,15,18},grid=both,
				]
				\addplot[mark=halfcircle*,blue,
				mark options={scale=1.5,fill=blue}] table[x=label,y=Diver-f15] {foo1.dat}; 
				\addplot[mark=halfcircle*,magenta,
				mark options={scale=1.5,fill=red}] table[x=label,y=Conver-f15] {foo1.dat}; 
			\end{axis}
		\end{tikzpicture}
		\caption{with $f_{15}$}\label{fig:mp15}
	\end{subfigure} 
\end{figure*}


%% file: con-div22.tex
\begin{figure*}[!htbp]
	\centering
    \ContinuedFloat
		\begin{subfigure}{0.32\textwidth}
		\centering
		\begin{tikzpicture}
			\begin{axis}[
				width=1.1\textwidth,
				height=4.75cm,
				xlabel=\scriptsize Iterations,
				ylabel=\scriptsize Fitness value,
				xmin=-2, xmax=52,
				ymin=7, ymax=26,
				yticklabel style = {font=\scriptsize},
				xticklabel style = {font=\scriptsize},
				xtick={0,10,20,30,40,50},   
				ytick={10,15,20,25},grid=both,
				legend style={at={(1.37,1.2)},anchor=north west,cells={anchor=west},legend cell align=left,legend columns=2},
				]
				\addplot[mark=halfcircle*,blue,
				mark options={scale=1.5,fill=blue}] table[x=label,y=Diver-f16] {foo1.dat};
			 \addlegendentry{\scriptsize Divergence} 
				\addplot[mark=halfcircle*,magenta,
				mark options={scale=1.5,fill=red}] table[x=label,y=Conver-f16] {foo1.dat};
			 \addlegendentry{\scriptsize Convergence} 
			\end{axis}
		\end{tikzpicture}
		\caption{with $f_{16}$}\label{fig:mp16}
	\end{subfigure} \hfill
    \begin{subfigure}{0.32\textwidth}
		\centering
		\begin{tikzpicture}
			\begin{axis}[
				width=1.1\textwidth,
				height=4.75cm,
				xlabel=\scriptsize Iterations,
				ylabel=\scriptsize Fitness value,
				xmin=-2, xmax=52,
				ymin=80, ymax=300,
				yticklabel style = {font=\scriptsize},
				xticklabel style = {font=\scriptsize},
				xtick={0,10,20,30,40,50},   
				ytick={100,150,200,250},grid=both,
				]
				\addplot[mark=halfcircle*,blue,
				mark options={scale=1.5,fill=blue}] table[x=label,y=Diver-f17] {foo1.dat}; 
				\addplot[mark=halfcircle*,magenta,
				mark options={scale=1.5,fill=red}] table[x=label,y=Conver-f17] {foo1.dat}; 
			\end{axis}
		\end{tikzpicture}
		\caption{with $f_{17}$}\label{fig:mp17}
	\end{subfigure} \hfill
	\begin{subfigure}{0.32\textwidth}
		\centering
		\begin{tikzpicture}
			\begin{axis}[
				width=1.1\textwidth,
				height=4.75cm,
				xlabel=\scriptsize Iterations,
				ylabel=\scriptsize Fitness value,
				xmin=-2, xmax=52,
				ymin=450, ymax=1700,
				yticklabel style = {font=\scriptsize},
				xticklabel style = {font=\scriptsize},
				xtick={0,10,20,30,40,50},   
				ytick={600,900,1200,1500},grid=both,
				]
				\addplot[mark=halfcircle*,blue,
				mark options={scale=1.5,fill=blue}] table[x=label,y=Diver-f18] {foo1.dat}; 
				\addplot[mark=halfcircle*,magenta,
				mark options={scale=1.5,fill=red}] table[x=label,y=Conver-f18] {foo1.dat}; 
			\end{axis}
		\end{tikzpicture}
		\caption{with $f_{18}$}\label{fig:mp18}
	\end{subfigure} \\
		\begin{subfigure}{0.32\textwidth}
		\centering
		\begin{tikzpicture}
			\begin{axis}[
				width=1.1\textwidth,
				height=4.75cm,
				xlabel=\scriptsize Iterations,
				ylabel=\scriptsize Fitness value,
				xmin=-2, xmax=52,
				ymin=4.5, ymax=13,
				yticklabel style = {font=\scriptsize},
				xticklabel style = {font=\scriptsize},
				xtick={0,10,20,30,40,50},   
				ytick={6,8,10,12},grid=both,
				legend style={at={(1.0,1.2)},anchor=north west,cells={anchor=west},legend cell align=left,legend columns=2},
				]
				\addplot[mark=halfcircle*,blue,
				mark options={scale=1.5,fill=blue}] table[x=label,y=Diver-f19] {foo1.dat};
				\addplot[mark=halfcircle*,magenta,
				mark options={scale=1.5,fill=red}] table[x=label,y=Conver-f19] {foo1.dat};
			\end{axis}
		\end{tikzpicture}
		\caption{with $f_{19}$}\label{fig:mp19}
	\end{subfigure} \hfill\begin{subfigure}{0.32\textwidth}
		\centering
		\begin{tikzpicture}
			\begin{axis}[
				width=1.1\textwidth,
				height=4.75cm,
				xlabel=\scriptsize Iterations,
				ylabel=\scriptsize Fitness value,
				xmin=-2, xmax=52,
				ymin=6, ymax=18,
				yticklabel style = {font=\scriptsize},
				xticklabel style = {font=\scriptsize},
				xtick={0,10,20,30,40,50},   
				ytick={7,10,13,16},grid=both,
				]
				\addplot[mark=halfcircle*,blue,
				mark options={scale=1.5,fill=blue}] table[x=label,y=Diver-f20] {foo1.dat}; 
				\addplot[mark=halfcircle*,magenta,
				mark options={scale=1.5,fill=red}] table[x=label,y=Conver-f20] {foo1.dat}; 
			\end{axis}
		\end{tikzpicture}
		\caption{with $f_{20}$}\label{fig:mp20}
	\end{subfigure} \hfill
	\begin{subfigure}{0.32\textwidth}
		\centering
		\begin{tikzpicture}
			\begin{axis}[
				width=1.1\textwidth,
				height=4.75cm,
				xlabel=\scriptsize Iterations,
				ylabel=\scriptsize Fitness value,
				xmin=-2, xmax=52,
				ymin=1.5, ymax=4.25,
				yticklabel style = {font=\scriptsize},
				xticklabel style = {font=\scriptsize},
				xtick={0,10,20,30,40,50},   
				ytick={2,3,4},grid=both,
				]
				\addplot[mark=halfcircle*,blue,
				mark options={scale=1.5,fill=blue}] table[x=label,y=Diver-f21] {foo1.dat}; 
				\addplot[mark=halfcircle*,magenta,
				mark options={scale=1.5,fill=red}] table[x=label,y=Conver-f21] {foo1.dat}; 
			\end{axis}
		\end{tikzpicture}
		\caption{with $f_{21}$}\label{fig:mp21}
	\end{subfigure} \\
		\begin{subfigure}{0.32\textwidth}
		\centering
		\begin{tikzpicture}
			\begin{axis}[
				width=1.1\textwidth,
				height=4.75cm,
				xlabel=\scriptsize Iterations,
				ylabel=\scriptsize Fitness value,
				xmin=-2, xmax=52,
				ymin=8, ymax=36,
				yticklabel style = {font=\scriptsize},
				xticklabel style = {font=\scriptsize},
				xtick={0,10,20,30,40,50},   
				ytick={0,8,16,24,32},grid=both,
				legend style={at={(1.0,1.2)},anchor=north west,cells={anchor=west},legend cell align=left,legend columns=2},
				]
				\addplot[mark=halfcircle*,blue,
				mark options={scale=1.5,fill=blue}] table[x=label,y=Diver-f22] {foo1.dat};
				\addplot[mark=halfcircle*,magenta,
				mark options={scale=1.5,fill=red}] table[x=label,y=Conver-f22] {foo1.dat};
			\end{axis}
		\end{tikzpicture}
		\caption{with $f_{22}$}\label{fig:mp22}
	\end{subfigure} \vspace{2cm}
	\begin{subfigure}{0.32\textwidth}
		\centering
		\begin{tikzpicture}
			\begin{axis}[
				width=1.1\textwidth,
				height=4.75cm,
				xlabel=\scriptsize Iterations,
				ylabel=\scriptsize Fitness value,
				xmin=-2, xmax=52,
				ymin=6, ymax=20,
				yticklabel style = {font=\scriptsize},
				xticklabel style = {font=\scriptsize},
				xtick={0,10,20,30,40,50},   
				ytick={7,11,15,19},grid=both,
				]
				\addplot[mark=halfcircle*,blue,
				mark options={scale=1.5,fill=blue}] table[x=label,y=Diver-f12] {foo1.dat}; 
				\addplot[mark=halfcircle*,magenta,
				mark options={scale=1.5,fill=red}] table[x=label,y=Conver-f12] {foo1.dat}; 
			\end{axis}
		\end{tikzpicture}
		\caption{with $f_{23}$}\label{fig:mp23}
	\end{subfigure} 
    \vspace{-2cm}
	\caption{Convergence and Divergence dynamics of IWSO across 23 benchmark functions: Illustrating its adaptive balance between exploration and exploitation during optimization}
	\label{fig:mp}
\end{figure*}

%% file: SD.tex
\begin{figure*}[!htbp]
    \centering
    \begin{subfigure}{0.16\textwidth}
        \centering
        \begin{tikzpicture}
\begin{axis} [
    width=0.99\textwidth,
    height = 11cm, 
    xbar = .1cm,
    bar width = 1.0pt,
    xmin = 0,
    xmax = 3,
    ytick = {1,2,3,4,5},
    yticklabels = {20,40,60,80,100},
    ymax=5.25, ymin=0.75,
    label style={font=\scriptsize},
    ticklabel style = {font=\scriptsize},
    ylabel={\#Epochs},
    xlabel={SD},
    enlarge y limits  = 0.05,
    enlarge x limits  = 0.05,
    legend columns=11,
    legend style={at={(6.50,1.01)},anchor=south, legend cell align=left, font=\scriptsize},
]

\addplot [draw = red, fill = red] coordinates {
(0.8617,1) (0.9084,2) (0.9351,3) (0.9154,4) (0.9246,5)};
\addlegendentry{GA}; 

\addplot [draw = green, fill = green] coordinates {
(0.4833,1) (0.5628,2) (0.5273,3) (0.5334,4) (0.5473,5)};
\addlegendentry{ABC}; 

\addplot [draw = pink, fill = pink] coordinates {
(2.4558,1) (2.3201,2) (2.4384,3) (2.5038,4) (2.4384,5)};
\addlegendentry{PSO}; 

\addplot [draw = blue, fill = blue] coordinates {
(0.1756,1) (0.1867,2) (0.1786,3) (0.1688,4) (0.1889,5)};
\addlegendentry{ACO}; 

\addplot [draw = yellow, fill = yellow] coordinates {
(0.0001,1) (0.0001,2) (0.0001,3) (0.0001,4) (0.0001,5)};
\addlegendentry{WOA}; 

\addplot [draw = brown, fill = brown] coordinates {
(0.1258,1) (0.1532,2) (0.1324,3) (0.1446,4) (0.1441,5)};
\addlegendentry{CS}; 

\addplot [draw = cyan, fill = cyan] coordinates {
(0.1689,1) (0.1677,2) (0.1706,3) (0.1869,4) (0.1868,5)};
\addlegendentry{SA}; 

\addplot [draw = lime, fill = lime] coordinates {
(1.4112,1) (1.3881,2) (1.3371,3) (1.4028,4) (1.3871,5)};
\addlegendentry{BHO}; 

\addplot [draw = purple, fill = purple] coordinates {
(0.1534,1) (0.1710,2) (0.1579,3) (0.1575,4) (0.1511,5)};
\addlegendentry{HS}; 

\addplot [draw = violet, fill = violet] coordinates {
(0.3971,1) (0.3893,2) (0.3826,3) (0.4010,4) (0.4304,5)};
\addlegendentry{DE}; 

\addplot [draw = orange, fill = orange] coordinates {
(0.4295,1) (0.2515,2) (0.2875,3) (0.1215,4) (0.3377,5)};
\addlegendentry{\textbf{IWSO}}; 

\end{axis}
\end{tikzpicture}
        \caption{$f_1$}
    \end{subfigure} 
    \hfill
    \begin{subfigure}{0.16\textwidth}
        \centering
        \begin{tikzpicture}
\begin{axis} [
    width=0.99\textwidth,
    height = 11cm, 
    xbar = .1cm,
    bar width = 1pt,
    xmin = 0,
    xmax = 9,
    ytick = {1,2,3,4,5},
    yticklabels = {20,40,60,80,100},
         ymax=5.25, ymin=0.75,
    label style={font=\scriptsize},
    ticklabel style = {font=\scriptsize},
    ylabel={\#Epochs},
    xlabel={SD},
    enlarge y limits  = 0.05,
    enlarge x limits  = 0.05
]

\addplot [draw = red, fill = red] coordinates {
(0.5005,1) (0.5781,2) (0.5648,3) (0.5547,4) (0.5664,5)};
\addplot [draw = green, fill = green] coordinates {
(0.0048,1) (0.0052,2) (0.0062,3) (0.0066,4) (0.0057,5)};
\addplot [draw = pink, fill = pink] coordinates {
(0,1) (0,2) (0,3) (0,4) (0,5)};
\addplot [draw = blue, fill = blue] coordinates {
(1.1906,1) (1.1975,2) (1.1346,3) (1.1473,4) (1.2233,5)};
\addplot [draw = yellow, fill = yellow] coordinates {
(0,1) (0,2) (0,3) (0,4) (0,5)};
\addplot [draw = brown, fill = brown] coordinates {
(1.8674,1) (1.9051,2) (1.8871,3) (1.8900,4) (1.8542,5)};
\addplot [draw = cyan, fill = cyan] coordinates {
(8.0831,1) (7.8395,2) (8.3254,3) (8.1585,4) (8.4792,5)};
\addplot [draw = lime, fill = lime] coordinates {
(0,1) (0.0001,2) (0,3) (0,4) (0.0001,5)};
\addplot [draw = purple, fill = purple] coordinates {
(5.2718,1) (5.3494,2) (5.4961,3) (5.3502,4) (5.2128,5)};
\addplot [draw = violet, fill = violet] coordinates {
(0.0019,1) (0.0024,2) (0.0022,3) (0.0023,4) (0.0022,5)};
\addplot [draw = orange, fill = orange] coordinates {
(1.0102,1) (0.1106,2) (0.2170,3) (0.0831,4) (0.0792,5)};
\end{axis}
\end{tikzpicture}
        \caption{$f_2$}
    \end{subfigure}
    \hfill
    \begin{subfigure}{0.16\textwidth}
        \centering
        \begin{tikzpicture}
\begin{axis} [
    width=0.99\textwidth,
    height = 11cm, 
    xbar = .1cm,
    bar width = 1pt,
    xmin = 0,
    xmax = 1,
    ytick = {1,2,3,4,5},
    yticklabels = {20,40,60,80,100},
         ymax=5.25, ymin=0.75,
    label style={font=\scriptsize},
    ticklabel style = {font=\scriptsize},
    ylabel={\#Epochs},
    xlabel={SD},
    enlarge y limits  = 0.05,
    enlarge x limits  = 0.05
]

\addplot [draw = red, fill = red] coordinates {
(0.1223,1) (0.1296,2) (0.1471,3) (0.1402,4) (0.1307,5)};
\addplot [draw = green, fill = green] coordinates {
(0.0005,1) (0.0006,2) (0.0005,3) (0.0008,4) (0.0011,5)};
\addplot [draw = pink, fill = pink] coordinates {
(0,1) (0,2) (0,3) (0,4) (0,5)};
\addplot [draw = blue, fill = blue] coordinates {
(0.2464,1) (0.2468,2) (0.2894,3) (0.2641,4) (0.2373,5)};
\addplot [draw = yellow, fill = yellow] coordinates {
(0.8024,1) (0.7851,2) (0.6809,3) (0.7844,4) (0.8123,5)};
\addplot [draw = brown, fill = brown] coordinates {
(0.7338,1) (0.9344,2) (0.8959,3) (0.8472,4) (0.8960,5)};
\addplot [draw = cyan, fill = cyan] coordinates {
(0.0041,1) (0.0040,2) (0.0042,3) (0.0045,4) (0.0043,5)};
\addplot [draw = lime, fill = lime] coordinates {
(0.0041,1) (0.0018,2) (0.0014,3) (0.0021,4) (0.0019,5)};
\addplot [draw = purple, fill = purple] coordinates {
(0.0117,1) (0.0129,2) (0.0127,3) (0.0098,4) (0.0119,5)};
\addplot [draw = violet, fill = violet] coordinates {
(0,1) (0,2) (0,3) (0,4) (0,5)};
\addplot [draw = orange, fill = orange] coordinates {
(0.0867,1) (0.0091,2) (0.0048,3) (0.0026,4) (0.0019,5)};
\end{axis}
\end{tikzpicture}
        \caption{$f_3$}
    \end{subfigure} 
    \hfill
    \begin{subfigure}{0.16\textwidth}
        \centering
        \begin{tikzpicture}
\begin{axis} [
    width=0.99\textwidth,
    height = 11cm, 
    xbar = .1cm,
    bar width = 1pt,
    xmin = 0,
    xmax = 300,
    xtick = {0,130,260},
    ytick = {1,2,3,4,5},
    yticklabels = {20,40,60,80,100},
         ymax=5.25, ymin=0.75,
    label style={font=\scriptsize},
    ticklabel style = {font=\scriptsize},
    ylabel={\#Epochs},
    xlabel={SD},
    enlarge y limits  = 0.05,
    enlarge x limits  = 0.05
]

\addplot [draw = red, fill = red] coordinates {
(40.8281,1) (39.6382,2) (38.8865,3) (39.6891,4) (38.6631,5)};
\addplot [draw = green, fill = green] coordinates {
(18.9914,1) (19.3429,2) (19.3304,3) (19.1417,4) (18.5244,5)};
\addplot [draw = pink, fill = pink] coordinates {
(85.1871,1) (81.9263,2) (80.9232,3) (80.9578,4) (80.9232,5)};
\addplot [draw = blue, fill = blue] coordinates {
(13.9255,1) (14.9833,2) (15.2932,3) (14.5014,4) (15.0626,5)};
\addplot [draw = yellow, fill = yellow] coordinates {
(213.4524,1) (231.7765,2) (225.3172,3) (224.2487,4) (226.7169,5)};
\addplot [draw = brown, fill = brown] coordinates {
(12.5461,1) (13.6966,2) (13.0725,3) (12.8127,4) (13.8288,5)};
\addplot [draw = cyan, fill = cyan] coordinates {
(20.2528,1) (20.7516,2) (18.0643,3) (18.8968,4) (17.9240,5)};
\addplot [draw = lime, fill = lime] coordinates {
(60.5248,1) (60.8415,2) (62.4478,3) (66.0211,4) (65.8817,5)};
\addplot [draw = purple, fill = purple] coordinates {
(18.5826,1) (18.6333,2) (19.0626,3) (20.2957,4) (19.2608,5)};
\addplot [draw = violet, fill = violet] coordinates {
(18.7272,1) (20.6141,2) (20.5461,3) (20.6437,4) (21.0199,5)};
\addplot [draw = orange, fill = orange] coordinates {
(125.4941,1) (230.1400,2) (264.7624,3) (216.4998,4) (163.5968,5)};
\end{axis}
\end{tikzpicture}
        \caption{$f_4$}
    \end{subfigure}   \hfill
      \begin{subfigure}{0.16\textwidth}
        \centering
        \begin{tikzpicture}
\begin{axis} [
    width=0.99\textwidth,
    height = 11cm, 
    xbar = .1cm,
    bar width = 1pt,
    xmin = 0,
    xmax = 335000,
    ytick = {1,2,3,4,5},
    yticklabels = {20,40,60,80,100},
         ymax=5.25, ymin=0.75,
    label style={font=\scriptsize},
    ticklabel style = {font=\scriptsize},
    ylabel={\#Epochs},
    xlabel={SD},
    enlarge y limits  = 0.05,
    enlarge x limits  = 0.05,
    legend columns=11,
    legend style={at={(3.625,1.01)},anchor=south, legend cell align=left, font=\scriptsize},
]

\addplot [draw = red, fill = red] coordinates {
(5228.1963,1) (5431.6578,2) (5391.5211,3) (5438.2527,4) (5104.4134,5)};
\addplot [draw = green, fill = green] coordinates {
(149199.6059,1) (152443.1913,2) (157774.8573,3) (149209.4558,4) (151504.3643,5)};
\addplot [draw = pink, fill = pink] coordinates {
(3599.8288,1) (9983.9485,2) (11636.9484,3) (9923.4200,4) (11636.9484,5)};
\addplot [draw = blue, fill = blue] coordinates {
(153565.1373,1) (167222.0790,2) (166096.4214,3) (166380.1508,4) (173322.5773,5)};
\addplot [draw = yellow, fill = yellow] coordinates {
(0.0420,1) (0.0957,2) (0.0810,3) (0.1020,4) (0.0813,5)};
\addplot [draw = brown, fill = brown] coordinates {
(224503.6961,1) (196639.8362,2) (214452.7476,3) (205419.4166,4) (210488.0318,5)};
\addplot [draw = cyan, fill = cyan] coordinates {
(406.9016,1) (420.6461,2) (388.4646,3) (398.2857,4) (397.1096,5)};
\addplot [draw = lime, fill = lime] coordinates {
(1424.8536,1) (1470.1065,2) (1485.5171,3) (1514.0463,4) (1479.8152,5)};
\addplot [draw = purple, fill = purple] coordinates {
(288384.7884,1) (256855.9132,2) (283375.1233,3) (272438.1047,4) (270641.0690,5)};
\addplot [draw = violet, fill = violet] coordinates {
(292732.5317,1) (312578.6020,2) (322790.6455,3) (317071.0431,4) (314916.0420,5)};  
\addplot [draw = orange, fill = orange] coordinates {
(55.6528,1) (19.1988,2) (9.8714,3) (5.7059,4) (4.8065,5)}; 
\end{axis}
\end{tikzpicture}
        \caption{$f_5$}
    \end{subfigure} 
    \hfill
    \begin{subfigure}{0.16\textwidth}
        \centering
        \begin{tikzpicture}
\begin{axis} [
    width=0.99\textwidth,
    height = 11cm, 
    xbar = .1cm,
    bar width = 1pt,
    xmin = 0,
    xmax = 9,
    ytick = {1,2,3,4,5},
    yticklabels = {20,40,60,80,100},
         ymax=5.25, ymin=0.75,
    label style={font=\scriptsize},
    ticklabel style = {font=\scriptsize},
    ylabel={\#Epochs},
    xlabel={SD},
    enlarge y limits  = 0.05,
    enlarge x limits  = 0.05
]

\addplot [draw = red, fill = red] coordinates {
(1.5069,1) (1.7250,2) (1.6007,3) (1.6528,4) (1.9445,5)};
\addplot [draw = green, fill = green] coordinates {
(2.0449,1) (2.0649,2) (2.1786,3) (2.2707,4) (2.0913,5)};
\addplot [draw = pink, fill = pink] coordinates {
(3.2145,1) (3.3274,2) (3.3187,3) (3.1805,4) (3.3187,5)};
\addplot [draw = blue, fill = blue] coordinates {
(1.8669,1) (1.7527,2) (1.7751,3) (1.7862,4) (1.7847,5)};
\addplot [draw = yellow, fill = yellow] coordinates {
(3.7675,1) (4.1796,2) (4.1386,3) (4.3008,4) (3.9684,5)};
\addplot [draw = brown, fill = brown] coordinates {
(4.1989,1) (4.4091,2) (3.9150,3) (4.1271,4) (4.2179,5)};
\addplot [draw = cyan, fill = cyan] coordinates {
(8.1648,1) (8.1004,2) (8.1310,3) (8.0445,4) (8.0860,5)};
\addplot [draw = lime, fill = lime] coordinates {
(2.7902,1) (3.3766,2) (3.5481,3) (3.2904,4) (3.3053,5)};
\addplot [draw = purple, fill = purple] coordinates {
(2.4083,1) (2.2781,2) (2.3292,3) (2.4312,4) (2.4275,5)};
\addplot [draw = violet, fill = violet] coordinates {
(1.1608,1) (1.3651,2) (1.4054,3) (1.3035,4) (1.3478,5)};
\addplot [draw = orange, fill = orange] coordinates {
(1.8801,1) (4.4455,2) (2.3329,3) (3.5683,4) (0.3143,5)};
\end{axis}
\end{tikzpicture}
        \caption{$f_6$}
    \end{subfigure}
\\
\vspace{0.25cm}
    \begin{subfigure}{0.16\textwidth}
        \centering
        \begin{tikzpicture}
\begin{axis} [
    width=0.99\textwidth,
    height = 11cm, 
    xbar = .1cm,
    bar width = 1pt,
    xmin = 0,
    xmax = 140,
    ytick = {1,2,3,4,5},
    yticklabels = {20,40,60,80,100},
         ymax=5.25, ymin=0.75,
    label style={font=\scriptsize},
    ticklabel style = {font=\scriptsize},
    ylabel={\#Epochs},
    xlabel={SD},
    enlarge y limits  = 0.05,
    enlarge x limits  = 0.05
]

\addplot [draw = red, fill = red] coordinates {
(5.8317,1) (6.6042,2) (6.6543,3) (6.4888,4) (6.6228,5)};
\addplot [draw = green, fill = green] coordinates {
(39.4688,1) (40.1820,2) (39.2871,3) (39.5084,4) (39.4463,5)};
\addplot [draw = pink, fill = pink] coordinates {
(6.7693,1) (5.1600,2) (6.2558,3) (6.3814,4) (6.2558,5)};
\addplot [draw = blue, fill = blue] coordinates {
(35.9020,1) (38.6601,2) (37.3035,3) (37.6933,4) (39.2029,5)};
\addplot [draw = yellow, fill = yellow] coordinates {
(0.1267,1) (0.1328,2) (0.1843,3) (0.1960,4) (0.1682,5)};
\addplot [draw = brown, fill = brown] coordinates {
(42.6818,1) (44.2678,2) (43.7290,3) (42.8069,4) (44.8478,5)};
\addplot [draw = cyan, fill = cyan] coordinates {
(126.4001,1) (129.8258,2) (127.0622,3) (128.6777,4) (134.8695,5)};
\addplot [draw = lime, fill = lime] coordinates {
(5.3485,1) (5.4819,2) (5.3532,3) (5.1698,4) (5.2981,5)};
\addplot [draw = purple, fill = purple] coordinates {
(62.7683,1) (63.4419,2) (61.1595,3) (65.5141,4) (66.1017,5)};
\addplot [draw = violet, fill = violet] coordinates {
(64.5785,1) (70.2522,2) (68.3983,3) (68.9129,4) (70.4618,5)};
\addplot [draw = orange, fill = orange] coordinates {
(0.7949,1) (0.1976,2) (0.1861,3) (0.2935,4) (0.2082,5)};
\end{axis}
\end{tikzpicture}
        \caption{$f_7$}
    \end{subfigure} 
    \hfill
    \begin{subfigure}{0.16\textwidth}
        \centering
        \begin{tikzpicture}
\begin{axis} [
    width=0.99\textwidth,
    height = 11cm, 
    xbar = .1cm,
    bar width = 1pt,
    xmin = 0,
    xmax = 40,
    xtick = {0,15,30,45},
    ytick = {1,2,3,4,5},
    yticklabels = {20,40,60,80,100},
         ymax=5.25, ymin=0.75,
    label style={font=\scriptsize},
    ticklabel style = {font=\scriptsize},
    ylabel={\#Epochs},
    xlabel={SD},
    enlarge y limits  = 0.05,
    enlarge x limits  = 0.05
]

\addplot [draw = red, fill = red] coordinates {
(7.9527,1) (8.2003,2) (8.1856,3) (8.5303,4) (8.5804,5)};
\addplot [draw = green, fill = green] coordinates {
(25.9760,1) (27.5140,2) (28.4262,3) (27.2094,4) (28.4780,5)};
\addplot [draw = pink, fill = pink] coordinates {
(6.2339,1) (6.7718,2) (6.8107,3) (6.5611,4) (6.8107,5)};
\addplot [draw = blue, fill = blue] coordinates {
(14.2385,1) (16.4811,2) (16.3321,3) (16.6355,4) (17.1866,5)};
\addplot [draw = yellow, fill = yellow] coordinates {
(0.3915,1) (0.3741,2) (0.3930,3) (0.4289,4) (0.4028,5)};
\addplot [draw = brown, fill = brown] coordinates {
(18.3403,1) (17.9266,2) (19.8998,3) (19.6533,4) (19.7731,5)};
\addplot [draw = cyan, fill = cyan] coordinates {
(24.8282,1) (24.7728,2) (25.8966,3) (26.5138,4) (25.6135,5)};
\addplot [draw = lime, fill = lime] coordinates {
(6.1001,1) (6.5424,2) (6.0324,3) (6.6965,4) (6.5681,5)};
\addplot [draw = purple, fill = purple] coordinates {
(20.1252,1) (21.0146,2) (20.0869,3) (19.6342,4) (20.1240,5)};
\addplot [draw = violet, fill = violet] coordinates {
(36.5804,1) (38.5722,2) (37.7062,3) (37.7686,4) (37.6358,5)};
\addplot [draw = orange, fill = orange] coordinates {
(0.3238,1) (0.2911,2) (0.1687,3) (0.1020,4) (0.1490,5)};
\end{axis}
\end{tikzpicture}
        \caption{$f_8$}
    \end{subfigure}   \hfill
    \begin{subfigure}{0.16\textwidth}
        \centering
        \begin{tikzpicture}
\begin{axis} [
    width=0.99\textwidth,
    height = 11cm, 
    xbar = .1cm,
    bar width = 1pt,
    xmin = 0,
    xmax = 1,
    ytick = {1,2,3,4,5},
    yticklabels = {20,40,60,80,100},
         ymax=5.25, ymin=0.75,
    label style={font=\scriptsize},
    ticklabel style = {font=\scriptsize},
    ylabel={\#Epochs},
    xlabel={SD},
    enlarge y limits  = 0.05,
    enlarge x limits  = 0.05,
    legend columns=6,
    legend style={at={(3.625,1.01)},anchor=south, legend cell align=left, font=\scriptsize},
]

\addplot [draw = red, fill = red] coordinates {
(0.4071,1) (0.4186,2) (0.4341,3) (0.4278,4) (0.4324,5)};
\addplot [draw = green, fill = green] coordinates {
(0.4030,1) (0.4534,2) (0.4292,3) (0.4361,4) (0.4441,5)};
\addplot [draw = pink, fill = pink] coordinates {
(0.8899,1) (0.8622,2) (0.9044,3) (0.8879,4) (0.9044,5)};
\addplot [draw = blue, fill = blue] coordinates {
(0.3719,1) (0.3683,2) (0.3749,3) (0.3701,4) (0.3745,5)};
\addplot [draw = yellow, fill = yellow] coordinates {
(0.5770,1) (0.5840,2) (0.5431,3) (0.5872,4) (0.5690,5)};
\addplot [draw = brown, fill = brown] coordinates {
(0.3285,1) (0.3426,2) (0.3408,3) (0.3384,4) (0.3453,5)};
\addplot [draw = cyan, fill = cyan] coordinates {
(0.3772,1) (0.4063,2) (0.4155,3) (0.4417,4) (0.4096,5)};
\addplot [draw = lime, fill = lime] coordinates {
(0.6861,1) (0.7015,2) (0.7546,3) (0.7234,4) (0.7200,5)};
\addplot [draw = purple, fill = purple] coordinates {
(0.4097,1) (0.3926,2) (0.3923,3) (0.3892,4) (0.3985,5)};
\addplot [draw = violet, fill = violet] coordinates {
(0.4031,1) (0.4110,2) (0.4100,3) (0.3950,4) (0.4038,5)};
\addplot [draw = orange, fill = orange] coordinates {
(0.4167,1) (0.6538,2) (0.4944,3) (0.5219,4) (0.3367,5)};
\end{axis}
\end{tikzpicture}
        \caption{$f_9$}
    \end{subfigure} 
    \hfill
    \begin{subfigure}{0.16\textwidth}
        \centering
        \begin{tikzpicture}
\begin{axis} [
    width=0.99\textwidth,
    height = 11cm, 
    xbar = .1cm,
    bar width = 1pt,
    xmin = 0,
    xmax = 12,
    ytick = {1,2,3,4,5},
    yticklabels = {20,40,60,80,100},
         ymax=5.25, ymin=0.75,
    label style={font=\scriptsize},
    ticklabel style = {font=\scriptsize},
    ylabel={\#Epochs},
    xlabel={SD},
    enlarge y limits  = 0.05,
    enlarge x limits  = 0.05
]

\addplot [draw = red, fill = red] coordinates {
(3.0214,1) (3.0446,2) (3.0911,3) (3.1044,4) (3.1107,5)};
\addplot [draw = green, fill = green] coordinates {
(4.6551,1) (4.5384,2) (4.4042,3) (4.3895,4) (4.3717,5)};
\addplot [draw = pink, fill = pink] coordinates {
(1.4507,1) (1.9027,2) (1.6348,3) (1.9080,4) (1.6348,5)};
\addplot [draw = blue, fill = blue] coordinates {
(2.5174,1) (2.8708,2) (2.7475,3) (2.7038,4) (2.7757,5)};
\addplot [draw = yellow, fill = yellow] coordinates {
(7.9282,1) (8.2609,2) (8.4209,3) (8.0481,4) (8.1575,5)};
\addplot [draw = brown, fill = brown] coordinates {
(2.6575,1) (2.7560,2) (2.8940,3) (2.8982,4) (2.8357,5)};
\addplot [draw = cyan, fill = cyan] coordinates {
(4.9649,1) (4.7245,2) (4.5118,3) (4.7959,4) (4.8567,5)};
\addplot [draw = lime, fill = lime] coordinates {
(4.1239,1) (4.5077,2) (4.5610,3) (4.3614,4) (4.3176,5)};
\addplot [draw = purple, fill = purple] coordinates {
(3.4665,1) (3.7399,2) (3.3312,3) (3.5362,4) (3.6230,5)};
\addplot [draw = violet, fill = violet] coordinates {
(3.6692,1) (3.6216,2) (3.6559,3) (3.8211,4) (3.7599,5)};
\addplot [draw = orange, fill = orange] coordinates {
(11.4434,1) (10.4410,2) (10.1929,3) (5.6760,4) (10.6141,5)};
\end{axis}
\end{tikzpicture}
        \caption{$f_{10}$}
    \end{subfigure}
    \hfill
    \begin{subfigure}{0.16\textwidth}
        \centering
        \begin{tikzpicture}
\begin{axis} [
    width=0.99\textwidth,
    height = 11cm, 
    xbar = .1cm,
    bar width = 1pt,
    xmin = 0,
    xmax = 20,
    xtick = {0,6,12,18},
    ytick = {1,2,3,4,5},
    yticklabels = {20,40,60,80,100},
         ymax=5.25, ymin=0.75,
    label style={font=\scriptsize},
    ticklabel style = {font=\scriptsize},
    ylabel={\#Epochs},
    xlabel={SD},
    enlarge y limits  = 0.05,
    enlarge x limits  = 0.05
]

\addplot [draw = red, fill = red] coordinates {
(1.7132,1) (1.7853,2) (1.8017,3) (1.8292,4) (1.7961,5)};
\addplot [draw = green, fill = green] coordinates {
(10.5488,1) (10.9319,2) (10.8470,3) (11.1098,4) (11.3335,5)};
\addplot [draw = pink, fill = pink] coordinates {
(1.2101,1) (1.7466,2) (2.0175,3) (1.6694,4) (2.0175,5)};
\addplot [draw = blue, fill = blue] coordinates {
(10.5067,1) (10.5771,2) (10.3210,3) (10.7032,4) (10.6124,5)};
\addplot [draw = yellow, fill = yellow] coordinates {
(0.0989,1) (0.1021,2) (0.0993,3) (0.0984,4) (0.0992,5)};
\addplot [draw = brown, fill = brown] coordinates {
(12.5829,1) (12.1018,2) (13.0031,3) (12.3243,4) (12.6845,5)};
\addplot [draw = cyan, fill = cyan] coordinates {
(1.7327,1) (1.7279,2) (1.7984,3) (1.7755,4) (1.7670,5)};
\addplot [draw = lime, fill = lime] coordinates {
(1.2876,1) (1.4783,2) (1.4142,3) (1.3605,4) (1.4087,5)};
\addplot [draw = purple, fill = purple] coordinates {
(15.6552,1) (16.1446,2) (15.3509,3) (15.1983,4) (15.9877,5)};
\addplot [draw = violet, fill = violet] coordinates {
(17.3292,1) (18.4640,2) (18.7118,3) (18.4990,4) (18.8642,5)};
\addplot [draw = orange, fill = orange] coordinates {
(1.0600,1) (0.5017,2) (0.2237,3) (0.2309,4) (0.1210,5)};
\end{axis}
\end{tikzpicture}
        \caption{$f_{11}$}
    \end{subfigure} 
    \hfill
    \begin{subfigure}{0.16\textwidth}
        \centering
        \begin{tikzpicture}
\begin{axis} [
    width=0.99\textwidth,
    height = 11cm, 
    xbar = .1cm,
    bar width = 1pt,
    xmin = 0,
    xmax = 20,
    xtick = {0,6,12,18},
    ytick = {1,2,3,4,5},
    yticklabels = {20,40,60,80,100},
         ymax=5.25, ymin=0.75,
    label style={font=\scriptsize},
    ticklabel style = {font=\scriptsize},
    ylabel={\#Epochs},
    xlabel={SD},
    enlarge y limits  = 0.05,
    enlarge x limits  = 0.05
]

\addplot [draw = red, fill = red] coordinates {
(1.8196,1) (1.9199,2) (1.8767,3) (1.8012,4) (1.9109,5)};
\addplot [draw = green, fill = green] coordinates {
(11.2236,1) (11.1748,2) (10.5802,3) (10.8655,4) (11.0226,5)};
\addplot [draw = pink, fill = pink] coordinates {
(1.6961,1) (1.7982,2) (1.9096,3) (1.4859,4) (1.9096,5)};
\addplot [draw = blue, fill = blue] coordinates {
(10.3818,1) (10.4495,2) (11.4663,3) (10.7920,4) (10.5402,5)};
\addplot [draw = yellow, fill = yellow] coordinates {
(0.0885,1) (0.1028,2) (0.0995,3) (0.1067,4) (0.0976,5)};
\addplot [draw = brown, fill = brown] coordinates {
(11.6885,1) (12.2139,2) (12.0687,3) (12.8782,4) (12.5704,5)};
\addplot [draw = cyan, fill = cyan] coordinates {
(1.6959,1) (1.7489,2) (1.7609,3) (1.7441,4) (1.7623,5)};
\addplot [draw = lime, fill = lime] coordinates {
(1.3529,1) (1.4265,2) (1.4129,3) (1.4254,4) (1.3791,5)};
\addplot [draw = purple, fill = purple] coordinates {
(14.5630,1) (14.9534,2) (15.9135,3) (15.3960,4) (15.1911,5)};
\addplot [draw = violet, fill = violet] coordinates {
(18.5139,1) (18.4567,2) (19.5009,3) (19.4138,4) (19.1673,5)};
\addplot [draw = orange, fill = orange] coordinates {
(1.2314,1) (0.3657,2) (0.3512,3) (0.3198,4) (0.1593,5)};
\end{axis}
\end{tikzpicture}
        \caption{$f_{12}$}
    \end{subfigure}   
\end{figure*}
\begin{figure*}[!htbp]
    \centering
    \ContinuedFloat
\vspace{0.25cm}
    \begin{subfigure}{0.16\textwidth}
        \centering
        \begin{tikzpicture}
\begin{axis} [
    width=0.99\textwidth,
    height = 11cm, 
    xbar = .1cm,
    bar width = 1pt,
    xmin = 0,
    xmax = 0.7,
    ytick = {1,2,3,4,5},
    yticklabels = {20,40,60,80,100},
         ymax=5.25, ymin=0.75,
    label style={font=\scriptsize},
    ticklabel style = {font=\scriptsize},
    ylabel={\#Epochs},
    xlabel={SD},
    enlarge y limits  = 0.05,
    enlarge x limits  = 0.05,
    legend columns=11,
    legend style={at={(6.50,1.01)},anchor=south, legend cell align=left, font=\scriptsize},
]

\addplot [draw = red, fill = red] coordinates {
(0.0040,1) (0.0038,2) (0.0037,3) (0.0039,4) (0.0039,5)}; \addlegendentry{GA};
\addplot [draw = green, fill = green] coordinates {
(0.2897,1) (0.3058,2) (0.3177,3) (0.3096,4) (0.3079,5)};\addlegendentry{ABC}; 
\addplot [draw = pink, fill = pink] coordinates {
(0,1) (0,2) (0,3) (0,4) (0,5)}; \addlegendentry{PSO};
\addplot [draw = blue, fill = blue] coordinates {
(0.1183,1) (0.1257,2) (0.1265,3) (0.1284,4) (0.1259,5)}; \addlegendentry{ACO};
\addplot [draw = yellow, fill = yellow] coordinates {
(0,1) (0,2) (0,3) (0,4) (0,5)}; \addlegendentry{WOA};
\addplot [draw = brown, fill = brown] coordinates {
(0.1561,1) (0.1648,2) (0.1707,3) (0.1644,4) (0.1760,5)};\addlegendentry{CS};
\addplot [draw = cyan, fill = cyan] coordinates {
(0.6283,1) (0.6148,2) (0.6322,3) (0.6082,4) (0.6071,5)}; \addlegendentry{SA};
\addplot [draw = lime, fill = lime] coordinates {
(0.0004,1) (0.0006,2) (0.0005,3) (0.0005,4) (0.0005,5)};\addlegendentry{BHO};
\addplot [draw = purple, fill = purple] coordinates {
(0.0746,1) (0.0886,2) (0.0806,3) (0.0861,4) (0.0853,5)};\addlegendentry{HS};
\addplot [draw = violet, fill = violet] coordinates {
(0.3436,1) (0.3639,2) (0.3588,3) (0.3780,4) (0.3574,5)}; \addlegendentry{DE}; 
\addplot [draw = orange, fill = orange] coordinates {
(0.0786,1) (0.0114,2) (0.0125,3) (0.0096,4) (0.0092,5)};\addlegendentry{IWSO};
\end{axis}
\end{tikzpicture}
        \caption{$f_{13}$}
    \end{subfigure} 
    \hfill
    \begin{subfigure}{0.16\textwidth}
        \centering
        \begin{tikzpicture}
\begin{axis} [
    width=0.99\textwidth,
    height = 11cm, 
    xbar = .1cm,
    bar width = 1pt,
    xmin = 0,
    xmax = 50,
    xtick = {0,15,30,45},
    ytick = {1,2,3,4,5},
    yticklabels = {20,40,60,80,100},
         ymax=5.25, ymin=0.75,
    label style={font=\scriptsize},
    ticklabel style = {font=\scriptsize},
    ylabel={\#Epochs},
    xlabel={SD},
    enlarge y limits  = 0.05,
    enlarge x limits  = 0.05
]

\addplot [draw = red, fill = red] coordinates {
(18.2543,1) (19.0669,2) (19.1506,3) (18.9957,4) (19.4699,5)};
\addplot [draw = green, fill = green] coordinates {
(19.4008,1) (21.2460,2) (21.7754,3) (21.5718,4) (20.5906,5)};
\addplot [draw = pink, fill = pink] coordinates {
(29.1310,1) (28.1963,2) (28.8826,3) (28.9160,4) (28.8826,5)};
\addplot [draw = blue, fill = blue] coordinates {
(17.2710,1) (17.5478,2) (18.3534,3) (17.6834,4) (17.0810,5)};
\addplot [draw = yellow, fill = yellow] coordinates {
(0,1) (0.6018,2) (0.0004,3) (0.1472,4) (0.0009,5)};
\addplot [draw = brown, fill = brown] coordinates {
(18.2454,1) (18.4074,2) (17.5761,3) (18.2006,4) (18.4242,5)};
\addplot [draw = cyan, fill = cyan] coordinates {
(39.1156,1) (38.7731,2) (38.2777,3) (37.1370,4) (38.5708,5)};
\addplot [draw = lime, fill = lime] coordinates {
(23.2919,1) (22.0580,2) (21.5317,3) (23.9675,4) (22.5324,5)};
\addplot [draw = purple, fill = purple] coordinates {
(22.3050,1) (21.4150,2) (19.9748,3) (20.5770,4) (22.0368,5)};
\addplot [draw = violet, fill = violet] coordinates {
(21.9901,1) (22.4437,2) (24.9033,3) (24.3524,4) (23.9827,5)};
\addplot [draw = orange, fill = orange] coordinates {
(11.9584,1) (21.6835,2) (23.9970,3) (19.6376,4) (20.8828,5)};
\end{axis}
\end{tikzpicture}
        \caption{$f_{14}$}
    \end{subfigure}
    \hfill
    \begin{subfigure}{0.16\textwidth}
        \centering
        \begin{tikzpicture}
\begin{axis} [
    width=0.99\textwidth,
    height = 11cm, 
    xbar = .1cm,
    bar width = 1pt,
    xmin = 0,
    xmax = 50,
    xtick = {0,15,30,45},
    ytick = {1,2,3,4,5},
    yticklabels = {20,40,60,80,100},
         ymax=5.25, ymin=0.75,
    label style={font=\scriptsize},
    ticklabel style = {font=\scriptsize},
    ylabel={\#Epochs},
    xlabel={SD},
    enlarge y limits  = 0.05,
    enlarge x limits  = 0.05
]

\addplot [draw = red, fill = red] coordinates {
(17.9430,1) (18.9223,2) (19.2927,3) (19.5474,4) (19.9620,5)};
\addplot [draw = green, fill = green] coordinates {
(21.6497,1) (21.9606,2) (21.3305,3) (21.7584,4) (21.3559,5)};
\addplot [draw = pink, fill = pink] coordinates {
(28.4644,1) (28.1743,2) (27.8672,3) (28.6231,4) (27.8672,5)};
\addplot [draw = blue, fill = blue] coordinates {
(16.6208,1) (18.3544,2) (16.9906,3) (17.6833,4) (17.3407,5)};
\addplot [draw = yellow, fill = yellow] coordinates {
(0,1) (0,2) (0.0002,3) (0,4) (0.0001,5)};
\addplot [draw = brown, fill = brown] coordinates {
(18.6694,1) (18.3204,2) (17.8257,3) (17.6711,4) (18.6513,5)};
\addplot [draw = cyan, fill = cyan] coordinates {
(35.1207,1) (39.9203,2) (37.4481,3) (36.4586,4) (37.7524,5)};
\addplot [draw = lime, fill = lime] coordinates {
(22.7582,1) (22.7040,2) (22.4777,3) (22.8081,4) (22.7209,5)};
\addplot [draw = purple, fill = purple] coordinates {
(21.0224,1) (21.0124,2) (22.0027,3) (22.3678,4) (21.1419,5)};
\addplot [draw = violet, fill = violet] coordinates {
(22.3455,1) (22.5735,2) (22.7032,3) (23.7073,4) (23.5389,5)};
\addplot [draw = orange, fill = orange] coordinates {
(21.6549,1) (14.7260,2) (21.2648,3) (24.0151,4) (21.6206,5)};
\end{axis}
\end{tikzpicture}
        \caption{$f_{15}$}
    \end{subfigure} 
    \hfill
    \begin{subfigure}{0.16\textwidth}
        \centering
        \begin{tikzpicture}
\begin{axis} [
    width=0.99\textwidth,
    height = 11cm, 
    xbar = .1cm,
    bar width = 1pt,
    xmin = 0,
    xmax = 300000,
    ytick = {1,2,3,4,5},
    yticklabels = {20,40,60,80,100},
         ymax=5.25, ymin=0.75,
    label style={font=\scriptsize},
    ticklabel style = {font=\scriptsize},
    ylabel={\#Epochs},
    xlabel={SD},
    enlarge y limits  = 0.05,
    enlarge x limits  = 0.05
]

\addplot [draw = red, fill = red] coordinates {
(4205.9318,1) (4051.4611,2) (3995.7601,3) (3979.5974,4) (3990.5439,5)};
\addplot [draw = green, fill = green] coordinates {
(81284.7501,1) (81652.5410,2) (84996.8485,3) (82282.9500,4) (83138.2728,5)};
\addplot [draw = pink, fill = pink] coordinates {
(19904.7239,1) (21032.0742,2) (21437.4309,3) (21573.1202,4) (21437.4309,5)};
\addplot [draw = blue, fill = blue] coordinates {
(150335.0352,1) (161648.7093,2) (160069.4677,3) (159580.2300,4) (167059.2939,5)};
\addplot [draw = yellow, fill = yellow] coordinates {
(3.9519,1) (5.3458,2) (5.9482,3) (4.7415,4) (5.4503,5)};
\addplot [draw = brown, fill = brown] coordinates {
(214434.2327,1) (209156.3750,2) (208664.3677,3) (206763.5596,4) (214392.0295,5)};
\addplot [draw = cyan, fill = cyan] coordinates {
(1094.9545,1) (1044.3244,2) (1081.2624,3) (1161.7546,4) (1068.5729,5)};
\addplot [draw = lime, fill = lime] coordinates {
(8264.7733,1) (8540.5840,2) (8355.7526,3) (8432.5711,4) (8551.9467,5)};
\addplot [draw = purple, fill = purple] coordinates {
(290347.7734,1) (271047.2576,2) (280576.0374,3) (278247.9284,4) (302528.5830,5)};
\addplot [draw = violet, fill = violet] coordinates {
(129323.5925,1) (117130.3345,2) (126876.5519,3) (123188.9065,4) (127056.7394,5)};
\addplot [draw = orange, fill = orange] coordinates {
(141.8589,1) (49.8146,2) (31.2461,3) (22.9586,4) (15.6390,5)};
\end{axis}
\end{tikzpicture}
        \caption{$f_{16}$}
    \end{subfigure}   \hfill
       \begin{subfigure}{0.16\textwidth}
        \centering
        \begin{tikzpicture}
\begin{axis} [
    width=0.99\textwidth,
    height = 11cm, 
    xbar = .1cm,
    bar width = 1pt,
    xmin = 0,
    xmax = 3,
    ytick = {1,2,3,4,5},
    yticklabels = {20,40,60,80,100},
         ymax=5.25, ymin=0.75,
    label style={font=\scriptsize},
    ticklabel style = {font=\scriptsize},
    ylabel={\#Epochs},
    xlabel={SD},
    enlarge y limits  = 0.05,
    enlarge x limits  = 0.05,
    legend columns=6,
    legend style={at={(3.625,1.01)},anchor=south, legend cell align=left, font=\scriptsize},
]

\addplot [draw = red, fill = red] coordinates {
(0.8113,1) (0.8423,2) (0.8394,3) (0.8159,4) (0.8371,5)};
\addplot [draw = green, fill = green] coordinates {
(1.5025,1) (1.5099,2) (1.5573,3) (1.5211,4) (1.5533,5)};
\addplot [draw = pink, fill = pink] coordinates {
(1.4968,1) (1.4213,2) (1.4378,3) (1.4450,4) (1.4378,5)};
\addplot [draw = blue, fill = blue] coordinates {
(0.9388,1) (0.9933,2) (1.0140,3) (0.9614,4) (1.0211,5)};
\addplot [draw = yellow, fill = yellow] coordinates {
(0.0663,1) (0.0757,2) (0.0732,3) (0.0781,4) (0.0793,5)};
\addplot [draw = brown, fill = brown] coordinates {
(1.0062,1) (0.9858,2) (1.0479,3) (1.0301,4) (1.0799,5)};
\addplot [draw = cyan, fill = cyan] coordinates {
(2.3553,1) (2.6279,2) (2.6222,3) (2.6124,4) (2.7076,5)};
\addplot [draw = lime, fill = lime] coordinates {
(0.9303,1) (0.9697,2) (0.9883,3) (0.9966,4) (0.9903,5)};
\addplot [draw = purple, fill = purple] coordinates {
(1.4578,1) (1.3605,2) (1.4553,3) (1.3869,4) (1.4296,5)};
\addplot [draw = violet, fill = violet] coordinates {
(2.1694,1) (1.8333,2) (1.8334,3) (1.8811,4) (1.9364,5)};
\addplot [draw = orange, fill = orange] coordinates {
(0.4739,1) (0.1212,2) (0.0967,3) (0.0687,4) (0.0671,5)};
\end{axis}
\end{tikzpicture}
        \caption{$f_{17}$}
    \end{subfigure} 
    \hfill
    \begin{subfigure}{0.16\textwidth}
        \centering
        \begin{tikzpicture}
\begin{axis} [
    width=0.99\textwidth,
    height = 11cm, 
    xbar = .1cm,
    bar width = 1pt,
    xmin = 0,
    xmax = 2000,
    ytick = {1,2,3,4,5},
    xtick = {0, 1000, 2000},
    yticklabels = {20,40,60,80,100},
    xticklabels = {0, 1K, 2K},
         ymax=5.25, ymin=0.75,
    label style={font=\scriptsize},
    ticklabel style = {font=\scriptsize},
    ylabel={\#Epochs},
    xlabel={SD},
    enlarge y limits  = 0.05,
    enlarge x limits  = 0.05
]

\addplot [draw = red, fill = red] coordinates {
(434.6017,1) (470.3059,2) (465.2301,3) (475.0237,4) (485.5712,5)};
\addplot [draw = green, fill = green] coordinates {
(354.6610,1) (369.7874,2) (368.4049,3) (365.3680,4) (371.7097,5)};
\addplot [draw = pink, fill = pink] coordinates {
(688.8150,1) (697.9624,2) (670.7961,3) (723.8830,4) (670.7961,5)};
\addplot [draw = blue, fill = blue] coordinates {
(345.8950,1) (341.6932,2) (380.0301,3) (355.9291,4) (358.5627,5)};
\addplot [draw = yellow, fill = yellow] coordinates {
(1049.1994,1) (1056.3128,2) (1052.1810,3) (1065.5998,4) (1000.4259,5)};
\addplot [draw = brown, fill = brown] coordinates {
(341.5618,1) (360.2761,2) (380.5239,3) (375.0409,4) (357.8241,5)};
\addplot [draw = cyan, fill = cyan] coordinates {
(757.6395,1) (860.0120,2) (828.2056,3) (852.5283,4) (847.4366,5)};
\addplot [draw = lime, fill = lime] coordinates {
(545.9197,1) (619.3931,2) (620.0781,3) (578.7868,4) (587.2578,5)};
\addplot [draw = purple, fill = purple] coordinates {
(471.3936,1) (509.1951,2) (561.4950,3) (511.3212,4) (490.6023,5)};
\addplot [draw = violet, fill = violet] coordinates {
(388.7106,1) (404.8463,2) (399.8829,3) (430.4209,4) (422.8490,5)};
\addplot [draw = orange, fill = orange] coordinates {
(1608.3839,1) (78.8823,2) (0.6404,3) (0.3925,4) (0.5243,5)};
\end{axis}
\end{tikzpicture}
        \caption{$f_{18}$}
    \end{subfigure}
\\
\centering
    \begin{subfigure}{0.16\textwidth}
        \centering
        \begin{tikzpicture}
\begin{axis} [
    width=0.99\textwidth,
    height = 11cm, 
    xbar = .1cm,
    bar width = 1pt,
    xmin = 0,
    xmax = 2.5,
    ytick = {1,2,3,4,5},
    yticklabels = {20,40,60,80,100},
         ymax=5.25, ymin=0.75,
    label style={font=\scriptsize},
    ticklabel style = {font=\scriptsize},
    ylabel={\#Epochs},
    xlabel={SD},
    enlarge y limits  = 0.05,
    enlarge x limits  = 0.05
]

\addplot [draw = red, fill = red] coordinates {
(1.2952,1) (1.3309,2) (1.3208,3) (1.2943,4) (1.3255,5)};
\addplot [draw = green, fill = green] coordinates {
(1.3276,1) (1.3553,2) (1.4019,3) (1.3861,4) (1.3878,5)};
\addplot [draw = pink, fill = pink] coordinates {
(1.9531,1) (1.9663,2) (2.0071,3) (1.9833,4) (2.0071,5)};
\addplot [draw = blue, fill = blue] coordinates {
(1.1590,1) (1.2114,2) (1.2204,3) (1.2843,4) (1.2071,5)};
\addplot [draw = yellow, fill = yellow] coordinates {
(0.1105,1) (0.0909,2) (0.0968,3) (0.1176,4) (0.1092,5)};
\addplot [draw = brown, fill = brown] coordinates {
(1.2604,1) (1.3326,2) (1.3265,3) (1.3760,4) (1.3373,5)};
\addplot [draw = cyan, fill = cyan] coordinates {
(2.4238,1) (2.3198,2) (2.4583,3) (2.3338,4) (2.4324,5)};
\addplot [draw = lime, fill = lime] coordinates {
(1.8597,1) (1.8248,2) (1.7173,3) (1.7714,4) (1.8382,5)};
\addplot [draw = purple, fill = purple] coordinates {
(1.6747,1) (1.7405,2) (1.7303,3) (1.6039,4) (1.6948,5)};
\addplot [draw = violet, fill = violet] coordinates {
(1.5847,1) (1.6270,2) (1.6285,3) (1.5738,4) (1.6531,5)};
\addplot [draw = orange, fill = orange] coordinates {
(0.4819,1) (0.1509,2) (0.1096,3) (0.0925,4) (0.0617,5)};
\end{axis}
\end{tikzpicture}
        \caption{$f_{19}$}
    \end{subfigure} 
    \hspace{0.005cm}
    \begin{subfigure}{0.16\textwidth}
        \centering
        \begin{tikzpicture}
\begin{axis} [
    width=0.99\textwidth,
    height = 11cm, 
    xbar = .1cm,
    bar width = 1pt,
    xmin = 0,
    xmax = 22,
    ytick = {1,2,3,4,5},
    yticklabels = {20,40,60,80,100},
         ymax=5.25, ymin=0.75,
    label style={font=\scriptsize},
    ticklabel style = {font=\scriptsize},
    ylabel={\#Epochs},
    xlabel={SD},
    enlarge y limits  = 0.05,
    enlarge x limits  = 0.05
]

\addplot [draw = red, fill = red] coordinates {
(1.9480,1) (1.9260,2) (1.9267,3) (1.9159,4) (1.9722,5)};
\addplot [draw = green, fill = green] coordinates {
(11.3530,1) (11.2260,2) (11.8253,3) (11.8063,4) (11.7965,5)};
\addplot [draw = pink, fill = pink] coordinates {
(1.2942,1) (1.1297,2) (1.9288,3) (1.4233,4) (1.9288,5)};
\addplot [draw = blue, fill = blue] coordinates {
(10.0608,1) (10.9626,2) (11.4137,3) (11.1678,4) (11.4679,5)};
\addplot [draw = yellow, fill = yellow] coordinates {
(0,1) (0,2) (0,3) (0,4) (0,5)};
\addplot [draw = brown, fill = brown] coordinates {
(13.7117,1) (13.1818,2) (13.2864,3) (13.6438,4) (13.5856,5)};
\addplot [draw = cyan, fill = cyan] coordinates {
(1.7366,1) (1.7881,2) (1.7337,3) (1.7796,4) (1.8100,5)};
\addplot [draw = lime, fill = lime] coordinates {
(1.5551,1) (1.5567,2) (1.5076,3) (1.5163,4) (1.5147,5)};
\addplot [draw = purple, fill = purple] coordinates {
(14.7775,1) (16.2841,2) (16.6279,3) (16.7032,4) (17.0013,5)};
\addplot [draw = violet, fill = violet] coordinates {
(20.5834,1) (19.8947,2) (21.2215,3) (19.7229,4) (20.3580,5)};
\addplot [draw = orange, fill = orange] coordinates {
(1.5759,1) (0.3670,2) (0.3690,3) (0.2478,4) (0.1429,5)};
\end{axis}
\end{tikzpicture}
        \caption{$f_{20}$}
    \end{subfigure}   
\hspace{0.005cm}
    \begin{subfigure}{0.16\textwidth}
        \centering
        \begin{tikzpicture}
\begin{axis} [
    width=0.99\textwidth,
    height = 11cm, 
    xbar = .1cm,
    bar width = 1pt,
    xmin = 0,
    xmax = 0.14,
    ytick = {1,2,3,4,5},
    yticklabels = {20,40,60,80,100},
    xticklabels = {0, 0.1, 0.2, 0.3},
         ymax=5.25, ymin=0.75,
    label style={font=\scriptsize},
    ticklabel style = {font=\scriptsize},
    ylabel={\#Epochs},
    xlabel={SD},
    enlarge y limits  = 0.05,
    enlarge x limits  = 0.05
]

\addplot [draw = red, fill = red] coordinates {
(0.0860,1) (0.0896,2) (0.0928,3) (0.0907,4) (0.0877,5)};
\addplot [draw = green, fill = green] coordinates {
(0.0001,1) (0.0005,2) (0.0020,3) (0.0004,4) (0.0012,5)};
\addplot [draw = pink, fill = pink] coordinates {
(0.0043,1) (0.0068,2) (0.0082,3) (0.0135,4) (0.0082,5)};
\addplot [draw = blue, fill = blue] coordinates {
(0.0272,1) (0.0247,2) (0.0251,3) (0.0257,4) (0.0278,5)};
\addplot [draw = yellow, fill = yellow] coordinates {
(0.1314,1) (0.1188,2) (0.1244,3) (0.1243,4) (0.1203,5)};
\addplot [draw = brown, fill = brown] coordinates {
(0.0872,1) (0.0904,2) (0.0919,3) (0.0914,4) (0.0889,5)};
\addplot [draw = cyan, fill = cyan] coordinates {
(0.0606,1) (0.0658,2) (0.0702,3) (0.0673,4) (0.0681,5)};
\addplot [draw = lime, fill = lime] coordinates {
(0,1) (0,2) (0,3) (0,4) (0.0001,5)};
\addplot [draw = purple, fill = purple] coordinates {
(0.0012,1) (0.0011,2) (0.0013,3) (0.0016,4) (0.0014,5)};
\addplot [draw = violet, fill = violet] coordinates {
(0,1) (0,2) (0,3) (0,4) (0,5)};
\addplot [draw = orange, fill = orange] coordinates {
(0.0037,1) (0.0005,2) (0.0007,3) (0.0004,4) (0.0003,5)};
\end{axis}
\end{tikzpicture}
        \caption{$f_{21}$}
    \end{subfigure} \hspace{0.005cm}
    \begin{subfigure}{0.16\textwidth}
        \centering
        \begin{tikzpicture}
\begin{axis} [
    width=0.99\textwidth,
    height = 11cm, 
    xbar = .1cm,
    bar width = 1pt,
    xmin = 0,
    xmax = 50,
    xtick = {0,15,30,45},
    ytick = {1,2,3,4,5},
    yticklabels = {20,40,60,80,100},
         ymax=5.25, ymin=0.75,
    label style={font=\scriptsize},
    ticklabel style = {font=\scriptsize},
    ylabel={\#Epochs},
    xlabel={SD},
    enlarge y limits  = 0.05,
    enlarge x limits  = 0.05
]

\addplot [draw = red, fill = red] coordinates {
(27.8368,1) (26.5909,2) (27.6362,3) (28.5756,4) (28.6930,5)};
\addplot [draw = green, fill = green] coordinates {
(0.0722,1) (0.0716,2) (0.0711,3) (0.0729,4) (0.0700,5)};
\addplot [draw = pink, fill = pink] coordinates {
(25.9323,1) (28.3536,2) (29.6242,3) (29.2831,4) (29.6242,5)};
\addplot [draw = blue, fill = blue] coordinates {
(0.0350,1) (0.0365,2) (0.0380,3) (0.0380,4) (0.0392,5)};
\addplot [draw = yellow, fill = yellow] coordinates {
(9.7795,1) (11.4212,2) (12.9108,3) (18.3885,4) (11.2817,5)};
\addplot [draw = brown, fill = brown] coordinates {
(0.0331,1) (0.0380,2) (0.0388,3) (0.0317,4) (0.0396,5)};
\addplot [draw = cyan, fill = cyan] coordinates {
(0.0543,1) (0.0586,2) (0.0532,3) (0.0538,4) (0.0553,5)};
\addplot [draw = lime, fill = lime] coordinates {
(0.0412,1) (0.0366,2) (0.0303,3) (0.0357,4) (0.0350,5)};
\addplot [draw = purple, fill = purple] coordinates {
(0.0317,1) (0.0400,2) (0.0386,3) (0.0344,4) (0.0382,5)};
\addplot [draw = violet, fill = violet] coordinates {
(34.1434,1) (39.7354,2) (36.9538,3) (37.7540,4) (34.6266,5)};
\addplot [draw = orange, fill = orange] coordinates {
(2.1063,1) (1.4518,2) (1.3184,3) (0.5520,4) (1.3522,5)};
\end{axis}
\end{tikzpicture}
        \caption{$f_{22}$}
    \end{subfigure} 
    \hspace{0.005cm}
    \begin{subfigure}{0.16\textwidth}
        \centering
        \begin{tikzpicture}
\begin{axis} [
    width=0.99\textwidth,
    height = 11cm, 
    xbar = .1cm,
    bar width = 1pt,
    xmin = 0,
    xmax = 1200000000,
    ytick = {1,2,3,4,5},
    yticklabels = {20,40,60,80,100},
         ymax=5.25, ymin=0.75,
    label style={font=\scriptsize},
    ticklabel style = {font=\scriptsize},
    ylabel={\#Epochs},
    xlabel={SD},
    enlarge y limits  = 0.05,
    enlarge x limits  = 0.05,
    legend style={at={(1.05,1)},anchor=north west, legend cell align=left,font=\scriptsize},
]

\addplot [draw = red, fill = red] coordinates {
(74.5830,1) (75.1199,2) (76.0310,3) (76.1230,4) (75.5424,5)};
\addplot [draw = green, fill = green] coordinates {
 (71.9747,1) (75.7740,2) (76.6180,3) (75.7690,4) (77.2051,5)};
\addplot [draw = pink, fill = pink] coordinates {
 (133.7068,1) (134.5897,2) (135.1893,3) (136.6079,4) (135.1893,5)};
\addplot [draw = blue, fill = blue] coordinates {
 (336754.3313,1) (356850.0807,2) (274552.9271,3) (539259.2591,4) (411751.5527,5)};
\addplot [draw = yellow, fill = yellow] coordinates {
(132.0801,1) (136.6226,2) (141.5513,3) (138.6817,4) (141.7158,5)};
\addplot [draw = brown, fill = brown] coordinates {
(11926583.4939,1) (30414511.5957,2) (26843676.2475,3) (20984621.4115,4) (20057007.1203,5)};
\addplot [draw = cyan, fill = cyan] coordinates {
 (63.1271,1) (68.6875,2) (68.2987,3) (65.0139,4) (66.3528,5)};
\addplot [draw = lime, fill = lime] coordinates {
 (1127956.3659,1) (674733.9953,2) (897220.0731,3) (1955556.1119,4) (1338347.2802,5)};
\addplot [draw = purple, fill = purple] coordinates {
 (1217000103.0404,1) (923636362.2260,2) (954384184.3777,3) (994277798.4440,4) (996972572.3398,5)};
\addplot [draw = violet, fill = violet] coordinates {
 (95.6448,1) (90.4096,2) (85.2201,3) (90.3013,4) (90.6321,5)};
\addplot [draw = orange, fill = orange] coordinates {
(80.9868,1) (13.9484,2) (42.8730,3) (7.7512,4) (7.4657,5)};
\end{axis}
\end{tikzpicture}
        \caption{$f_{23}$}
    \end{subfigure} 
    \caption{Comparative analysis of average standard deviation across 23 benchmark functions over different epochs against state-of-the-art methods}
    \label{fig:functions_1_23}
\end{figure*}

%% file: res2.tex
\begin{figure*}[!htbp]
	\centering
	\begin{subfigure}{0.48\textwidth}
		\centering
	\begin{tikzpicture}
		\begin{axis}[
			ymajorgrids,
			xmajorgrids,
			zmajorgrids,
			enlargelimits,
			z buffer=sort,
			width=0.99\textwidth,
            height=5.80cm,
			ylabel={\scriptsize Iterations},xmin=8,
			ticklabel style = {font=\scriptsize},
			xlabel={\scriptsize Population size}, ymin=0,zmax=22, 
            ytick={0,25,50,75,100},
			zlabel={\scriptsize Fitness value},zmin=-0.09,
			ylabel style = {rotate=40},
			xlabel style = {rotate=-10},
			enlargelimits = false,
			scatter/use mapped color={
				draw=mapped color,
				fill=mapped color!70,
			},legend style={at={(1.13,1.2)},  
				anchor=north,legend columns=5},
			ycomb,
			]
			
			\addplot3 [
			mark=cube*, mark size=5, fill=blue,
			] table [x=X, y=Y, z=Z, meta=Meta] {
				X       Y       Z       Meta
				10	25	19.2767	GA
10	50	19.017	GA
10	75	18.9177	GA
10	100	18.9087	GA
20	25	18.6325	GA
20	50	18.2618	GA
20	75	18.1253	GA
20	100	18.1918	GA
30	25	18.2263	GA
30	50	18.0084	GA
30	75	17.85	GA
30	100	17.7598	GA
40	25	17.8977	GA
40	50	17.906	GA
40	75	17.7314	GA
40	100	17.4709	GA
50	25	17.7227	GA
50	50	17.3998	GA
50	75	17.5039	GA
50	100	17.3288	GA
				
			};
			\addlegendentry{\scriptsize GA}
			
			
			\addplot3 [
			mark=triangle*, mark size=5, fill =red,
			] table [x=X, y=Y, z=Z, meta=Meta] {
				X       Y       Z       Meta
				10	25	16.5301	DE
10	50	16.1981	DE
10	75	15.6812	DE
10	100	15.6582	DE
20	25	18.3386	DE
20	50	17.7509	DE
20	75	18.0598	DE
20	100	17.5305	DE
30	25	18.954	DE
30	50	18.8377	DE
30	75	18.5594	DE
30	100	18.5319	DE
40	25	19.5327	DE
40	50	19.3649	DE
40	75	18.9083	DE
40	100	18.6126	DE
50	25	19.4221	DE
50	50	19.4125	DE
50	75	19.1207	DE
50	100	19.0343	DE
			};
			\addlegendentry{\scriptsize DE}
			
			\addplot3 [
			mark=otimes*, mark size=4, fill =brown,
			] table [x=X, y=Y, z=Z, meta=Meta] {
				X       Y       Z       Meta
				10	25	3.157013862	IWSO
10	50	2.982709838	IWSO
10	75	3.119322469	IWSO
10	100	2.913474181	IWSO
20	25	2.859083923	IWSO
20	50	2.936836433	IWSO
20	75	2.839158078	IWSO
20	100	2.847859755	IWSO
30	25	2.700003459	IWSO
30	50	2.751601733	IWSO
30	75	2.783660658	IWSO
30	100	2.757635259	IWSO
40	25	2.699336796	IWSO
40	50	2.687763055	IWSO
40	75	2.678757785	IWSO
40	100	2.567977452	IWSO
50	25	2.408546614	IWSO
50	50	2.516631345	IWSO
50	75	2.498064074	IWSO
50	100	2.570822697	IWSO
			};
			\addlegendentry{\scriptsize \textbf{IWSO}}
			
		\end{axis}
	\end{tikzpicture}
	\caption{$f_1$}
	\label{fig:com1}
\end{subfigure}
	\begin{subfigure}{0.48\textwidth}
	\centering
		\begin{tikzpicture}
		\begin{axis}[
			ymajorgrids,
			xmajorgrids,
			zmajorgrids,
			enlargelimits,
			z buffer=sort,
			width=0.99\textwidth,   height=5.80cm,
			ylabel={\scriptsize Iterations},xmin=8,
			xlabel={\scriptsize Population size}, ymin=20,zmax=0.05, 
            ytick={0,25,50,75,100},
			zlabel={\scriptsize Fitness value},zmin=-0.0,
			ylabel style = {rotate=40},
			xlabel style = {rotate=-10},
			ticklabel style = {font=\scriptsize},
			enlargelimits = false,
			scatter/use mapped color={
				draw=mapped color,
				fill=mapped color!70,
			},legend style={at={(0.45,-0.25)}, 
				anchor=north,legend columns=5},
			ycomb,
			]
			
			\addplot3 [
			mark=cube*, mark size=5, fill=blue,
			] table [x=X, y=Y, z=Z, meta=Meta] {
				X       Y       Z       Meta
				10	25	0.0771	GA
10	50	0.0452	GA
10	75	0.0213	GA
10	100	0.0229	GA
20	25	0.0207	GA
20	50	0.0073	GA
20	75	0.0073	GA
20	100	0.0043	GA
30	25	0.0102	GA
30	50	0.0042	GA
30	75	0.0023	GA
30	100	0.0024	GA
40	25	0.004	GA
40	50	0.0027	GA
40	75	0.0023	GA
40	100	0.0015	GA
50	25	0.0042	GA
50	50	0.0022	GA
50	75	0.0008	GA
50	100	0.0009	GA
			};
			
			\addplot3 [
			mark=triangle*, mark size=5, fill =red,
			] table [x=X, y=Y, z=Z, meta=Meta]  {
				X       Y       Z       Meta
				10	25	0	DE
10	50	0	DE
10	75	0	DE
10	100	0	DE
20	25	0	DE
20	50	0	DE
20	75	0	DE
20	100	0	DE
30	25	0	DE
30	50	0	DE
30	75	0	DE
30	100	0	DE
40	25	0	DE
40	50	0	DE
40	75	0	DE
40	100	0	DE
50	25	0	DE
50	50	0	DE
50	75	0	DE
50	100	0	DE
			};

			\addplot3 [
			mark=otimes*, mark size=4, fill =brown,
			] table [x=X, y=Y, z=Z, meta=Meta] {
				X       Y       Z       Meta
			10	25	0.016465528	IWSO
10	50	0.011294028	IWSO
10	75	0.009071462	IWSO
10	100	0.012092163	IWSO
20	25	0.004678459	IWSO
20	50	0.003495784	IWSO
20	75	0.004562084	IWSO
20	100	0.005936569	IWSO
30	25	0.003526931	IWSO
30	50	0.00370814	IWSO
30	75	0.00429274	IWSO
30	100	0.003269128	IWSO
40	25	0.002102635	IWSO
40	50	0.002843612	IWSO
40	75	0.002147883	IWSO
40	100	0.002316639	IWSO
50	25	0.002278373	IWSO
50	50	0.0024336	IWSO
50	75	0.001903027	IWSO
50	100	0.001397137	IWSO
				
			};
			
		\end{axis}
	\end{tikzpicture}
	\caption{$f_3$}
	\label{fig:com2}
\end{subfigure}
\\
	\begin{subfigure}{0.48\textwidth}
	\centering
		\begin{tikzpicture}
		\begin{axis}[
			ymajorgrids,
			xmajorgrids,
			zmajorgrids,
			enlargelimits,
			z buffer=sort,
			width=0.99\textwidth,   height=5.80cm,
			ylabel={\scriptsize Iterations},xmin=8,
			xlabel={\scriptsize Population size}, ymin=20,zmax=-150, 
            ytick={0,25,50,75,100},
			zlabel={\scriptsize Fitness value},zmin=-720,
			ylabel style = {rotate=40},
			xlabel style = {rotate=-10},
			ticklabel style = {font=\scriptsize},
			enlargelimits = false,
			scatter/use mapped color={
				draw=mapped color,
				fill=mapped color!70,
			},legend style={at={(0.45,-0.25)}, 
				anchor=north,legend columns=5},
			ycomb,
			]
			
			\addplot3 [
			mark=cube*, mark size=5, fill=blue,
			] table [x=X, y=Y, z=Z, meta=Meta] {
				X       Y       Z       Meta
				10	25	-194.985	GA
10	50	-208.1046	GA
10	75	-219.1785	GA
10	100	-218.8381	GA
20	25	-220.9085	GA
20	50	-226.3327	GA
20	75	-238.3772	GA
20	100	-238.2481	GA
30	25	-218.3166	GA
30	50	-231.081	GA
30	75	-242.8515	GA
30	100	-246.0245	GA
40	25	-232.956	GA
40	50	-239.5993	GA
40	75	-251.8622	GA
40	100	-254.9204	GA
50	25	-236.022	GA
50	50	-241.3332	GA
50	75	-250.0813	GA
50	100	-253.1728	GA
			};
			
			\addplot3 [
			mark=triangle*, mark size=5, fill =red,
			] table [x=X, y=Y, z=Z, meta=Meta]  {
				X       Y       Z       Meta
				10	25	-203.0301	DE
10	50	-213.4775	DE
10	75	-210.6464	DE
10	100	-210.4467	DE
20	25	-196.3378	DE
20	50	-191.6831	DE
20	75	-198.855	DE
20	100	-205.0312	DE
30	25	-200.04	DE
30	50	-205.1606	DE
30	75	-221.4265	DE
30	100	-216.8854	DE
40	25	-196.8079	DE
40	50	-211.8337	DE
40	75	-206.6055	DE
40	100	-234.1054	DE
50	25	-196.6367	DE
50	50	-211.9165	DE
50	75	-213.5696	DE
50	100	-226.4196	DE
			};

			\addplot3 [
			mark=otimes*, mark size=4, fill =brown,
			] table [x=X, y=Y, z=Z, meta=Meta] {
				X       Y       Z       Meta
				10	25	-563.9517716	IWSO
10	50	-593.6294	IWSO
10	75	-494.9265295	IWSO
10	100	-625.1263254	IWSO
20	25	-641.219199	IWSO
20	50	-637.1487084	IWSO
20	75	-606.882036	IWSO
20	100	-592.4829455	IWSO
30	25	-625.6392903	IWSO
30	50	-697.9253274	IWSO
30	75	-697.4317263	IWSO
30	100	-713.5063328	IWSO
40	25	-703.334374	IWSO
40	50	-607.1289551	IWSO
40	75	-656.9691792	IWSO
40	100	-674.4365961	IWSO
50	25	-685.4612	IWSO
50	50	-635.5409149	IWSO
50	75	-716.2777159	IWSO
50	100	-717.3745822	IWSO

			};
			
		\end{axis}
	\end{tikzpicture}
	\caption{$f_4$}
	\label{fig:com3}
\end{subfigure}
	\begin{subfigure}{0.48\textwidth}
	\centering
		\begin{tikzpicture}
		\begin{axis}[
			ymajorgrids,
			xmajorgrids,
			zmajorgrids,
			enlargelimits,
			z buffer=sort,
			width=0.99\textwidth,   height=5.80cm,
			ylabel={\scriptsize Iterations},xmin=8,
			xlabel={\scriptsize Population size}, ymin=20,zmax=290, 
            ytick={0,25,50,75,100},
			zlabel={\scriptsize Fitness value},zmin=0.01,
			ylabel style = {rotate=40},
			xlabel style = {rotate=-10},
			ticklabel style = {font=\scriptsize},
			enlargelimits = false,
			scatter/use mapped color={
				draw=mapped color,
				fill=mapped color!70,
			},legend style={at={(0.45,-0.25)}, 
				anchor=north,legend columns=5},
			ycomb,
			]
			
			\addplot3 [
			mark=cube*, mark size=5, fill=blue,
			] table [x=X, y=Y, z=Z, meta=Meta] {
				X       Y       Z       Meta
				10	25	282.7939	GA
10	50	263.8458	GA
10	75	237.8971	GA
10	100	246.1506	GA
20	25	215.3472	GA
20	50	210.5649	GA
20	75	194.7993	GA
20	100	195.4328	GA
30	25	188.327	GA
30	50	180.349	GA
30	75	170.7243	GA
30	100	167.0954	GA
40	25	170.3804	GA
40	50	162.7693	GA
40	75	154.1228	GA
40	100	149.3655	GA
50	25	154.5293	GA
50	50	151.0582	GA
50	75	149.3381	GA
50	100	143.3126	GA
			};
			
			\addplot3 [
			mark=triangle*, mark size=5, fill =red,
			] table [x=X, y=Y, z=Z, meta=Meta]  {
				X       Y       Z       Meta
				10	25	128.8215	DE
10	50	109.5522	DE
10	75	100.6463	DE
10	100	93.5461	DE
20	25	189.2068	DE
20	50	179.2102	DE
20	75	153.9279	DE
20	100	163.9465	DE
30	25	217.0743	DE
30	50	238.0764	DE
30	75	207.39	DE
30	100	202.717	DE
40	25	258.0257	DE
40	50	246.9948	DE
40	75	233.2449	DE
40	100	221.622	DE
50	25	322.6214	DE
50	50	286.9472	DE
50	75	263.7205	DE
50	100	268.596	DE
			};

			\addplot3 [
			mark=otimes*, mark size=4, fill =brown,
			] table [x=X, y=Y, z=Z, meta=Meta] {
				X       Y       Z       Meta
				10	25	0.738120903	IWSO
10	50	0.794296992	IWSO
10	75	0.774486851	IWSO
10	100	0.712595105	IWSO
20	25	0.52135972	IWSO
20	50	0.563625357	IWSO
20	75	0.545780602	IWSO
20	100	0.565566816	IWSO
30	25	0.442191439	IWSO
30	50	0.491460907	IWSO
30	75	0.478679404	IWSO
30	100	0.447608147	IWSO
40	25	0.430802321	IWSO
40	50	0.377387246	IWSO
40	75	0.396941467	IWSO
40	100	0.408224714	IWSO
50	25	0.391591376	IWSO
50	50	0.359023131	IWSO
50	75	0.369207682	IWSO
50	100	0.331154956	IWSO
				
			};
			
		\end{axis}
	\end{tikzpicture}
	\caption{$f_7$}
	\label{fig:com4}
\end{subfigure}
\\
	\begin{subfigure}{0.48\textwidth}
	\centering
		\begin{tikzpicture}
		\begin{axis}[
			ymajorgrids,
			xmajorgrids,
			zmajorgrids,
			enlargelimits,
			z buffer=sort,
			width=0.99\textwidth,   height=5.80cm,
			ylabel={\scriptsize Iterations},xmin=8,
			xlabel={\scriptsize Population size}, ymin=20,zmax=125, 
            ytick={0,25,50,75,100},
			zlabel={\scriptsize Fitness value},zmin=0.01,
			ylabel style = {rotate=40},
			xlabel style = {rotate=-10},
			ticklabel style = {font=\scriptsize},
			enlargelimits = false,
			scatter/use mapped color={
				draw=mapped color,
				fill=mapped color!70,
			},legend style={at={(0.45,-0.25)}, 
				anchor=north,legend columns=5},
			ycomb,
			]
			
			\addplot3 [
			mark=cube*, mark size=5, fill=blue,
			] table [x=X, y=Y, z=Z, meta=Meta] {
				X       Y       Z       Meta
			10	25	79.4288	GA
10	50	75.468	GA
10	75	73.5563	GA
10	100	67.4238	GA
20	25	64.5801	GA
20	50	58.2181	GA
20	75	60.646	GA
20	100	55.1736	GA
30	25	57.1171	GA
30	50	53.3635	GA
30	75	50.6936	GA
30	100	51.0576	GA
40	25	50.4979	GA
40	50	47.9693	GA
40	75	49.4402	GA
40	100	45.8872	GA
50	25	48.2944	GA
50	50	45.9515	GA
50	75	42.6397	GA
50	100	43.5089	GA
			};
			
			\addplot3 [
			mark=triangle*, mark size=5, fill =red,
			] table [x=X, y=Y, z=Z, meta=Meta]  {
				X       Y       Z       Meta
				10	25	49.1931	DE
10	50	49.4557	DE
10	75	44.7706	DE
10	100	42.1686	DE
20	25	89.9332	DE
20	50	74.1287	DE
20	75	70.0476	DE
20	100	66.0406	DE
30	25	110.4467	DE
30	50	106.6577	DE
30	75	88.442	DE
30	100	79.2251	DE
40	25	122.3465	DE
40	50	97.3933	DE
40	75	93.6542	DE
40	100	85.5869	DE
50	25	110.1627	DE
50	50	110.5231	DE
50	75	110.6242	DE
50	100	103.3288	DE
			};

			\addplot3 [
			mark=otimes*, mark size=4, fill =brown,
			] table [x=X, y=Y, z=Z, meta=Meta] {
				X       Y       Z       Meta
				10	25	1.070966459	IWSO
10	50	1.012514215	IWSO
10	75	1.024909433	IWSO
10	100	1.044465963	IWSO
20	25	0.869255592	IWSO
20	50	0.848335318	IWSO
20	75	0.859236597	IWSO
20	100	0.804160012	IWSO
30	25	0.834022294	IWSO
30	50	0.776527846	IWSO
30	75	0.743149815	IWSO
30	100	0.789580519	IWSO
40	25	0.774116699	IWSO
40	50	0.727097526	IWSO
40	75	0.691737789	IWSO
40	100	0.73655779	IWSO
50	25	0.715694897	IWSO
50	50	0.667186753	IWSO
50	75	0.704266564	IWSO
50	100	0.710325639	IWSO
				
			};
			
		\end{axis}
	\end{tikzpicture}
	\caption{$f_8$}
	\label{fig:com5}
\end{subfigure}
	\begin{subfigure}{0.48\textwidth}
	\centering
		\begin{tikzpicture}
		\begin{axis}[
			ymajorgrids,
			xmajorgrids,
			zmajorgrids,
			enlargelimits,
			z buffer=sort,
			width=0.99\textwidth,   height=5.80cm,
			ylabel={\scriptsize Iterations},xmin=8,
			xlabel={\scriptsize Population size}, ymin=20,zmax=20, 
            ytick={0,25,50,75,100},
			zlabel={\scriptsize Fitness value},zmin=0.01,
			ylabel style = {rotate=40},
			xlabel style = {rotate=-10},
			ticklabel style = {font=\scriptsize},
			enlargelimits = false,
			scatter/use mapped color={
				draw=mapped color,
				fill=mapped color!70,
			},legend style={at={(0.45,-0.25)}, 
				anchor=north,legend columns=5},
			ycomb,
			]
			
			\addplot3 [
			mark=cube*, mark size=5, fill=blue,
			] table [x=X, y=Y, z=Z, meta=Meta] {
				X       Y       Z       Meta
				10	25	18.7903	GA
10	50	17.9958	GA
10	75	17.9134	GA
10	100	17.5254	GA
20	25	16.3511	GA
20	50	15.9065	GA
20	75	15.4159	GA
20	100	15.3215	GA
30	25	15.226	GA
30	50	14.7231	GA
30	75	14.4396	GA
30	100	14.4321	GA
40	25	14.649	GA
40	50	14.0984	GA
40	75	13.7254	GA
40	100	13.5798	GA
50	25	14.1201	GA
50	50	13.6666	GA
50	75	13.4657	GA
50	100	13.1282	GA
			};
			
			\addplot3 [
			mark=triangle*, mark size=5, fill =red,
			] table [x=X, y=Y, z=Z, meta=Meta]  {
				X       Y       Z       Meta
				10	25	11.8963	DE
10	50	11.1836	DE
10	75	11.1363	DE
10	100	11.1826	DE
20	25	14.6034	DE
20	50	15.1168	DE
20	75	14.8331	DE
20	100	66.0406	DE
30	25	15.9213	DE
30	50	16.6882	DE
30	75	16.254	DE
30	100	16.7071	DE
40	25	18.4794	DE
40	50	17.7594	DE
40	75	17.7121	DE
40	100	16.3142	DE
50	25	18.4819	DE
50	50	17.8973	DE
50	75	18.1029	DE
50	100	17.2993	DE
			};

			\addplot3 [
			mark=otimes*, mark size=4, fill =brown,
			] table [x=X, y=Y, z=Z, meta=Meta] {
				X       Y       Z       Meta
				10	25	0.513039901	IWSO
10	50	0.485213265	IWSO
10	75	0.444227012	IWSO
10	100	0.489799543	IWSO
20	25	0.412021874	IWSO
20	50	0.388195291	IWSO
20	75	0.390391066	IWSO
20	100	0.411301592	IWSO
30	25	0.371873456	IWSO
30	50	0.345637682	IWSO
30	75	0.320293956	IWSO
30	100	0.372817668	IWSO
40	25	0.331141033	IWSO
40	50	0.347121992	IWSO
40	75	0.321808701	IWSO
40	100	0.329689283	IWSO
50	25	0.307874679	IWSO
50	50	0.30289558	IWSO
50	75	0.307030405	IWSO
50	100	0.32508726	IWSO
				
			};
			
		\end{axis}
	\end{tikzpicture}
	\caption{$f_{17}$}
	\label{fig:com6}
    \end{subfigure}
    \\
    \begin{subfigure}{0.48\textwidth}
	\centering
		\begin{tikzpicture}
		\begin{axis}[
			ymajorgrids,
			xmajorgrids,
			zmajorgrids,
			enlargelimits,
			z buffer=sort,
			width=0.99\textwidth,   height=5.80cm,
			ylabel={\scriptsize Iterations},xmin=8,
			xlabel={\scriptsize Population size}, ymin=20,zmax=0.001, 
            ytick={0,25,50,75,100},
			zlabel={\scriptsize Fitness value},zmin=0.0,
			ylabel style = {rotate=40},
			xlabel style = {rotate=-10},
			ticklabel style = {font=\scriptsize},
			enlargelimits = false,
			scatter/use mapped color={
				draw=mapped color,
				fill=mapped color!70,
			},legend style={at={(0.45,-0.25)}, 
				anchor=north,legend columns=5},
			ycomb,
			]
			
			\addplot3 [
			mark=cube*, mark size=5, fill=blue,
			] table [x=X, y=Y, z=Z, meta=Meta] {
				X       Y       Z       Meta
				10	25	0.0041	GA
10	50	0.0027	GA
10	75	0.0019	GA
10	100	0.0013	GA
20	25	0.0012	GA
20	50	0.0008	GA
20	75	0.0003	GA
20	100	0.0004	GA
30	25	0.0007	GA
30	50	0.0004	GA
30	75	0.0002	GA
30	100	0.0001	GA
40	25	0.0004	GA
40	50	0.0002	GA
40	75	0.0001	GA
40	100	0.0001	GA
50	25	0.0003	GA
50	50	0.0001	GA
50	75	0.0001	GA
50	100	0	GA
			};
			
			\addplot3 [
			mark=triangle*, mark size=5, fill =red,
			] table [x=X, y=Y, z=Z, meta=Meta]  {
				X       Y       Z       Meta
				10	25	0	DE
10	50	0	DE
10	75	0	DE
10	100	0	DE
20	25	0	DE
20	50	0	DE
20	75	0	DE
20	100	0	DE
30	25	0	DE
30	50	0	DE
30	75	0	DE
30	100	0	DE
40	25	0	DE
40	50	0	DE
40	75	0	DE
40	100	0	DE
50	25	0	DE
50	50	0	DE
50	75	0	DE
50	100	0	DE
			};

			\addplot3 [
			mark=otimes*, mark size=4, fill =brown,
			] table [x=X, y=Y, z=Z, meta=Meta] {
				X       Y       Z       Meta
				10	25	0.002209834	IWSO
10	50	0.002443576	IWSO
10	75	0.002550318	IWSO
10	100	0.002794823	IWSO
20	25	0.000910849	IWSO
20	50	0.001517574	IWSO
20	75	0.000986119	IWSO
20	100	0.00089212	IWSO
30	25	0.000636935	IWSO
30	50	0.000397095	IWSO
30	75	0.000843384	IWSO
30	100	0.000652255	IWSO
40	25	0.000522155	IWSO
40	50	0.000446151	IWSO
40	75	0.000595806	IWSO
40	100	0.000471162	IWSO
50	25	0.000438884	IWSO
50	50	0.000474623	IWSO
50	75	0.00039135	IWSO
50	100	0.000394391	IWSO
				
			};
			
		\end{axis}
	\end{tikzpicture}
	\caption{$f_{21}$}
	\label{fig:com7}
\end{subfigure}
	\begin{subfigure}{0.48\textwidth}
	\centering
		\begin{tikzpicture}
		\begin{axis}[
			ymajorgrids,
			xmajorgrids,
			zmajorgrids,
			enlargelimits,
			z buffer=sort,
			width=0.99\textwidth,   height=5.80cm,
			ylabel={\scriptsize Iterations},xmin=8,
			xlabel={\scriptsize Population size}, ymin=20,zmax=420, 
            ytick={0,25,50,75,100},
			zlabel={\scriptsize Fitness value},zmin=5,
			ylabel style = {rotate=40},
			xlabel style = {rotate=-10},
			ticklabel style = {font=\scriptsize},
			enlargelimits = false,
			scatter/use mapped color={
				draw=mapped color,
				fill=mapped color!70,
			},legend style={at={(0.45,-0.25)}, 
				anchor=north,legend columns=5},
			ycomb,
			]
			
			\addplot3 [
			mark=cube*, mark size=5, fill=blue,
			] table [x=X, y=Y, z=Z, meta=Meta] {
				X       Y       Z       Meta
				10	25	328.3972	GA
10	50	312.0366	GA
10	75	293.1254	GA
10	100	301.6437	GA
20	25	305.2999	GA
20	50	277.8456	GA
20	75	270.8984	GA
20	100	264.6326	GA
30	25	290.0233	GA
30	50	264.0263	GA
30	75	253.595	GA
30	100	252.9422	GA
40	25	268.1885	GA
40	50	249.1855	GA
40	75	243.154	GA
40	100	238.7996	GA
50	25	264.2104	GA
50	50	258.3144	GA
50	75	227.6527	GA
50	100	239.0404	GA
			};
			
			\addplot3 [
			mark=triangle*, mark size=5, fill =red,
			] table [x=X, y=Y, z=Z, meta=Meta]  {
				X       Y       Z       Meta
				10	25	312.5142	DE
10	50	354.2445	DE
10	75	267.5453	DE
10	100	274.5277	DE
20	25	390.9273	DE
20	50	316.5515	DE
20	75	318.6889	DE
20	100	312.4787	DE
30	25	385.778	DE
30	50	336.8286	DE
30	75	370.9337	DE
30	100	335.2135	DE
40	25	405.5684	DE
40	50	359.3426	DE
40	75	360.6773	DE
40	100	321.3344	DE
50	25	413.263	DE
50	50	398.8999	DE
50	75	373.5772	DE
50	100	373.6174	DE
			};

			\addplot3 [
			mark=otimes*, mark size=4, fill =brown,
			] table [x=X, y=Y, z=Z, meta=Meta] {
				X       Y       Z       Meta
				10	25	44.20529575	IWSO
10	50	45.95314672	IWSO
10	75	33.40328902	IWSO
10	100	47.13614362	IWSO
20	25	17.2984031	IWSO
20	50	16.6781936	IWSO
20	75	19.30439174	IWSO
20	100	20.52854778	IWSO
30	25	12.33819653	IWSO
30	50	9.218480935	IWSO
30	75	10.63701763	IWSO
30	100	11.59896982	IWSO
40	25	9.119480038	IWSO
40	50	7.503438472	IWSO
40	75	9.851431669	IWSO
40	100	7.942859124	IWSO
50	25	7.182982738	IWSO
50	50	6.910463504	IWSO
50	75	7.49950594	IWSO
50	100	8.124688584	IWSO
				
			};
			
		\end{axis}
	\end{tikzpicture}
	\caption{$f_{23}$}
	\label{fig:com8}
\end{subfigure}
\caption{Comparative analysis of average fitness across benchmark functions against state-of-the-art methods} \label{fig:com}
\end{figure*} 